\def\eqref#1{equation~\ref{#1}}
\def\1{\bm{1}}
\DeclareMathAlphabet{\mathsfit}{\encodingdefault}{\sfdefault}{m}{sl}
\SetMathAlphabet{\mathsfit}{bold}{\encodingdefault}{\sfdefault}{bx}{n}
\definecolor{grey}{rgb}{0.9,0.9,0.9}
\theoremstyle{plain}
\newtheorem{theorem}{Theorem}
\theoremstyle{definition}
\theoremstyle{remark}
\newtheorem*{proposition*}{Proposition}
\newtheorem*{theorem*}{Theorem}
\newcommand{\tabhead}[1]{{\bfseries#1}}
\newcommand{\iset}[2]{\{#1,\ldots,#2\}}
\definecolor{rebutcolor}{rgb}{0,0,0}
\definecolor{codegreen}{rgb}{0,0.6,0}
\definecolor{codegray}{rgb}{0.5,0.5,0.5}
\definecolor{codepurple}{rgb}{0.58,0,0.82}
\definecolor{backcolour}{rgb}{0.95,0.95,0.92}
\lstdefinestyle{mystyle}{
    backgroundcolor=\color{backcolour},   
    commentstyle=\color{codegreen},
    keywordstyle=\color{magenta},
    numberstyle=\tiny\color{codegray},
    stringstyle=\color{codepurple},
    basicstyle=\ttfamily\scriptsize,
    breakatwhitespace=false,         
    breaklines=true,                 
    captionpos=b,                    
    keepspaces=true,                 
    numbers=left,                    
    numbersep=5pt,                  
    showspaces=false,                
    showstringspaces=false,
    showtabs=false,                  
    tabsize=2,
}
\newcommand{\vect}[1]{\mathbf{#1}}
\definecolor{rev_red}{HTML}{D62728}
\definecolor{rev_blue}{HTML}{1F77B4}
\definecolor{rev_green}{HTML}{2CA02C}
\definecolor{rev_purple}{HTML}{9467BD}
\definecolor{rev_orange}{HTML}{FF7F0E} 
\let\oldfootnotemark\footnotemark
\renewcommand\footnotemark{%
  \begingroup
  \hypersetup{linkcolor=Maroon}
  \oldfootnotemark
  \endgroup
}
\newcommand\DoToC{%
  \startcontents
  \printcontents{}{1}{{\begin{center}\parbox{0.99\textwidth}{\centering\textbf{\LARGE Weight-Space Linear Recurrent Neural Networks \textit{Supplementary Material}}}\end{center}\vskip3pt\hrule\vskip0pt}}
  \vskip8pt\hrule\vskip5pt
}
\renewcommand{\cite}{\citep}
\title{Weight-Space Linear Recurrent Neural Networks}
\author{Roussel Desmond Nzoyem \\
University of Bristol\\
Bristol, UK \\
\And
Nawid Keshtmand \\
University of Bristol\\
Bristol, UK \\
\And
Enrique Crespo Fernández \\
University of Bristol\\
Bristol, UK \\
\texttt{\hspace*{-8.25cm}\footnotesize \{rd.nzoyemngueguin, enrique.crespofernandez\}@bristol.ac.uk} \\
\And
Idriss Tsayem \\
University of Maryland \\
Baltimore, USA \\
\And
Raul Santos-Rodriguez \\
University of Bristol\\
Bristol, UK \\
\And
David A.W. Barton \\
University of Bristol\\
Bristol, UK \\
\And
Tom Deakin \\
University of Bristol\\
Bristol, UK \\
}
\newcolumntype{K}[1]{>{\centering\arraybackslash}p{#1}}
\renewcommand{\cite}{\citep}
\begin{document}

\maketitle

\begin{abstract}
We introduce WARP (\textbf{W}eight-space \textbf{A}daptive \textbf{R}ecurrent \textbf{P}rediction), a simple yet powerful model that unifies weight-space learning with linear recurrence to redefine sequence modeling. Unlike conventional recurrent neural networks (RNNs) which collapse temporal dynamics into fixed-dimensional hidden states, WARP explicitly parametrizes its hidden state as the weights and biases of a distinct auxiliary neural network, and uses input differences to drive its recurrence. This brain-inspired formulation enables efficient gradient-free adaptation of the auxiliary network at test-time, in-context learning abilities, and seamless integration of domain-specific physical priors. Empirical validation shows that WARP matches or surpasses state-of-the-art baselines on diverse classification tasks, featuring in the top three in 4 out of 6 real-world challenging datasets. Furthermore, extensive experiments across sequential image completion, multivariate time series forecasting, and dynamical system reconstruction demonstrate its expressiveness and generalisation capabilities. Remarkably, a physics-informed variant of our model outperforms the next best model by more than 10x. Ablation studies confirm the architectural necessity of key components, solidifying weight-space linear RNNs as a transformative paradigm for adaptive machine intelligence.
\end{abstract}


\section{Introduction}
\label{warp:introduction}




Deep sequence models, which continuously drive progress in machine learning, are limited in their ability to operate outside their training distribution \cite{alijani2024vision,hataya2024automatic,graham2023latent}. For instance, subsets of Neural ODE parameters \cite{chen2018neural} necessitate adaptation via gradient descent to maintain performance on out-of-distribution (OoD) sequences \cite{kassai2024geps,nzoyem2025neural}. While effective, their explicit gradient calculation cost has recently catalysed research into \textbf{gradient-free test-time adaptation} methods \cite{serrano2024zebra,morel2025disco,hemmer2025true}. This surge of interest is embodied by \textbf{in-context learning} \cite{lee2023attention,serrano2024zebra}, which has recently been shown to perform test-time adaptation since during inference, it \emph{implicitly} minimises a loss objective using gradient information
\cite{von2023transformers,zhang2025training}. Another reason for the poor generalisation of discrete deep sequence models is the inability to inject \textbf{domain-specific priors} in their forward pass. In an effort to preserve all desirable traits while unleashing a breadth of possibilities, we combine two of the most powerful emerging deep learning paradigms: weight-space learning and linear recurrence.

\textbf{Weight-space learning} --- the paradigm that treats the weights and biases of a function approximator as data points for another learning system \cite{schurholt2024towards} --- offers unprecedented potential for extracting properties of a trained model solely from its ``weights''.\footnote{Following the convention from \cite{zhou2023neural}, we refer to the learnable parameters of the processed function approximator as `weights' (or `weight space' to indicate the space they belong to) and those of the higher-level learning system (e.g., the neural functional) as simply `parameters'.} Applications span from predicting generalisation error \cite{unterthiner2020predicting} and recovering training data \cite{dupont2022data} to classifying and editing implicit neural representations \cite{de2023deep}. With the proliferation of model repositories such as HuggingFace and CivitAI, developing methods that effectively learn directly from weights has become increasingly vital \cite{kahana2024deep}. To date, the literature has predominantly focused on utilizing these weights as inputs and outputs to higher-level models, leaving their potential as \emph{intermediate} representations (e.g., latent vectors, hidden states) in end-to-end training systems unexplored.

\begin{figure}[t]
\begin{tcolorbox}[colback=orange!5!white,colframe=black!75,title=\textbf{State Transition \& Decoding in Recurrent Neural Networks}]
\begin{minipage}[t]{0.3\textwidth}
\centering
\textbf{Standard RNNs} \\[5pt]
$\mathbf{h}_t = f_\Phi(\mathbf{h}_{t-1}, \mathbf{x}_t)$ \\
$\mathbf{y}_t = g_{\Psi} (\mathbf{h}_t)$
\end{minipage}%
\hfill
\begin{minipage}[t]{0.3\textwidth}
\centering
\textbf{Linear RNNs} \\[5pt]
$\mathbf{h}_t = A \mathbf{h}_{t-1} + B \mathbf{x}_{t}$ \\
$\mathbf{y}_t = C \mathbf{h}_{t-1}$
\end{minipage}%
\hfill
\begin{minipage}[t]{0.4\textwidth}
\centering
\textbf{Weight-Space Linear RNNs} \\[5pt]
$\theta_t = A\theta_{t-1} + B(\mathbf{x}_t - \mathbf{x}_{t-1})$ \\
$\mathbf{y}_t = \text{MLP}_{\theta_t}(\tau)$
\end{minipage}
\end{tcolorbox}
\caption[Background and conceptual comparison between RNN architectures.]{Background and conceptual comparison between RNN architectures. \textbf{Standard} RNNs (e.g. \cite{hochreiter1997long,cho2014learning}) feature a non-linear transition function $f_{\Phi}$ unlike their \textbf{linear} counterparts (e.g. \cite{gu2021efficiently,orvieto2023resurrecting}). Our proposed \textbf{weight-space linear} RNNs view their hidden state --- denoted as $\theta_t$ --- as the parameters of a family of functions. As observed in the bottom-right corner, $\theta_t$ represents, in the general case, the flattened weights of an MLP at time step $t$. Its input $\tau$ is a (concatenation of) coordinate system(s) to maximally make use of the canonical ordering of the sequence.}
\label{fig:rnn_comparison}
\end{figure}

Concurrently, \textbf{linear} Recurrent Neural Networks (RNNs) have seen a notable resurgence, largely due to their hardware efficiency and the resulting ease of training \cite{dao2024transformers}. Linearity enables advanced sequence parallelisation techniques \cite{smith2023simplified,movahedi2025fixed,yang2024parallelizing} and has delivered exceptional performance on long-sequence tasks \cite{gu2021efficiently,orvieto2023resurrecting}. However, recent findings raise concerns about the information capacity of their compressed state representations \cite{merrill2024the}. Moreover, a substantial body of work has shown that linear Transformers \cite{katharopoulos2020transformers} and State-Space Models (SSMs) \cite{gu2021efficiently} --- a particular instantiation of linear RNNs --- are fundamentally less expressive than the standard non-linear RNNs depicted in \cref{fig:rnn_comparison} \cite{beck2024xlstm,deletang2023neural,merrill2023parallelism}. Taken together, these results strongly suggest that non-linearities are crucial for the expressivity of deep sequence models. They invite the reintroduction of non-linearities into sequence models, while maintaining the hardware-friendly nature of linear RNNs.

The preceding analyses motivate our examination of weight-space linear RNNs. To harness the strengths of its constituting paradigms, we formulate several research questions: $\bullet$ \textit{Can the weights of an auxiliary function approximator serve as high-resolution hidden states for linear RNNs}? $\bullet$ \textit{Can that auxiliary function be effectively adapted during inference without requiring gradient computation}? $\bullet$ \textit{Are the non-linearities in the auxiliary function approximator sufficient to significantly enhance the expressive power of such models}?

We answer these questions in the affirmative by proposing \textbf{W}eight-space \textbf{A}daptive \textbf{R}ecurrent \textbf{P}rediction (WARP) models as powerful expressions of weight-space linear RNNs, which we illustrate in \cref{fig:rnn_comparison}. Specifically, our original contributions can be summarised as follows: 
\begin{enumerate}
    \item[\textcolor{black}{(1)}] We formulate a general framework for sequence modelling in weight-space, blending \emph{linear} recurrence with \emph{non-linear} decoding. Rather than relying on direct inputs, we draw inspiration from the human brain and compute \textbf{signal differences} to drive such recurrences. To the best of our knowledge, our framework is the first of its kind to treat weight-space features as intermediate hidden state representations in a recurrence.
    \item[\textcolor{black}{(2)}] To train weight-space linear RNNs, we introduce two parallelisable algorithms: a convolutional mode and an efficient recurrent mode (with and without support for auto-regression) well-suited for noisy sequences. These algorithms unlock three practical use cases: 
    ($i$) \textbf{gradient-free adaptation}, i.e., the ability to update critical components responsible for the model's adaptation without requiring gradients; ($ii$) \textbf{in-context learning}, i.e., the capacity to recognise input-output patterns in the sequence's context and adapt model behavior without finetuning parameters,\footnote{While high-level model parameters must be frozen, in-context learning may still require gradients to finetune some weights.}
    and ($iii$) \textbf{physics-informed modelling}, i.e., the ability to incorporate domain-specific continuous physical priors in the discrete linear recurrence. This final core application is evidenced in our WARP-Phys model, which achieves an order of magnitude lower error over WARP on a wide set of synthetic dynamical system reconstruction datasets.    
    \item[\textcolor{black}{(3)}] We identify an extensive suite of \textbf{real-world} benchmarks to evaluate various capabilities of RNNs regarding classification, reconstruction, adaptation, and memory retention. Empirical results demonstrate how WARP consistently matches or outperforms traditional RNNs, SSMs, and Transformer architectures. Remarkably, we push the state-of-the-art by featuring in the top three in 4 out of 6 multivariate time series classification datasets necessitating the understanding of both short- and extremely \textbf{long-range} dependencies. 
\end{enumerate}


\section{Weight-space Adaptive Recurrent Prediction (WARP)}
\label{warp:method}

This section presents the core ideas underpinning weight-space linear recurrence, our novel framework for deep sequence modelling that operates by directly modulating, in response to sequential input differences, the weights of a function approximator \cite{augustine2024survey}. Out of simplicity and consistency with the related literature in \cref{warp:related}, we focus in the remainder of this paper on the WARP model, which modulates a feed-forward neural network \cite{mcculloch1943logical}. We begin by establishing the problem setting, followed by WARP's architectural and training details.

\subsection{Problem Setting}
\label{subsec:probset}



Our framework addresses the general sequence modelling problem, wherein a computational model must establish a mapping from an input $\mathbf{x}_t \in \mathbb{R}^{D_x}$ to a corresponding output $\mathbf{y}_t \in \mathbb{R}^{D_y}$, with $t \in \iset{0}{T-1}$ denoting the time step index. The integer $T>0$ represents the training sequence length, which remains invariant across all sequences within a training batch.\footnote{We note that $T$ may be different for testing sequences. $D_{\bullet}$ is the dimensionality of the subscripted quantity.} We assume that all input sequences are sampled at the same \emph{uniform} intervals. Ignoring the batch dimension for simplicity, our models establish a mapping from $\mathbb{R}^{T\times D_x}$ to $\mathbb{R}^{T\times D_y}$ (see \cref{fig:forecasting_setting}).



\begin{figure}[t]
  \begin{center}
    \includegraphics[width=0.9\linewidth]{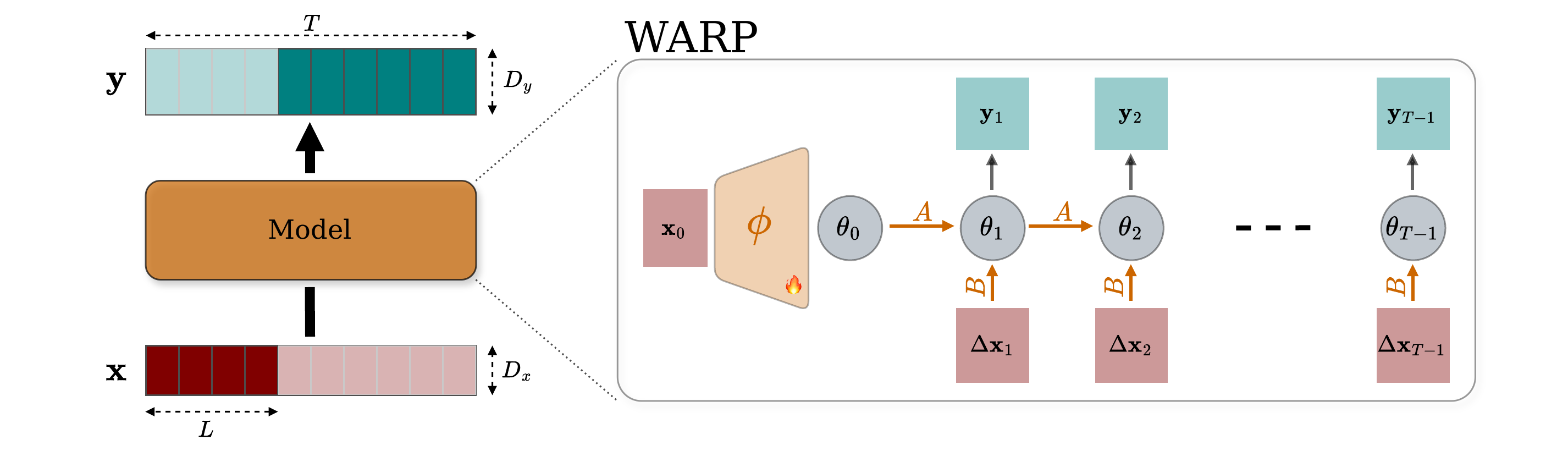}
  \end{center}
  \caption[General sequence modelling setting.]{\textbf{(Left)} General sequence modelling setting. In the forecasting scenario, for instance, a context of length $L$ informs the prediction of future states. \textbf{(Right)} WARP's unfolded recurrence. The initial hypernetwork $\phi$ and transition matrices $(A,B)$ --- highlighted in \textcolor{DarkOrange}{orange} --- are learnable parameters, fitted via conventional gradient descent.}
  \label{fig:forecasting_setting}
\end{figure}

In the regression setting of time series \textbf{forecasting}, we have $\mathbf{y}_t \triangleq \mathbf{x}_{t+1}$, as our objective is to predict future tokens conditioned on a preceding sequence of tokens, designated as the ``context'' $\mathbf{x}_{< L} \triangleq \{ \mathbf{x}_t \}_{t \in \iset{0}{L-1}} $, where $L$ denotes the context length. Critically, we desire the ability to perform auto-regressive rollouts during inference. For \textbf{classification} tasks, only the final token $\mathbf{y}_{T-1}$ is treated as a $\text{softmax}$-activated logit to assign a label to the sequence.

\subsection{Architecture}
\label{subsec:architecture}

While traditional recurrent networks update obscure hidden states $\mathbf{h}_t, \forall \, t \in \iset{1}{T-1}$, weight-space linear RNNs such as WARP update the weights and biases of an auxiliary ``\textbf{root}'' neural network $\theta_t$, effectively learning a dynamics model in weight-space (see \cref{fig:rnn_comparison,fig:forecasting_setting}). Specifically, we define the recurrence relation and the subsequent decoding:
\begin{equation}
\theta_t = A\theta_{t-1} + B\Delta \mathbf{x}_t,    
\qquad \text{and} \qquad \mathbf{y}_t = \text{MLP}_{\theta_t}(\tau),
\label{eq:wsm_recurrence}
\end{equation}

\noindent where the hidden state $\theta_t \in \mathbb{R}^{D_\theta}$ represents the flattened weights of the root neural network at time step $t$, and $\Delta \mathbf{x}_t = \mathbf{x}_t - \mathbf{x}_{t-1} $ is the input difference. $A \in \mathbb{R}^{D_\theta \times D_\theta}$ is the state transition ``\textbf{weights-to-weights}'' matrix, and $B \in \mathbb{R}^{D_\theta \times D_x}$ the input transition  ``\textbf{data-to-weights}'' matrix. 
To compute the output $\mathbf{y}_t$, the vector $\theta_t$ is unflattened and combined with \emph{non-linear} static activation functions to reconstitute the MLP root network. This decoding function is applied to $\tau$, a \textbf{coordinate system} (or a concatenation thereof) that suitably informs the model of the canonical ordering of the sequences at hand. Powerful examples of coordinate systems (see \cref{subsubsec:recurrent_mode}) include normalised pixel locations (for images viewed as sequences), normalised training time $\tau = t/(T-1)$, or the general positional encoding to facilitate generalisation beyond $T$ \cite{vaswani2017attention}.

Compared to other RNNs, $\theta_t$ plays both the roles of the hidden state and the parameters of the decoder, effectively decoding itself. Such \emph{self-decoding} significantly saves on learnable parameter count. 

Importantly, all hidden states can be precomputed efficiently thanks to the \emph{linear} recurrence in \cref{eq:wsm_recurrence}, using for instance, the \emph{parallel} ``scan'' operator  \cite{smith2023simplified}. Once materialised, the $\theta_t$ can be reconstituted and self-decoded independently. This allows our model to combine the efficiency of linear recurrence with the expressivity enabled by incorporating non-linearities.

Another key aspect of our formulation is the use of \textbf{input differences} $\Delta \mathbf{x}_t$ rather than direct inputs $\mathbf{x}_t$, which is a choice \citet{kidger2020neural} theoretically motivated for continuous-time RNNs. When inputs change slowly or remain constant, the weight updates become proportionally smaller, and vice-versa. WARP essentially learns to convert input differences into neural network updates, a critical self-supervision ability for continual learning and test-time adaptation \cite{behrouz2024titans}. 

\label{eq:sigma_lim}
\paragraph{Architecture of the root network.} The root network $\theta_t$ is implemented as a fixed-width multilayer perceptron (MLP) \cite{mcculloch1943logical} with a $D_{\tau}$-dimensional input, and output dimension either $D_y$ or $2 \times D_y$ depending on whether \emph{uncertainty} measures are required in the pipeline. When modelling uncertainty, the network predicts in addition to a mean $\hat{\boldsymbol{\mu}}_t \in \mathbb{R}^{D_y}$, a quantity $\tilde{\boldsymbol{\sigma}}_t \in \mathbb{R}^{D_y}$ on which a positivity-enforcing function is applied to obtain an element-wise standard deviation $\hat{\boldsymbol{\sigma}}_t = \max(\text{softplus}(\tilde{\boldsymbol{\sigma}}_t), \sigma_{\min}),$ where $\sigma_{\min}$ is a fixed positive problem-dependent lower bound for numerical stability.


\paragraph{Initialisation of learnable parameters.} Similar to prior work \cite{le2015simple}, the state transition matrix $A$ is initialised as the identity operator $I_{D_\theta \times D_\theta}$. This emulates gradient descent and residual connections in ResNets \cite{he2016deep}, thereby facilitating gradient flow during backpropagation through time \cite{werbos1990backpropagation}. We find that initializing the input transition matrix $B$ as the zero matrix $\mathbf{0}_{D_\theta \times D_x}$ is useful to ensure that the sequence of weights $\theta_t$ does not diverge early on in the training. This strategic initialisation also imposes a critical constraint wherein the initial hidden state $\theta_0$ must encode semantically rich information applicable to the entire sequence.


The initial weights $\theta_0$ are determined by processing the first observation: $\theta_0 = \phi(\mathbf{x}_0)$, where the ``initial network'' $\phi$ is a hypernetwork \cite{ha2016hypernetworks} defined as a learnable MLP with gradually increasing width (see \cref{fig:forecasting_setting}). On sequence modelling problems with fixed or mildly-varying initial conditions, we sidestep $\phi$ and directly learn $\theta_0$, which is initialised with classical techniques such as Glorot \cite{glorot2010understanding} or He  \cite{he2015delving} (and subsequently flattened into a 1D vector).

\subsection{Training \& Inference}
\label{subsec:training}


Analogous to SSMs \cite{gu2021efficiently} and subsequent linear recurrence architectures \cite{orvieto2023resurrecting,movahedi2025fixed}, WARP supports dual training modes: convolutional and recurrent. The former is accomplished through a systematic unrolling of the linear recurrence formulated in \cref{eq:wsm_recurrence}, enabling the derivation of a \textbf{convolution} kernel $K$ such that $\theta_{0:T} = K \star \Delta \mathbf{x}_{0:T} $. Comprehensive notations, algorithms, and rigorous mathematical derivations are elaborated in \cref{subsubsec:convolutional_mode}. In \textbf{recurrent} mode, we distinguish the auto-regressive (\emph{AR}) and the relatively memory-expensive\footnote{Although equal to AR in computational complexity, the recurrent non-AR setting requires more memory because, like the convolutional mode, it materialises all \emph{high-dimensional} hidden states $\theta_t$.}
\emph{non-AR} settings. The non-AR setting never sees its own predictions, making it ideal for classification tasks wherein $\theta_t(\cdot)$ only generates logits.

The recurrent AR setting is particularly advantageous for noisy forecasting tasks that necessitate accurate modelling of the sequential data distribution $p(\mathbf{y}_t | \mathbf{y}_{<t})$. To mitigate \emph{exposure bias} \citep{schmidt2019generalization}, we implement teacher forcing with scheduled sampling \cite{bengio2015scheduled}, wherein the model incorporates uncertainties by sampling $\hat{\mathbf{y}}_t \sim \mathcal{N} (\boldsymbol{\hat{\mu}}_t, \boldsymbol{\hat{\sigma}^2_t}) $ using the reparametrisation trick \cite{kingma2013auto}.\footnote{We remark that this sampling is not required during \emph{inference} on smooth sequences like dynamical systems.} Selection between ground truth $\mathbf{y}_t$ and predicted $\hat{\mathbf{y}}_t$ follows a Bernoulli distribution with probability $p_{\text{forcing}}$, which we define as a training hyperparameter. That said, we consistently use $\hat{\mathbf{y}}_{t-1}$ in the input difference seen in \cref{eq:wsm_recurrence}.

During inference on regression problems, the model operates fully auto-regressively, i.e., $p_{\text{forcing}}=1$ within the context window, and $p_{\text{forcing}}=0$ in the forecast window, regardless of the training mode. 

Although other loss functions can be considered, our WARP models are trained by minimizing either the mean-squared error (MSE) for deterministic predictions, or the simplified negative log-likelihood (NLL) for probabilistic predictions:
\begin{equation} \label{eq:lossfunctions}
\mathcal{L}_{\text{MSE}} \triangleq \frac{1}{T}\sum_{t=0}^{T-1} \|\mathbf{y}_t - \hat{\mathbf{y}}_t\|_2^2, \qquad \mathcal{L}_{\text{NLL}} \triangleq \frac{1}{T}\sum_{t=0}^{T-1} \left( \frac{\|\mathbf{y}_t - \hat{\mathbf{y}}_t\|_2^2}{2\hat{\boldsymbol{\sigma}}_t^2} + \log \hat{\boldsymbol{\sigma}}_t \right).
\end{equation}
As for classification problems, we use the categorical cross-entropy $\mathcal{L}_{\text{CCE}} \triangleq \sum_{c=1}^{C} \mathbf{y}^{(c)} \log (\hat{\mathbf{y}}_{T-1}^{(c)}) $, where $\mathbf{y}$ is the one-hot encoding of the true label class, and $C$ is the number of classes.

Our learning pseudocodes are detailed in \cref{fig:recurrent_training,fig:convolutional_training} of \cref{warp_app:methodology}, outlining the strong connection to the \emph{fast weights} and \emph{test-time training} literatures \cite{schmidhuber1992learning,ba2016using,zhang2025test}. At each training step, the slow-changing RNN parameters $A,B$ and $\phi$ (or $\theta_0$) are updated \emph{once} using gradient descent to minimise one of the loss objectives above. The fast-changing weights $\theta_t$, however, are updated $T-1$ times using \cref{eq:wsm_recurrence}, i.e., \emph{not} using gradient descent. 
This distinction is central to our model's gradient-free adaptation process.

\section{Experiments}
\label{warp:experiments}


We evaluate WARP on real-world multivariate time series datasets, 2D images, and physical systems. Our experiments elucidate empirical questions regarding forecasting, classification, and dynamical system reconstruction and generalisation. Additional experiments allow us to demonstrate WARP's in-context learning abilities. Theoretical results are presented in \cref{subsec:trainingalgs}, and experimental details can be found in \cref{warp_app:detailes}.


\subsection{Image Completion, Energy Prediction \&  Traffic Forecasting}
\label{subsec:forecasting}

\begin{wraptable}[14]{r}{0.482\textwidth}
\vspace*{-0.45cm}
\caption[Lowest test MSE and BPD achieved on MNIST and CelebA.]{Lowest test MSE ($\downarrow$) and BPD ($\downarrow$) achieved on MNIST (Top) and CelebA (Bottom). The best along the columns is reported in \textbf{bold}, while the second-best is \underline{underlined}.}
\vspace*{-0.1cm}
\label{tab:img_completion}
\tiny
\begin{tabularx}{0.99\linewidth}{lcccccc}
\toprule
\multirow{2}{*}{\bfseries MNIST} & \multicolumn{2}{c}{$L=\mathbf{100}$} & \multicolumn{2}{c}{$L=\mathbf{300}$} & \multicolumn{2}{c}{$L=\mathbf{600}$} \\
\cmidrule(lr){2-3} \cmidrule(lr){4-5} \cmidrule(lr){6-7}
& MSE  & BPD & MSE  & BPD & MSE & BPD   \\
\midrule
GRU     & 0.074 & \underline{0.623} & 0.054 & 0.573 & \underline{0.015} & 0.485 \\
LSTM     & 0.074 & 0.652 & 0.057 & 0.611 & 0.027 & 0.539 \\
ConvCNP      & 0.074 & 0.830 & 0.061 & 0.732 & 0.038 & 0.583 \\
S4      & \underline{0.072} & 0.640 & \underline{0.049} & \underline{0.520} & 0.019 & \textbf{0.406} \\
\tabhead{WARP}    & \textbf{0.071} & \textbf{0.615} & \textbf{0.042} & \textbf{0.516} & \textbf{0.014} & \underline{0.416} \\
\bottomrule
\end{tabularx}
\begin{tabularx}{0.99\linewidth}{lcccccc}
\toprule
\multirow{2}{*}{\bfseries CelebA} & \multicolumn{2}{c}{$L=\mathbf{100}$} & \multicolumn{2}{c}{$L=\mathbf{300}$} & \multicolumn{2}{c}{$L=\mathbf{600}$} \\
\cmidrule(lr){2-3} \cmidrule(lr){4-5} \cmidrule(lr){6-7}
& MSE  & BPD & MSE  & BPD & MSE & BPD   \\
\midrule
GRU     & \underline{0.063} & 24.14 & \underline{0.048} & 60.39 & \textbf{0.027} & 71.51 \\
LSTM     & 0.064 & 3869 & 0.053 & \underline{7.276} & \underline{0.032} & \underline{7.909} \\
ConvCNP      & 0.080 & \underline{1.498} & 0.100 & 39.91 & 0.132 & 248.1 \\
\tabhead{WARP}     & \textbf{0.051} & \textbf{0.052} & \textbf{0.040} & \textbf{-0.043} & \textbf{0.027} & \textbf{-0.162} \\
\bottomrule
\end{tabularx}
\end{wraptable}

In the first part of our experiments, we focus on forecasting applied first to raster-scanned pixel-by-pixel image completion, followed by real-world electricity and traffic flow.


\paragraph{Image Completion} Image completion is cast as a prediction of pixel intensities. We focus on two datasets: MNIST handwritten digits \cite{lecun1998gradient}, and the celebrity face attributes CelebA \cite{liu2015faceattributes}---additional image datasets are considered in \cref{warp_app:additionalresults}. 2D images are flattened into 1D sequences with length $T=784$ for MNIST and $T=1024$ for CelebA.

Following \cite{alexander2022theannotateds4}, the completion task is conditioned on contexts of variable length $L$. We compare WARP against long-established baselines like GRU \cite{cho2014learning} and LSTM \cite{hochreiter1997long}; against state-of-the-art (SOTA) SSMs like S4 \cite{gu2021efficiently}; and against the ConvCNP convolution-based meta-learning baseline \cite{Gordon2020Convolutional} specifically designed for image completion. All models are trained with the NLL loss in recurrent AR mode to ensure fair comparison, and feature nearly the same number of learnable parameters: approximately 1.68M for MNIST, and 2M for CelebA. Results in \cref{tab:img_completion} demonstrate the generative performance of WARP as measured by the MSE and the uncertainty-aware bits-per-dimension (BPD) metrics. We focus on the top performing models across three runs, with corresponding qualitative comparisons --- best captured by the BPD --- in \cref{warp_app:visualisations}. For instance, \cref{fig:mnist_comp_vis} shows that at small parameter count, WARP is the only model to accurately generate digits without substantial artefacts.

\begin{figure}[h]
  \centering
    \subfigure[MNIST]{\includegraphics[width=0.233\linewidth]{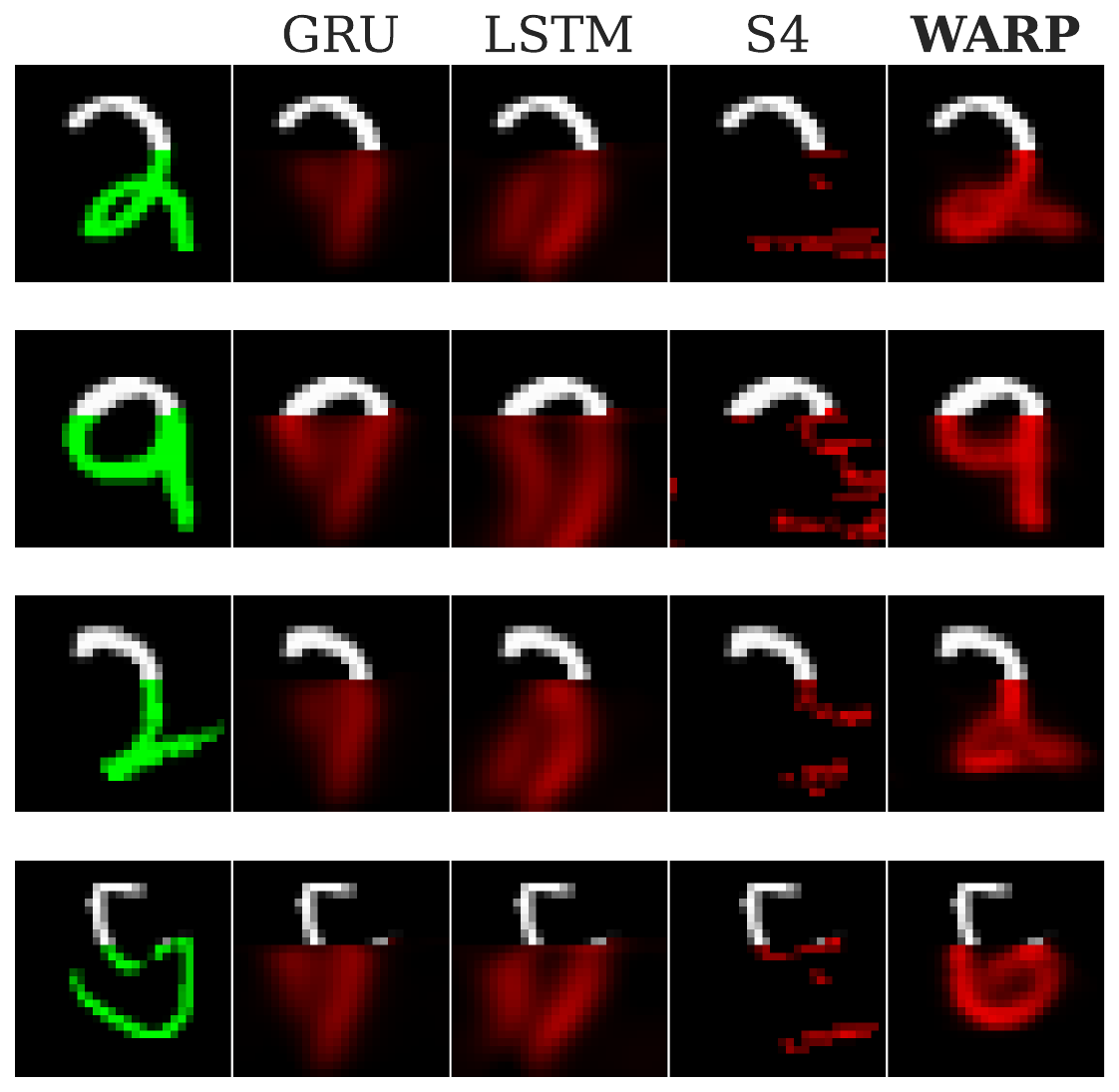}\label{fig:mnist_comp_vis}}
    \hspace*{1cm}
    \subfigure[ETT]{\includegraphics[width=0.32\linewidth]{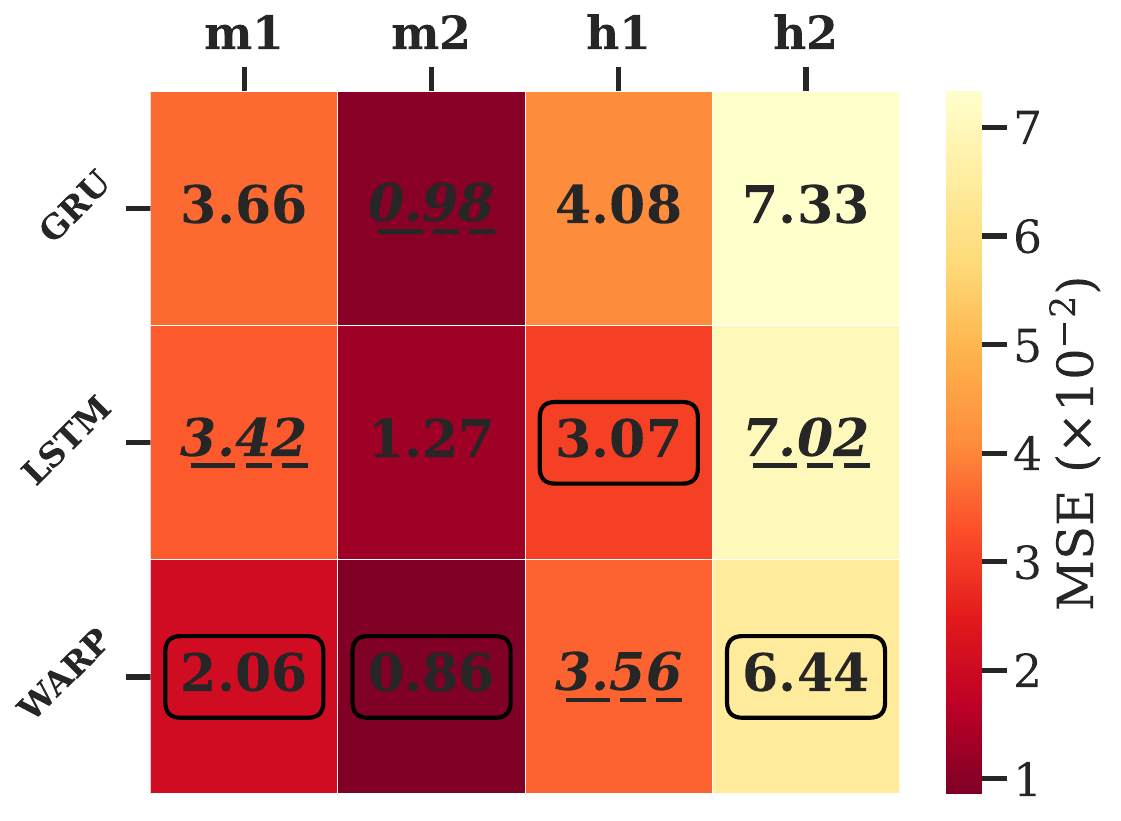} \label{fig:ettresults}}
  \caption{\textbf{(a)} Comparison of a GRU \cite{cho2014learning}, LSTM \cite{hochreiter1997long}, S4 \cite{alexander2022theannotateds4}, and WARP on the MNIST image completion task with $L=300$ initial pixels. All models are roughly at the same size of 1.7M parameters, with architectures described in \cref{subsec:baselines}. The leftmost column represents target images with context (in white) and ground truths (in \textcolor{green}{green}). Predicted forecasts are drawn in \textcolor{red}{red}. \textbf{(b)} Heatmap of test MSEs ($\downarrow$) on the ETT task, with best results \tcbox[box align=base,nobeforeafter,colback=white,colframe=black,size=small,boxsep=1pt]{enclosed} and second-best \dashuline{underlined}.}
  
\end{figure}





\paragraph{Energy Prediction} We evaluate WARP's performance on long-range energy forecasting tasks with the Electricity Transformer Temperature (ETT) dataset \cite{zhou2021informer}.
Following established methodological protocols \cite{nanbo2025facts}, we utilise the open-source TSLib\footnote{\url{https://github.com/thuml/Time-Series-Library.git}} to obtain preprocessed data splits which we further normalise using train set statistics (additional data processing details can be found in \cref{warp_app:datasetsbaselines}).


The models are tasked with predicting 96 time steps based on a context of length $L=96$, with performance evaluated using the mean MSE across three runs. The results are shown in \cref{fig:ettresults}, where the best along the columns is reported in a box while the second-best is underlined. It demonstrates WARP's superiority, achieving the best performance on all subsets except the ETTh1 subset, where it ranked second. These results are particularly noteworthy given WARP's straightforward design. Indeed, WARP offers an elegant balance between architectural simplicity and predictive power. Additional results on the ETT dataset are presented in \cref{warp_app:additionalresults}.





\begin{wraptable}[10]{l}{0.35\textwidth}
    \vspace{-10pt} 
    \scriptsize
    \centering
    \caption{Performance on PEMS08 \cite{song2020spatial}. SOTA baselines leverage spatial information, as reported in \cite{liu2024spatial}.}
    \label{tab:pems08_results}
    \begin{tabular}{lcc}
        \toprule
        \textbf{MODEL} & \textbf{MAE} & \textbf{RMSE} \\
        \midrule
        GMAN \cite{zheng2020gman} & 14.57 & 24.71 \\
        D$^2$STGNN \cite{zheng2023spatio} & 14.35 & 24.18 \\
        STIDGCN \cite{liu2024spatial} & 13.45 & 23.28 \\
        \midrule
        \textbf{WARP} & \textbf{6.59} & \textbf{10.10} \\
        \bottomrule
    \end{tabular}
\end{wraptable}
\paragraph{Traffic Flow Forecasting} We conduct extensive experiments on the PEMS08 real-world traffic network \cite{song2020spatial}. The network consists of 170 nodes, from which 3 features are collected at 5-minute intervals over two months. The standard task is to predict the traffic flow for the next hour (12 steps) given the historical data from the previous hour (12 steps). Given its \emph{chunk-wise} forecasting --- which significantly differs from the setting in \cref{fig:forecasting_setting} --- we employ the non-AR mode to train and test WARP. Additionally, we preprocess the input sequence with a \emph{non-causal} convolution, as detailed in \cref{warp_app:details}.

As demonstrated in Table \ref{tab:pems08_results}, our model achieves a MAE of 6.59 and a RMSE of 10.10. These results represent a significant improvement over the current state-of-the-art on the PEMS08 benchmark \cite{liu2024spatial}, reducing MAE by over 50\% compared to the best-published model. It is particularly noteworthy that our model achieves this performance without using the inherent graph structure, outperforming complex Attention and Graph Neural Network (GNN) architectures that are specifically designed to leverage this spatial information.

\subsection{Dynamical System Reconstruction} 
\label{subsec:physicalsystems}


As our final forecasting benchmark, we evaluate WARP's capabilities on dynamical system reconstruction (DSR) \cite{goring2024outofdomain}. The experiments presented in this section highlight the challenge of OoD generalisation to physical parameters, a research area that has recently experienced a significant surge in interest \cite{nzoyem2025towards,brenner2025learning}.


We establish four DSR benchmark datasets: $\bullet$ (1) Mass Spring Damper (MSD) characterises challenging damped oscillatory dynamics through physical parameters $(m,k,c)$, with trajectories of length $T=256$, of which $L=100$ states serve as context; $\bullet$ (2) MSD-Zero is a version of MSD which varies, in addition to the significant relative scales and wide ranges of $(m,k,c)$, the initial condition $\mathbf{x}_0$; $\bullet$ (3) Lotka-Volterra (LV) is parametrised by coefficients $(\alpha, \beta, \gamma, \delta)$; $\bullet$ (4) SINE comprises sine curves $\tau \mapsto \sin(2\pi \tau + \varphi)$ with varying phases $\varphi$ (we set $T=16$ and $L=1$, resulting in an initial value problem). Each test set incorporates out-of-distribution parameters, except for SINE, which primarily tests model performance under sample size constraints. Comprehensive data generation protocols for all four datasets are detailed in \cref{warp_app:datasetsbaselines}. We benchmark against two established RNNs and the Time Series Transformer (TST) from HuggingFace \cite{jain2022hugging} specialised for forecasting. We evaluate WARP in a \emph{black-box} setting --- which embeds no explicit physical knowledge in the root network --- followed by the more interpretable \emph{grey-box}. 

\begin{table}[t]
\centering
\caption[Average test MSE and MAE for dynamical system reconstruction.]{Average test MSE ($\downarrow$) and MAE ($\downarrow$) for dynamical system reconstruction. Best results are reported in \textbf{bold}, while the second-best are \underline{underlined}. All are reported with $\times 10^{-2}$ scaling, except for SINE* with $\times 10^{-4}$. SINE* indicates that metrics are computed upon training on its ``Small'' data split. WARP-Phys indicates the variant of WARP with physical constraints in the root network. }
\vspace*{0.2cm}
\label{tab:dsr_results}
\scriptsize
\begin{tabular}{lcccccccc}
\toprule
 & \multicolumn{2}{c}{MSD} & \multicolumn{2}{c}{MSD-Zero} & \multicolumn{2}{c}{LV} & \multicolumn{2}{c}{SINE*}\\
\cmidrule(lr){2-3} \cmidrule(lr){4-5} \cmidrule(lr){6-7}    \cmidrule(lr){8-9}
& MSE  & MAE & MSE  & MAE & MSE & MAE  & MSE & MAE \\
\midrule
GRU     &  1.43 \scalebox{0.75}{$\pm$ 0.09} & 5.05\scalebox{0.07}{$\pm$ 0.17} &  0.55\scalebox{0.75}{$\pm$ 0.75}  &  3.27\scalebox{0.75}{$\pm$ 0.13} & \underline{5.83\scalebox{0.75}{$\pm$ 0.37}} & \underline{13.1\scalebox{0.75}{$\pm$ 0.42}} & 4.90\scalebox{0.75}{$\pm$ 0.45} & 179\scalebox{0.75}{$\pm$ 9.23} \\
LSTM     & 1.46 \scalebox{0.75}{$\pm$ 0.14} & 5.43 \scalebox{0.75}{$\pm$ 0.28} & 0.57\scalebox{0.75}{$\pm$ 0.05}  & 3.46\scalebox{0.75}{$\pm$0.08} & 6.18 \scalebox{0.75}{$\pm$ 0.19} & 13.6 \scalebox{0.75}{$\pm$ 0.61} & 9.48\scalebox{0.75}{$\pm$ 0.12} & 248\scalebox{0.75}{$\pm$ 3.45} \\
Transformer     & \underline{0.34\scalebox{0.75}{$\pm$ 0.12}}  & \underline{2.25 \scalebox{0.75}{$\pm$ 0.42}} & 0.48\scalebox{0.75}{$\pm$ 0.24}  &2.90\scalebox{0.75}{$\pm$ 0.32}  &11.27\scalebox{0.75}{$\pm$ 0.62}   & 18.6\scalebox{0.75}{$\pm$ 0.49}& 1728\scalebox{0.75}{$\pm$ 10.8} & 2204\scalebox{0.75}{$\pm$ 27.0}\\
\midrule
\tabhead{WARP}     & 0.94 \scalebox{0.75}{$\pm$ 0.09} & 3.04 \scalebox{0.75}{$\pm$ 0.11} & \underline{0.32\scalebox{0.75}{$\pm$ 0.02}} & \underline{2.59\scalebox{0.75}{$\pm$ 0.07}}  & \textbf{4.72 \scalebox{0.75}{$\pm$ 0.25}} & \textbf{10.9\scalebox{0.75}{$\pm$ 0.45}} & \underline{2.77\scalebox{0.75}{$\pm$ 0.09}} & \underline{125\scalebox{0.75}{$\pm$ 8.46}} \\
\tabhead{WARP-Phys}     & \textbf{0.03\scalebox{0.75}{$\pm$0.04}} & \textbf{0.66\scalebox{0.75}{$\pm$ 0.02}} & \textbf{0.04\scalebox{0.75}{$\pm$ 0.01}} & \textbf{0.75\scalebox{0.75}{$\pm$ 0.03}} & X & X & \textbf{0.62\scalebox{0.75}{$\pm$ 0.01}} & \textbf{6.47\scalebox{0.75}{$\pm$ 0.51}} \\
\bottomrule
\end{tabular}
\end{table}


\paragraph{Black-Box Setting} Our results, presented in \cref{tab:dsr_results}, highlight how weight-space linear RNNs consistently outperform all baseline models across problem domains. Importantly, the standard WARP configuration, which uses a root black-box MLP, ranks within the top two in three out of four problem settings. We observe that TST --- denoted simply as Transformer --- exhibits significant performance degradation on the SINE* dataset (which comprises only 10 sequences), corroborating the documented limitation of Transformer models to overfit in data-scarce regimes due to their inherently high parameter complexity \cite{dosovitskiy2020image}.


\paragraph{Injecting Physical Bias (Grey-Box)} A principal advantage of WARP is its capacity to incorporate domain-specific knowledge into the root network, exemplified on the SINE* experiment by embedding the explicit mathematical formulation $\tau \mapsto \sin(2\pi \tau + \hat \varphi)$ in its forward pass, where $\hat \varphi$ is predicted by a MLP. The resulting architecture, WARP-Phys, demonstrates substantial performance improvements relative to WARP (more than \textbf{one order of magnitude} on MSD). Notably, the incorporation of such a powerful physical prior on SINE* underscores the value of an expressive but data-efficient initial network $\phi$ whose task it is to capture a representation of $\varphi$. Indeed, all models, including WARP and WARP-Phys, perform poorly on the extreme ``Tiny'' data split (\emph{not} reported in \cref{tab:dsr_results}). We provide additional details as ablations in \cref{subsec:ablation,warp_app:additionalresults}.

\begin{wrapfigure}[7]{r}{0.17\textwidth}
\vspace*{-1.35cm}
  \begin{center}
    \includegraphics[width=0.85\linewidth,clip,trim={0, 1cm, 0, 0}]{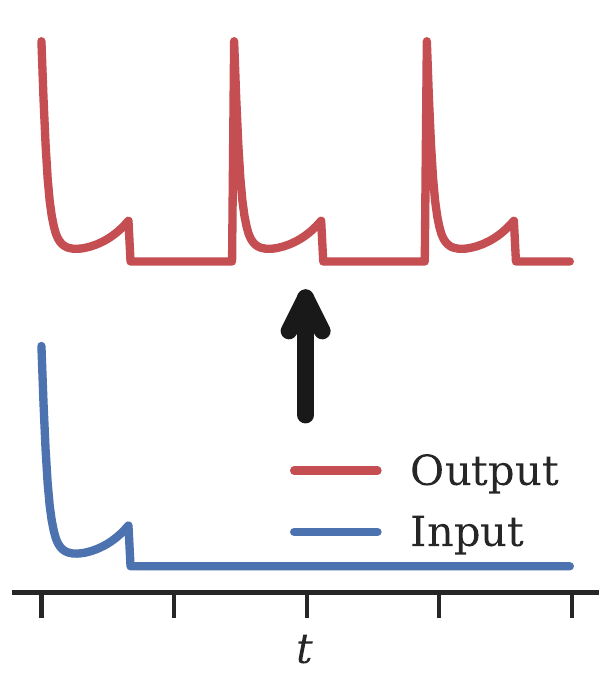}
  \end{center}
  \caption{Sample LV input/output.}
  \label{fig:lotka_inout}
\end{wrapfigure}

\paragraph{Repeat-Copy of Physical Systems} We evaluate our model's pattern memorisation capabilities on the Lotka-Volterra (LV) dataset, which constitutes a continuous analogue of the established repeat-copy benchmark \cite{tay2021long,orvieto2023resurrecting}. To generate the output shown in red in \cref{fig:lotka_inout}, we triplicate a concise segment of the input, separating the repetitions by a 10-token long sequence of $-1$s. In this challenging problem, WARP demonstrates superior performance relative to all baselines, with the GRU achieving the second-highest performance metrics (see \cref{tab:dsr_results}). These findings suggest that the high-resolution weight-space state representation exhibits enhanced pattern retention capabilities compared to conventional methodologies. We note that this particular evaluation protocol is incompatible with the WARP-Phys variant due to the deliberate introduction of artificial discontinuities in the temporal sequences. Comprehensive analyses of additional results pertaining to this task, alongside other dynamical system reconstruction benchmarks, are presented in \cref{warp_app:additionalresults}.






\subsection{Multivariate Time Series Classification}
\label{subsec:classification}

We now consider the classification setting. We consider six datasets from the University of East Anglia (UEA) multivariate time series classification archive (UEA-MTSCA) \cite{bagnall2018uea}. The six datasets are selected and preprocessed following the criteria of known difficulty for deep sequence models and data abundance, with sequence length ranging from 405 to almost 18k \cite{walker2024log}. Our model is compared to both discrete and continuous recurrent baselines \cite{morrill2021neural,kidger2020neural,orvieto2023resurrecting,smith2023simplified,walker2024log,gu2024mamba,rusch2024oscillatory,nanbo2025facts,de2024griffin}. All models are trained, validated, and tested with the 70:15:15 split. Additional details on the dataset preprocessing, the baselines, and the positional encoding used for $\tau$ are provided in \cref{warp_app:datasetsbaselines}.

\cref{tab:classification} presents test accuracy metrics across all six benchmark datasets for WARP (trained in recurrent non-AR mode) and competing methodologies as reported in \cite{walker2024log}. Our analysis reveals that WARP demonstrates exceptional performance across the majority of tasks, establishing new state-of-the-art accuracies on the SCP2 Ethanol and Heartbeat datasets, and competitive \textbf{top three on four datasets}. Despite not being designed with long-range dependencies in mind, WARP displays impressive potential on extremely long sequences such as EigenWorms and Motor, outperforming established models such as Mamba \cite{gu2024mamba} and NCDE \cite{kidger2020neural}, FACTS \cite{nanbo2025facts}, and Griffin \cite{de2024griffin}. This overcoming of the well-documented vanishing and exploding gradient problems in recurrent architectures  \cite{zucchet2024recurrent} is attributed to our careful initialisation scheme in \cref{subsec:architecture}, and the positional encoding scheme using sines and cosines with variable frequencies \cite{vaswani2017attention}. These empirical findings substantiate WARP's efficacy as a robust classification framework for diverse real-world time series applications.

\begin{table}[t]
\centering
\caption{Test-set accuracies ($\uparrow$) averaged over 5 training runs on the UEA classification datasets. Dataset names are abbreviated: EigenWorms (Worms), SelfRegulationSCP1 (SCP1), SelfRegulationSCP2 (SCP2), EthanolConcentration (Ethanol), Heartbeat, MotorImagery (Motor). Best results are reported in \textbf{bold}, and the second-best are \underline{underlined}.}
\small
\vspace*{0.1cm}
\label{tab:classification}
\begin{tabular}{l|cccccc}
\toprule
 & \tabhead{Worms} & \tabhead{SCP1} & \tabhead{SCP2} & \tabhead{Ethanol} & \tabhead{Heartbeat} & \tabhead{Motor} \\
Seq. length & 17,984 & 896 & 1,152 & 1,751 & 405 & 3,000 \\
\# Classes & 5 & 2 & 2 & 4 & 2 & 2 \\
\midrule
NRDE & 77.2 $\pm$ 7.1 & 76.7 $\pm$ 5.6 & 48.1 $\pm$ 11.4 & 31.4 $\pm$ 4.5 & 73.9 $\pm$ 2.6 & 54.0 $\pm$ 7.8 \\
NCDE & 62.2 $\pm$ 3.3 & 80.0 $\pm$ 2.0 & 53.6 $\pm$ 6.2 & 22.0 $\pm$ 1.0 & 68.1 $\pm$ 5.8 & 51.6 $\pm$ 6.7 \\
LRU & 85.0 $\pm$ 6.2 & 84.5 $\pm$ 4.6 & 47.4 $\pm$ 4.0 & 29.8 $\pm$ 2.8 & \underline{78.1 $\pm$ 7.6} & 51.9 $\pm$ 8.6  \\
S5 & 83.9 $\pm$ 4.1 & \underline{87.1 $\pm$ 2.1} & 55.1 $\pm$ 3.3 & 25.6 $\pm$ 3.5 & 73.9 $\pm$ 3.1 & 53.0 $\pm$ 3.9 \\
Mamba & 70.9 $\pm$ 15.8 & 80.7 $\pm$ 1.4 & 48.2 $\pm$ 3.9 & 27.9 $\pm$ 4.5 & 76.2 $\pm$ 3.8 & 47.7 $\pm$ 4.5 \\
S6 & 85.0 $\pm$ 1.2 & 82.8 $\pm$ 2.7 & 49.9 $\pm$ 9.4 & 26.4 $\pm$ 6.4 & 76.5 $\pm$ 8.3 & 51.3 $\pm$ 4.2  \\
Log-NCDE & 82.8 $\pm$ 2.7 & 82.1 $\pm$ 1.4 & 54.0 $\pm$ 2.6 & \underline{35.9 $\pm$ 6.1} & 74.2 $\pm$ 2.0 & \underline{57.2 $\pm$ 5.6} \\
LinOSS & \textbf{95.0 $\pm$ 4.4} & \textbf{87.8 $\pm$ 2.6} & 58.2 $\pm$ 6.9 & 29.9 $\pm$ 0.6 & 75.8 $\pm$ 3.7 & \textbf{60.0 $\pm$ 7.5} \\
FACTS & \underline{86.7 $\pm$ 3.0} & 73.3 $\pm$ 2.8 & \textbf{70.3 $\pm$ 8.8} & 28.2 $\pm$ 3.3 & 70.3 $\pm$ 8.8 & 49.8 $\pm$ 3.8 \\ 
Griffin & 79.5 $\pm$ 5.1 & 80.0 $\pm$ 1.5 & 43.1 $\pm$ 5.3 & 24.0 $\pm$ 3.5 & 77.7 $\pm$ 2.9 & 43.8 $\pm$ 3.3 \\ 
\midrule
\tabhead{WARP} & 70.93 $\pm$ 2.7 & 83.53 $\pm$ 2.0 & \underline{57.89 $\pm$ 1.4} & \textbf{36.49 $\pm$ 2.8} & \textbf{80.65 $\pm$ 1.9} & 56.14 $\pm$ 5.1 \\ 
\bottomrule
\end{tabular}
\end{table}

\subsection{In-Context Learning with Randomly Generated Keys}
\label{subsec:icl}

A key strength of WARP is illustrated in the classical in-context learning (ICL) setting of \cite{zhang2025training}, where the objective is to find a shared vector $\mathbf{w} \in \mathbb{R}^{D_x-1}$ that linearly maps $N$ randomly generated keys $\vect{x}_i \in \mathbb{R}^{D_x-1}$ to their corresponding values $y_i \in \mathbb{R}^{1}$. In this setup, WARP learns the weights of the root network that approximate this mapping. We adapt the task by transforming the input sequence into its \textit{cumulative sum} along the temporal dimension, followed by the prediction of the value corresponding to the query $\vect{x}_q$ (see \cref{fig:icl_transform_sub}). This preserves the underlying function while allowing the model to exploit key-value pairs dependencies. WARP is trained in its recurrent, non-autoregressive mode with a MSE loss over the entire 1D output sequence of length $T=N+1=32$. Critically, we do not employ a hypernetwork $\phi$ in this task, as $\theta_0$ is fitted directly (see \cref{subsec:architecture}). The results, shown in \cref{fig:allvalues,fig:queryonly}, highlight WARP’s ability to perform sub-quadratic in-context learning and generalize effectively.

\begin{figure}[h]
\centering
\setlength{\tabcolsep}{2pt} 
\begin{minipage}[c]{0.5\textwidth} 
\vspace*{-2cm}
\subfigure[Sequence transformations]{
    \label{fig:icl_transform_sub}
    \resizebox{\textwidth}{!}{
\begin{tikzpicture}[
    every node/.style={align=center},
    model/.style={rectangle, rounded corners, draw, minimum width=2cm, minimum height=1cm, line width=1pt}
]
    \node (input) at (0,0) {
        $\begin{bmatrix}
            \vect{x}_1 & \vect{x}_2 & \cdots & \vect{x}_N & \vect{x}_q \\
            y_1 & y_2 & \cdots & y_N & 0
        \end{bmatrix}$
    };

    \node (output) at (9.5,0) { 
        $\begin{bmatrix}
            \displaystyle\sum_{i=1}^{1}\vect{x}_i & \displaystyle\sum_{i=1}^{2}\vect{x}_i & \cdots & \displaystyle\sum_{i=1}^{N}\vect{x}_i & \displaystyle\sum_{i=1}^{N}\vect{x}_i + \vect{x}_q \\
            \displaystyle\sum_{i=1}^{1}y_i & \displaystyle\sum_{i=1}^{2}y_i & \cdots & \displaystyle\sum_{i=1}^{N}y_i & 0
        \end{bmatrix}$
    };
    
    \node[model] (model) [above of=output, node distance=2.5cm, minimum width=6.5cm, minimum height=1cm] {WARP};
    

    \node[draw=none] at (11.7,4.2)  {
        $
             \displaystyle\sum_{i=1}^{N}\hat{y}_i + \hat{y}_q
        $
    };

    \draw[-stealth, line width=1.5pt, color=black] 
        (input.east) -- (output.west) 
        node[midway, above, font=\small] {Cum. Sum};


    \draw[-stealth, line width=1.5pt, color=black] (6.7,1.2) -- (6.7,2);
    \draw[-stealth, line width=1.5pt, color=black] (7.9,1.2) -- (7.9,2);
    \draw[line width=1.5pt, color=black, dotted] (8.85,1.5) -- (9.25,1.5);
    \draw[-stealth, line width=1.5pt, color=black] (9.95,1.2) -- (9.95,2);
    \draw[-stealth, line width=1.5pt, color=black] (11.7,1.2) -- (11.7,2);

    \draw[-stealth, line width=1.5pt, color=black] (11.7,3.0) -- (11.7,3.8);


\end{tikzpicture}
    }
}
\end{minipage}
\hfill 
\subfigure[Values vs. keys]{
    \label{fig:allvalues}
    \includegraphics[width=0.198\textwidth]{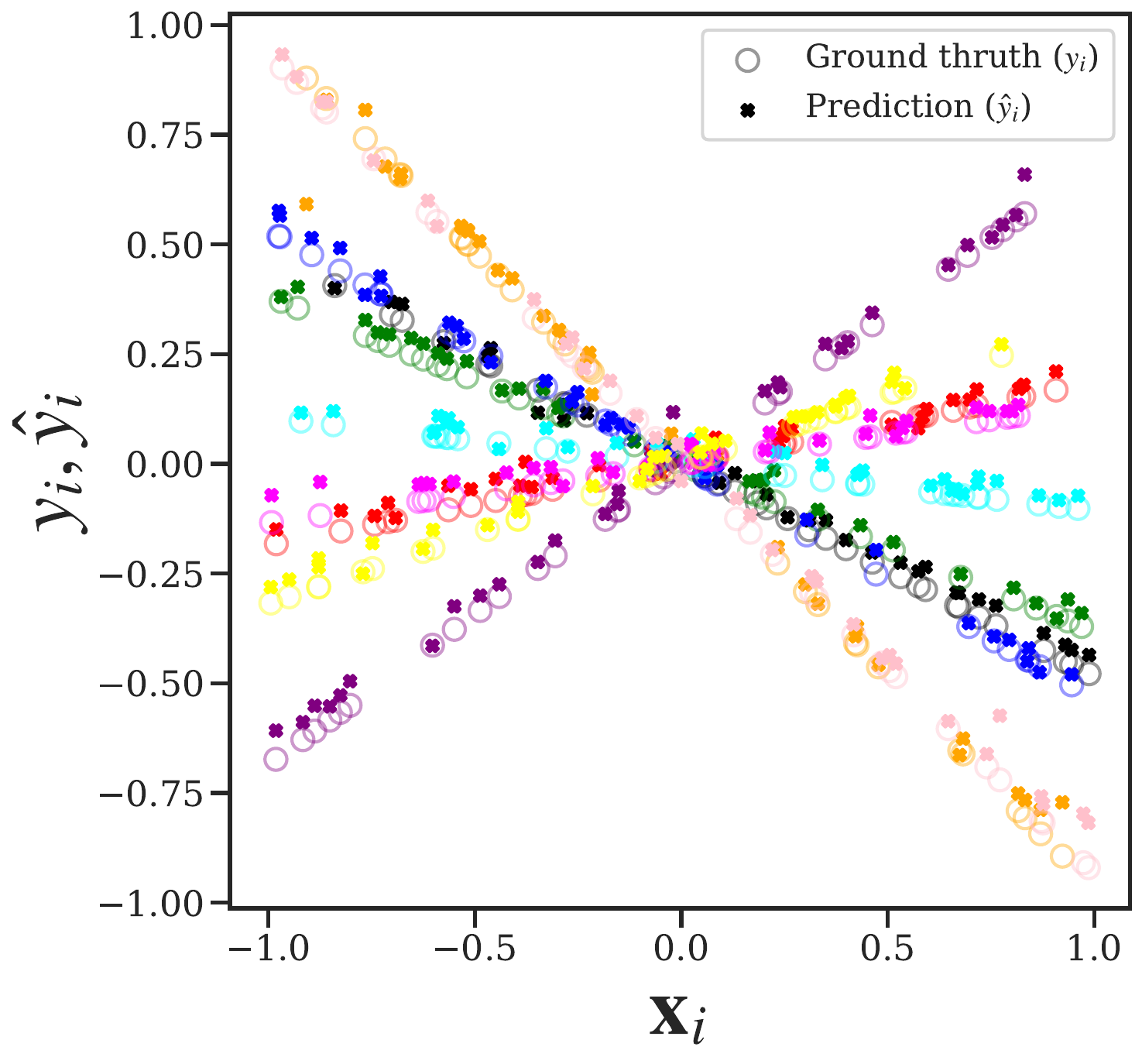}
}
\hfill 
\subfigure[Query values]{
    \label{fig:queryonly}
    \includegraphics[width=0.192\textwidth]{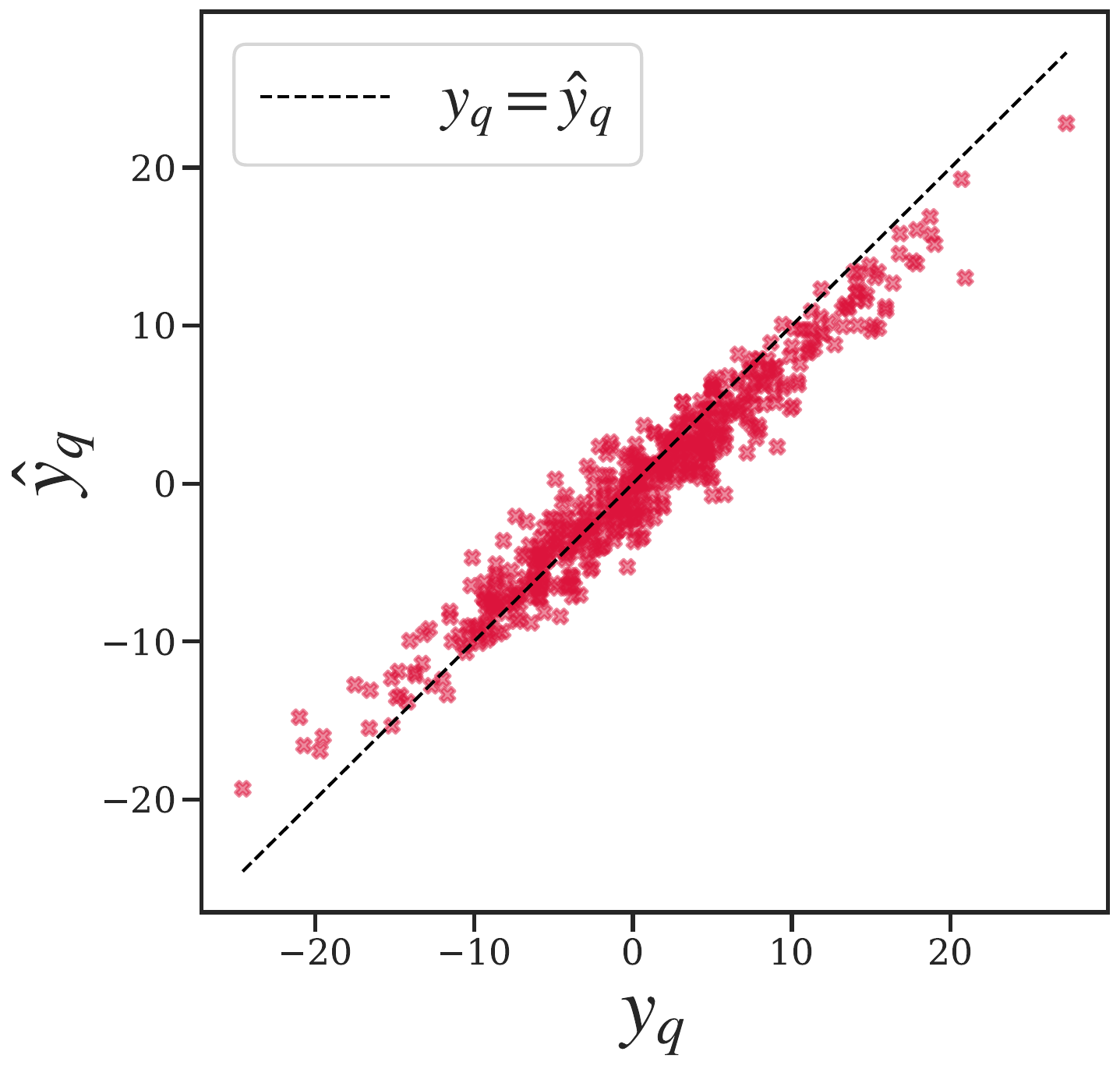}
}

\caption{Pipeline and results for in-context learning. \textbf{(a)} Cumulative sum transformation and subsequent processing of the input matrix. \textbf{(b)} Linear mappings learned between scalar keys and values of the same sequences ($D_x=2$). \textbf{(c)} Ground truth vs. query point predictions  ($D_x=8$).}
\label{fig:combined_horizontal}
\end{figure}

A key advantage of this approach is that once the model has learned from the context, the final root network $\theta_{T-1}: \sum_{i=1}^{N}\vect{x}_i + \vect{x}_q \mapsto \sum_{i=1}^{N}\hat y_i + \hat y_q$, which is equivalent to $\theta_{T-1}: \vect{x}_q \mapsto \hat y_q$, can be extracted. This allows it to process subsequent queries \textbf{without} needing to re-evaluate the entire sequence from scratch. This method yields significant computational savings compared to other models capable of ICL \cite{lee2023attention}.



\section{Discussion \& Conclusion}
\label{warp:discussion}

\subsection{Core Advantages}
\label{subsec:advantages}

WARP demonstrates \textbf{outstanding results} across a multitude of data modalities, both in-distribution and out-of-distribution, as evidenced by the extensive empirical results on time series forecasting and classification we have presented (see \cref{tab:img_completion,fig:mnist_comp_vis,fig:ettresults,tab:pems08_results,tab:dsr_results,tab:classification,fig:combined_horizontal}). Additional results showcasing a 93\% classification accuracy on sequential MNIST, along with \textbf{ablation studies} and further results on synthetic datasets are provided in \cref{warp_app:additionalresults}. Specifically, \cref{subsec:computationalefficiency} illustrates the excellent \textbf{computational efficiency} of our approach, as measured by wall-clock training time per epoch, peak GPU usage, and parameter counts.

By letting the data directly interact with the weights as in \cref{eq:wsm_recurrence}, WARP showcases the appealing \textbf{in-context learning} ability to fine-tune an auxiliary network \textbf{without} gradients at test-time \cite{behrouz2024titans,wei2022chain}. Additionally, WARP is the latest scientific machine learning \cite{cuomo2022scientific} technique that seamlessly integrates interpretable \textbf{physical knowledge} into its predictions, a feature standard RNNs have overlooked. This demonstratively allows for sample-efficient training and improved generalisation. 

Finally, the WARP architecture, through its input difference, bears resemblance to synaptic plasticity in biological neural networks, specifically Spike Timing-Dependent Plasticity \cite{caporale2008spike}, wherein the weight of a synaptic connection is strengthened or weakened depending on the time difference between spikes from pre- and post-synaptic neurons. This \textbf{neuromorphic quality} enables more biologically plausible learning dynamics.

\subsection{Limitations}


Some design decisions that strengthen WARP equally face limitations that we outline as promising avenues for future work. 
First, the size of the matrix $A$ limits \textbf{scaling to huge root neural networks}. Our experiments conducted on a RTX 4080 GPU with 16GB memory could only support moderate $D_{\theta}$ values, leaving open the question of how expressive WARP models can become if scaled further. Second, more \textbf{theoretical research} is needed to supplement the current state of the weight-space learning literature. Our work remains mostly empirical, despite introducing theory-informed algorithms in \cref{subsec:trainingalgs} and leveraging the underpinnings of NCDEs as universal approximators generalizing RNNs in continuous time settings \cite{kidger2022neural}. Lastly, WARP still struggles to achieve SOTA classification performance on \textbf{extremely long sequences} with intricate dependencies such as images, and remains untested on language modalities. Future work would seek first principles to improve long-range performance while reducing the memory footprint of the matrix $A$,  by exploring low-rank complex-valued diagonal parametrisations \cite{gupta2022diagonal}, neuron permutation equivariance \cite{zhou2023permutation}, or block-diagonal decompositions.







\subsection{Conclusion} 

In this work, we introduced Weight-Space linear RNNs, a novel family of sequence models that operates directly within the weight space of neural networks, offering a distinct paradigm from traditional recurrent architectures. We argue that the high-dimensional weight space can be used for intermediate representations, resulting in \add{``}infinite-dimensional\add{''} RNN hidden states and high-capacity memory. Our comprehensive experiments demonstrate that our models exhibit superior expressivity and generalisation capabilities, enabling a powerful form of gradient-free adaptation in response to sequential input differences, and showing exceptional abilities when integrating domain-specific knowledge from physical systems. Our framework draws intriguing parallels to neuromorphic learning principles, leading us a step further towards human-level artificial intelligence.


\subsection*{Broader Impact}
While their benefits are evident from \cref{subsec:advantages}, malicious deployment of our self-adaptable models in scenarios they were not designed for could lead to serious adverse outcomes. Additionally, high-energy costs from high-dimensional weight-space computations could increase disparities in our field. To limit the potential for such outcomes and to improve the democratisation of AI, our code is openly available at \url{https://github.com/ddrous/warp}.

\subsubsection*{Acknowledgments} 
This work was supported by UK Research and Innovation grant EP/S022937/1: Interactive Artificial Intelligence, EPSRC LEAP Digital Health Hub grant EP/X031349/1, and EPSRC program grant EP/R006768/1: Digital twins for improved dynamic design. We thank Hengshuai Yao and Yasin Abbasi-Yadkori for valuable discussions culminating in ideas that helped improve the appeal and performance of weigh-space linear recurrent neural networks.


\bibliography{refs}
\bibliographystyle{refs}

\newpage

\DoToC

\appendix

\newpage
\section{Related Work}
\label{warp:related}

The problem of sequence modelling, long dominated by Transformers \cite{vaswani2017attention}, is experiencing a renewed focus on recurrent architectures, particularly for their efficiency and unique modelling capabilities \cite{gu2021efficiently,de2024griffin}. Our model, the Weight-space Adaptive Recurrent Predictor (WARP), intersects with several active research areas. 

\paragraph{Weight-Space Learning}
The concept of leveraging the weight space of neural networks is not new; for instance, optimisers and hypernetworks inherently process weight-space features \cite{andrychowicz2016learning,zhou2024universal,ha2016hypernetworks}, treating them as \emph{outputs} of a learning algorithm. Other works explore weight features as \emph{inputs} for model analysis \cite{unterthiner2020predicting,schurholt2024towards} or for implicitly representing data \cite{dupont2022data}. WARP distinguishes itself by directly evolving the root network's weights as its \emph{intermediate} state, without explicitly specifying a test-time loss function to minimise. This test-time regression view is similarly observed with research on linear attention \cite{yang2024parallelizing,zhang2025test,vonoswald2025mesanet} and fast weights programming \cite{ba2016using}.\footnote{In \cref{fig:hy_fw_comparison}, we explicitly compare our integration with existing fast weights programming methods.} In the autoregressive forecasting setting, WARP bears striking similarities to WeightFlow \cite{li2025weightflow} which uses graph Controlled Differential Equation \cite{kidger2020neural} to model the continuous-time evolution of the weights, and to the ``delta'' rule \cite{schlag2021linear}, which equally updates weights based on the difference between the prediction and the target. WARP can thus be viewed as a generalisation to broader problem settings that include classification.
\paragraph{Modern Linear RNNs and SSMs}
Linear RNNs and SSMs have re-emerged as powerful tools, largely due to their parallelisable and hardware-aware training \cite{yang2024parallelizing, movahedi2025fixed}, with impressive performance on long sequences. Notable architectures like S4 \cite{gu2021efficiently} and Linear Attention \cite{katharopoulos2020transformers} have massively catalysed recent advancements. While WARP builds on the efficiency of linear recurrence, its core innovation lies in its unique state parametrisation --- rather than solely on the recurrent mechanism --- which includes non-linearities for improved expressivity.

\paragraph{Non-Linear Recurrent Mechanisms}
The landscape of sequence modelling is rich with innovative designs. Hybrid models like Griffin \cite{de2024griffin} merge recurrences with attention, while \citet{movahedi2025fixed} seek to compute dense linear RNNs from diagonal ones via fixed-point transformations. Frameworks like FACTS introduce structured memories \cite{nanbo2025facts}. Brain-inspired architectures \cite{zucchet2023online,ba2016using}, including time-varying decoder architectures \cite{jadhav2023time}, seek to learn evolving relationships between model inputs and outputs. WARP contributes to this evolving field by introducing a novel mechanism --- viewing the RNN hidden state as the weights and biases of a time-varying root neural network --- which results in non-linear self-decoding.


\paragraph{State and Memory in Recurrent Models}
A central debate revolves around the true state-tracking and memory capabilities of various recurrent architectures. While some SSMs and even Transformers face theoretical limitations in solving certain tasks \cite{merrill2024the,jelassi2024repeat}, improvements like incorporating negative eigenvalues in linear RNNs aim to enhance state-tracking \cite{grazzi2024unlocking}. Other works explicitly include neural memory modules so that surprising events are more memorable \cite{behrouz2024titans}. The growing \emph{test-time training} community \cite{zhang2025test,vonoswald2025mesanet} proposes to combine recurrence with associative memories for improved sequence modelling. WARP's use of a high-dimensional weight space for its states is a direct attempt to provide richer ``infinite-dimensional''\footnote{The hidden state is a \emph{function} which lives in an ``infinite-dimensional'' space.} memory capacity and more expressive temporal dynamics compared to conventional compressed state representations. This has parallels with the \emph{fast weights} literature \cite{schmidhuber1992learning,ba2016using}.

\paragraph{Gradient-Free Adaptation and Zero-Shot Learning}
Effective adaptation to out-of-distribution dynamics or in continual learning settings is a significant challenge \cite{green2024time}. For instance, standard Neural Ordinary Differential Equations \cite{chen2018neural} struggle with distribution shifts and need retraining or fine-tuning for adaptation \cite{kirchmeyer2022generalizing,kassai2024geps}. With its gradient-free formulation, WARP facilitates test-time generalisation --- a problem explored in meta-learning frameworks like Neural Context Flows \cite{nzoyem2025neural} --- through differentiable closed-form solvers \cite{bertinetto2018meta}, or in-context learning \cite{von2023transformers}. WARP can be viewed as a \emph{meta-learning} model given its progressive refinement of a shared initialisation $\theta_0$ at test-time, with strong connections to amortised inference \cite{ashman2023amortised}.


\paragraph{Koopman Operators} Our method can also be viewed as an application of Koopman operator theory to sequence-to-sequence modelling. As it is the case with nonlinear dynamical systems \cite{koopman1931hamiltonian}, the challenge is to identify the correct set of infinite-dimensional observable functions (the Koopman eigenfunctions) that linearise the dynamics. WARP addresses this by effectively using the neural network to learn a data-driven approximation of the Koopman operator, a technique explored in modern dynamics and machine learning \cite{lusch2018deep,mezic2005spectral}.

\begin{figure}[!h]
    \centering
    \includegraphics[width=0.95\linewidth]{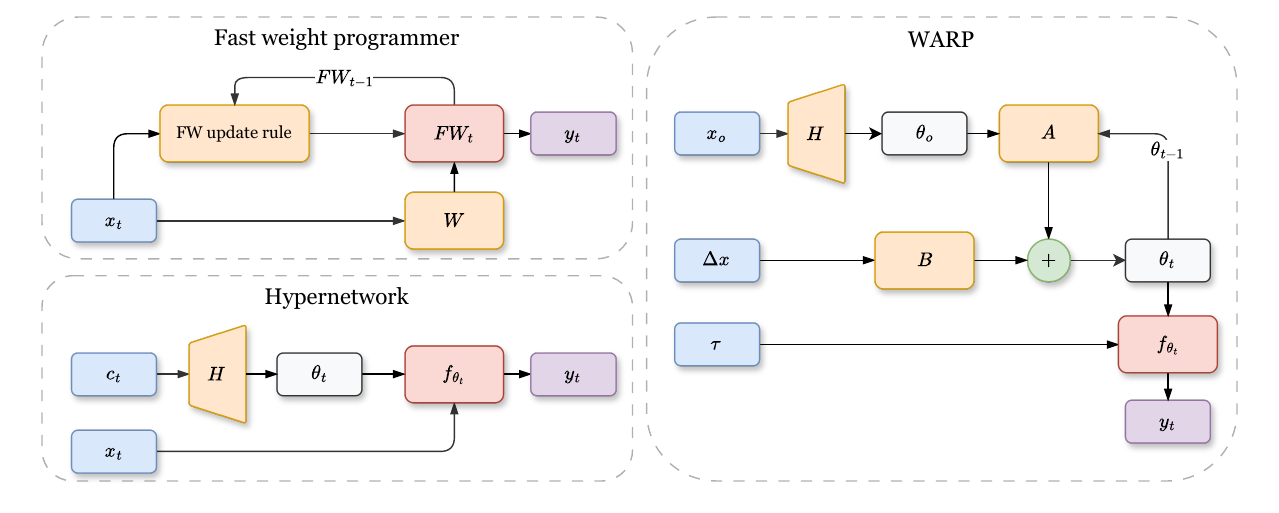}
    \caption{Schematic comparison of adaptive weight architectures.
        \textbf{(Top Left)} The \textbf{fast weight programmer} \cite{schmidhuber1992learning} modifies its processing dynamics by iteratively updating the fast weights $FW_t$ using a specialized \texttt{FW} update rule acting on the input $x_t$ and the previous weights $FW_{t-1}$.
        \textbf{(Bottom Left)} A standard \textbf{hypernetwork} \cite{chauhan2024brief} generates the weights $\theta_t$ for a target function $f_{\theta_t}$ by passing a context code $c_t$ through a higher level network $H$.
        \textbf{(Right)} The proposed \textbf{WARP} architecture, where weights are initialized as $\theta_o$ via a hypernetwork $H$ conditioned on $x_o$; subsequent weight updates are driven by a linear recurrence: the previous parameters $\theta_{t-1}$ are processed by block $A$, and input changes $\Delta x$ are processed by block $B$. These components are summed to produce the current weights $\theta_t$, which parameterize the function $f_{\theta_t}$ used to map coordinate inputs $\tau$ to the output $y_t$. }
    \label{fig:hy_fw_comparison}
\end{figure}

\newpage
\section{Methodological Details}
\label{warp_app:methodology}



\subsection{Motivation}


The main motivation behind WARP (Weight-space Adaptive Recurrent Prediction) is gradient-free adaptation to out-of-distribution (OoD) settings. Relative to OoD \emph{detection} which has always been a central problem in machine learning spanning decades of research interest \cite{cui2022out,keshtmand2024typicality}, OoD \emph{adaptation} is a recent but growing field rich in new and stimulating ideas \cite{arjovsky2020out,nzoyem2025neural}. WARP mimics the dynamics of an idealised ``smooth'' gradient descent as observed through a projection of a 4898-dimensional space into a 2-dimensional PCA space in \cref{fig:warp_vs_overfit}. This offers a promising avenue for OoD adaptation with minimal cost.



\begin{figure}[h]
\centering
\subfigure[PCA of weights trajectories]{\includegraphics[width=0.28\textwidth]{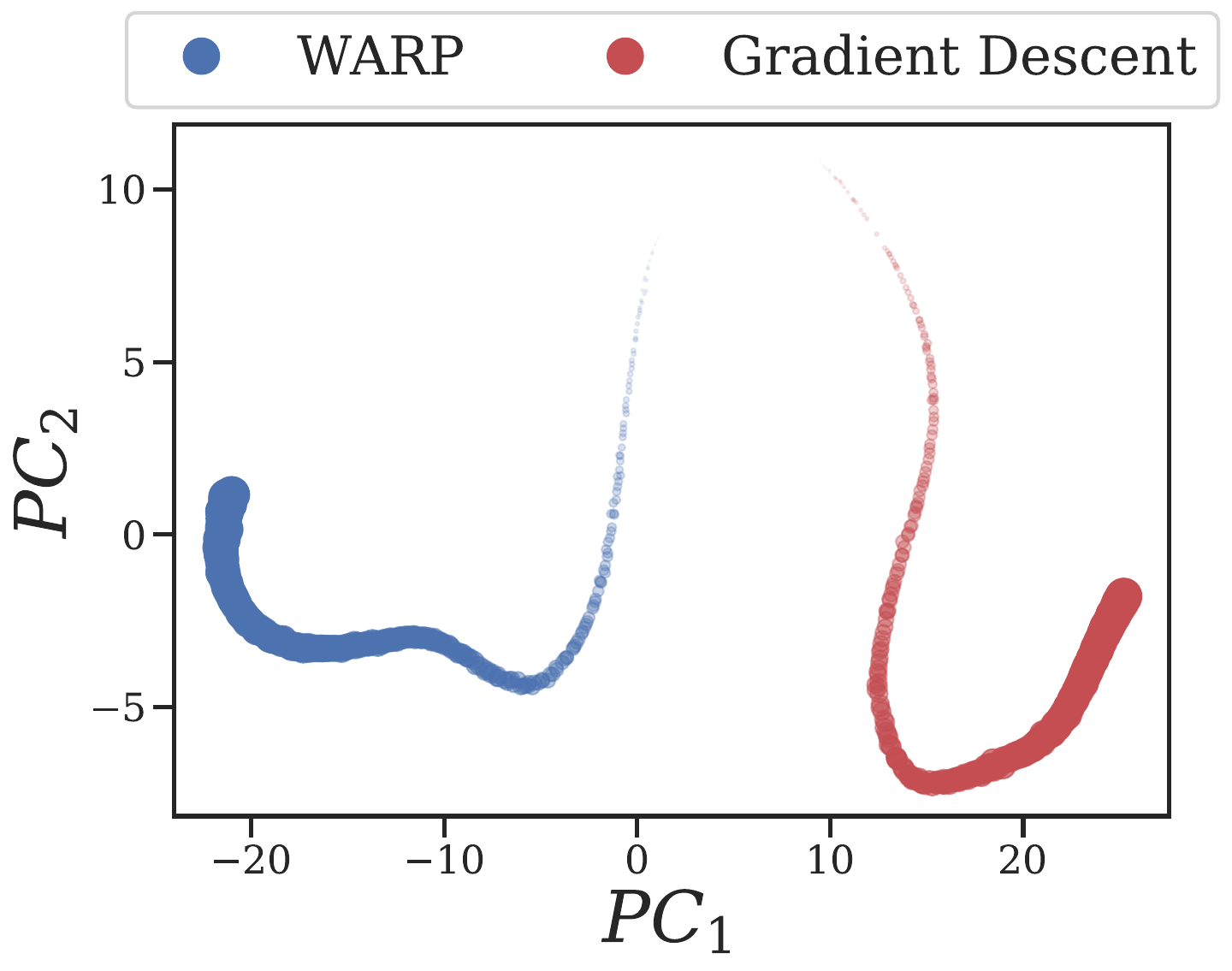}\label{fig:warp_vs_overfit}}
\subfigure[Norm of weights differences]{\includegraphics[width=0.35\textwidth]{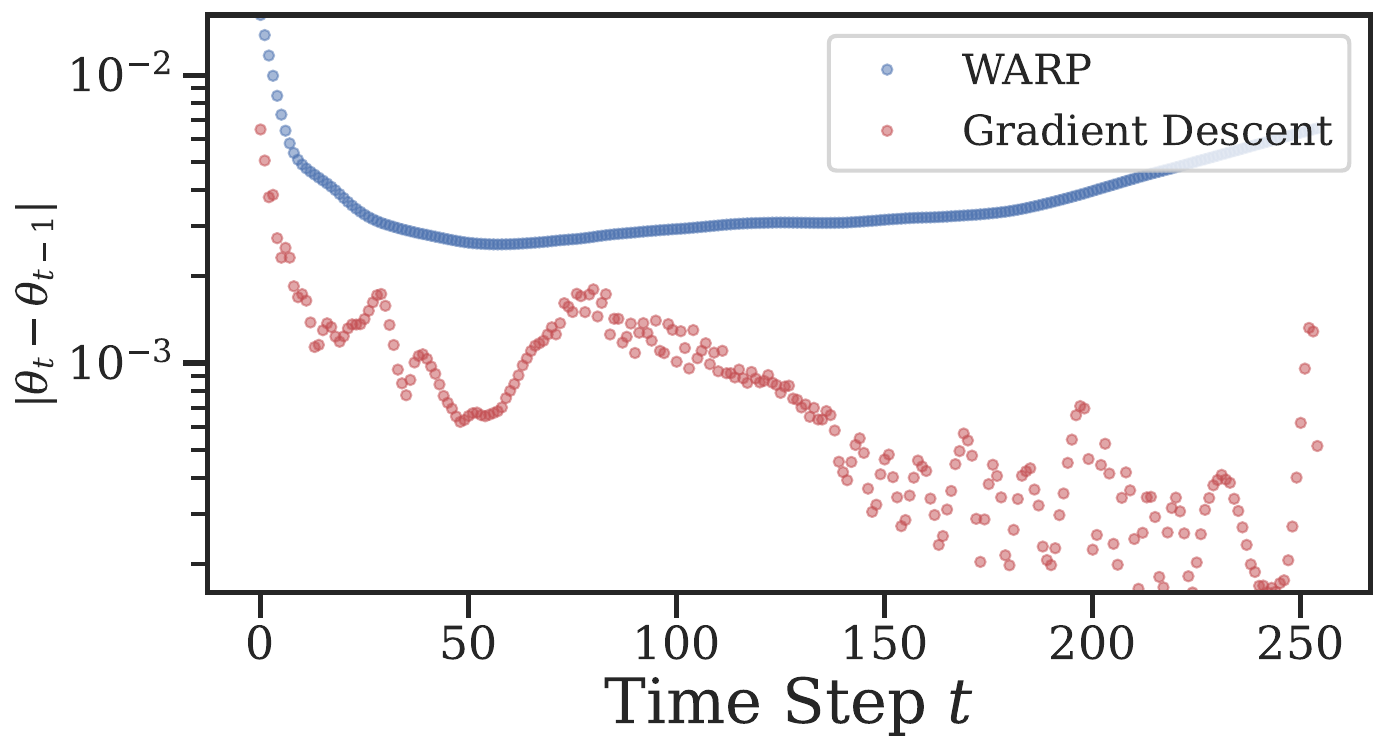}\label{fig:norms_gds}}
\caption[PCA and norm difference of the weight space of WARP vs. Gradient Descent]{\textbf{(a)} Principal components of the weight space of WARP vs. weight trajectory fitted via the Gradient Descent strategy on a single trajectory; \textbf{(b)} Norm of the difference between updates as we go through the time steps for WARP, and the gradient steps for Gradient Descent.}
\label{fig:gradient-free}
\vspace*{-0.2cm}
\end{figure}

\subsection{Training Algorithms}
\label{subsec:trainingalgs}

\begin{algorithm}[h]
\centering
\begin{tcolorbox}[colback=orange!5!white,colframe=orange!5!white]
\textbf{Input:} Training sequences $\{\mathbf{x}^{i}_t\}_{t \in \iset{0}{T-1} }^{i \in \iset{0}{N-1}}$ \\
\textbf{Output:} Trained model parameters $(A, B, \phi)$ \\[5pt]
\textbf{Algorithm:} \\
1. Initialise $A = I$, $B = \mathbf{0}$, and $\phi$\\
2. \textbf{for} each training epoch \textbf{do} \\
3. \quad \textbf{for} each batch of sequences \textbf{do} \\
4. \quad \quad \textbf{for} each sequence $\{\mathbf{x}_t^i\}_{t\in \iset{0}{T-1}}$ in batch \textbf{do} \\
5. \quad \quad \quad Initialise $\theta_0^i = \phi (\mathbf{x}_0^i)$ \\
6. \quad \quad \quad \textbf{for} $t = 1$ to $T-1$ \textbf{do} \\
7. \quad \quad \quad \quad Update $\theta_t^i = A\theta_{t-1}^i + B(\mathbf{x}_t^i - \mathbf{x}_{t-1}^i)$ \\
8. \quad \quad \quad \quad Compute output $\mathbf{y}_t^i = \text{MLP}_{\theta_t^i}(\tau)$ \hfill (\ding{43} \cref{eq:normtime,eq:pixelcoords,eq:peenc}) \\
9. \quad \quad \quad \textbf{end for} \\
10. \quad \quad \quad Compute sequence loss $\mathcal{L}^i$ using \cref{eq:lossfunctions}\\
11. \quad \quad \textbf{end for} \\
12. \quad \quad Update parameters $A,B,\phi$ using gradient descent \\
13. \quad \textbf{end for} \\
14. \textbf{end for} 
\end{tcolorbox}
\caption[Recurrent training of WARP.]{Recurrent training algorithm for WARP in its non-AR form.}
\label{fig:recurrent_training}
\end{algorithm}


\subsubsection{Recurrent Mode} 
\label{subsubsec:recurrent_mode}
The recurrent training pipeline is illustrated in its \textbf{non-AR setting} in \cref{fig:recurrent_training}, where $N$ indicates the total number of instances in the training set, indexed by $i$. The quantity $\tau$ is constructed from the components in \cref{eq:normtime,eq:pixelcoords,eq:peenc}. Typically, $\tau$ is formed by considering normalised time alone. However, depending on the specific use case, normalised time is concatenated with pixel coordinates (for image data), or with positional encoding using sines and cosines (e.g., for time-series analysis).
\begin{itemize}
    \item \textbf{Normalised Time.} This component consists of the normalised time step, where $T$ is the total sequence length:
    \begin{equation}
    \label{eq:normtime}
        \tau = \frac{t}{T-1}.
    \end{equation}

    \item \textbf{Normalised Pixel Coordinates.} For image data, spatial information is encoded using normalised pixel coordinates. Given a pixel at position $(w, h)$ in an image of total size $(W, H)$, the coordinates are:
    \begin{equation}
    \label{eq:pixelcoords}
        \tau = \left[ \frac{w}{W-1}, \frac{h}{H-1} \right].
    \end{equation}

    \item \textbf{Positional Encoding with Sines and Cosines.} The components of this matrix $\tau = PE \in \mathbb{R}^{T\times d}$ are defined as:
    \begin{equation}
    \label{eq:peenc}
    PE_{(t, k)} = 
    \begin{cases} 
        \sin\left(\frac{t}{C^{2j/d}}\right) & \text{if } k = 2j \\
        \cos\left(\frac{t}{C^{2j/d}}\right) & \text{if } k = 2j+1, 
    \end{cases}
    \end{equation}
    where $d$ is the encoding dimension, and $C$ is a hyperparameter that controls the frequency of the sinusoidal functions \cite{vaswani2017attention}.
\end{itemize}





For the \textbf{AR setting} trained with teacher forcing, $\mathbf{x}_t^i$ in line 7 is replaced, with probability $1-p_{\text{forcing}}$, with a sample from $\mathcal{N}(\hat{\boldsymbol{\mu}}_t, \hat{\boldsymbol{\sigma}}_t^2)$ which is taken element-wise using the classic reparametrisation trick as outlined in \cref{subsec:training}.\footnote{In the main text, the superscripts $i$ were omitted for clarity.} The batch of sequences (lines 4 to 11) is processed in parallel using vectorisation as per the implementation details below.

\subsubsection{Convolutional Mode} 
\label{subsubsec:convolutional_mode}

Like \cite{gu2021efficiently}, WARP supports a convolutional training mode where the sequence of weights is computed efficiently using Fast-Fourier Transforms (FFTs) on modern hardware \cite{alexander2022theannotateds4} using \cref{theo:convmode}. We use the Pythonic notation $\mathbf{u}_{0:T} \triangleq \{ \mathbf{u}_t \}_{t=0}^{T-1} \in \mathbb{R}^{T \times D_u}$, and the $\star$ to denote the convolution operation. The summarised convolutional training algorithm is provided in \cref{fig:convolutional_training}.

\begin{theorem}[Convolution Mode] \label{theo:convmode}
    Assume $B \in \mathbb{R}^{D_{\theta} \times D_x}$ is a full row-rank matrix.
    There exists $\Delta \mathbf{x}_{0} \in \mathbb{R}^{D_x}$ and a length-$T$ kernel $K$ such that $\theta_{0:T} = K \star \Delta \mathbf{x}_{0:T}$.
\end{theorem}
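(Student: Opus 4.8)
The plan is to reduce the statement to the familiar state-space "unrolling" identity, after folding the inhomogeneous initial condition $\theta_0 = \phi(\mathbf{x}_0)$ into the input stream. The hypothesis that $B$ has full row rank is used precisely to carry out this folding; everything else is a routine linear recursion.

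First I would absorb the initial condition. Since $B \in \mathbb{R}^{D_{\theta} \times D_x}$ has full row rank, $BB^\top$ is invertible and the right inverse $B^{\dagger} := B^\top (BB^\top)^{-1}$ satisfies $B B^{\dagger} = I_{D_\theta}$; hence $\Delta\mathbf{x}_0 := B^{\dagger}\phi(\mathbf{x}_0)$ obeys $B\,\Delta\mathbf{x}_0 = \phi(\mathbf{x}_0) = \theta_0$. Setting also $\Delta\mathbf{x}_t := \mathbf{x}_t - \mathbf{x}_{t-1}$ for $t \in \iset{1}{T-1}$ (the increment appearing in line 7 of \cref{fig:recurrent_training}), the WARP recurrence becomes the zero-initialized recursion $\theta_{-1} := \mathbf{0}$, $\theta_t = A\theta_{t-1} + B\,\Delta\mathbf{x}_t$ over $t \in \iset{0}{T-1}$. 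A one-line induction on $t$ then yields the closed form $\theta_t = \sum_{s=0}^{t} A^{t-s} B\, \Delta\mathbf{x}_s$: the case $t=0$ is $\theta_0 = B\Delta\mathbf{x}_0$, and the inductive step substitutes the hypothesis into $\theta_t = A\theta_{t-1} + B\Delta\mathbf{x}_t$.

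Finally I would read off the kernel: put $K \in \mathbb{R}^{T \times D_\theta \times D_x}$ with $K_j := A^{j} B$ for $j \in \iset{0}{T-1}$. With the causal, length-$T$-truncated convolution $(K \star \mathbf{u})_t := \sum_{s=0}^{t} K_{t-s}\,\mathbf{u}_s$, the closed form above reads exactly $\theta_t = (K \star \Delta\mathbf{x}_{0:T})_t$, i.e.\ $\theta_{0:T} = K \star \Delta\mathbf{x}_{0:T}$, which is the claim. I do not anticipate any real obstacle: the only substantive point is the first step, where expressing $\phi(\mathbf{x}_0)$ as $B$ applied to some vector needs $B$ surjective --- equivalently full row rank, which forces $D_\theta \le D_x$ and makes $\Delta\mathbf{x}_0$ non-unique (any preimage works; $B^{\dagger}\phi(\mathbf{x}_0)$ is the minimum-norm one). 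For the FFT-based training mode, I would additionally observe $K_0 = B$ and $K_j = A K_{j-1}$, so the kernel costs only $T$ matrix products --- negligible when $A$ is structured (e.g.\ diagonal) --- after which $K \star \Delta\mathbf{x}_{0:T}$ is computed with FFTs as in \cite{gu2021efficiently}.
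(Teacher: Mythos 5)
Your proposal is correct and follows essentially the same route as the paper's proof: unroll the linear recurrence, use the full-row-rank (surjectivity) hypothesis to write $\theta_0 = B\,\Delta\mathbf{x}_0$ for some $\Delta\mathbf{x}_0$, and read off the kernel $K = (B, AB, \ldots, A^{T-1}B)$ as the causal convolution filter. The only (harmless) difference is that you make the choice of $\Delta\mathbf{x}_0$ explicit via the right inverse $B^\top(BB^\top)^{-1}$, where the paper merely invokes surjectivity, and you correctly flag the implied restriction $D_\theta \le D_x$, which the paper also acknowledges immediately after the theorem.
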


\begin{proof}
It follows straightforwardly that the linear recurrence relation $\theta_t = A \theta_{t-1} + B \Delta \mathbf{x}_t$ can be unrolled as 
\begin{align}
\label{eq:unrolledrec}
    \theta_{t} = A^t \theta_0 + \sum_{\ell=0}^{t-1} A^\ell B \Delta \mathbf{x}_{t- \ell},  \qquad \forall \, t \in \iset{1}{T-1}.
\end{align}

Since $B$ is of full row-rank, the mapping $\mathbf{u} \mapsto B \mathbf{u}$ is surjective, and $\exists \Delta \mathbf{x}_{0} \in \mathbb{R}^{D_x} $ such that 
\begin{align} \label{eq:thetadelta0}
\theta_0 = B \Delta \mathbf{x}_{0}.
\end{align}

Substituting this into \eqref{eq:unrolledrec}, we get 
\begin{align}
\label{eq:unrolledrec2}
    \theta_{t} = \sum_{\ell=0}^{t} A^\ell B \Delta \mathbf{x}_{t- \ell},  \qquad \forall \, t \in \iset{0}{T-1}, 
\end{align}

from which the large kernel ---the sequence of columns of the Kalman controllability matrix \cite{trelat2005controle}--- is extracted:
\begin{align}
\label{eq:kernel}
    K = (B, AB, A^2B, \ldots, A^{T-1}B),
\end{align}
to form the relation
\begin{align}\label{eq:kernelconv}
    \theta_{0:T} = K \star \Delta \mathbf{x}_{0:T}
\end{align}

\end{proof}

In practice, however, we find the assumptions of \cref{theo:convmode} too restrictive to be applicable. Indeed, with the weight space typically larger than the input space, i.e. $D_{\theta} \gg D_x$, the mapping $\mathbf{u} \mapsto B \mathbf{u}$ is not \emph{surjective}. For such cases, we leverage the initial network $\phi$ to enforce additional constraints into the learning process. \cref{theo:constrainedconvmode} guarantees the existence of a suitable initial input difference $\Delta \mathbf{x}_0$ to use as input in the convolution \eqref{eq:kernelconv}.

\begin{theorem}[Existence of an Initial Input Difference] \label{theo:constrainedconvmode}
Fix $\phi$ as a locally linear operator with $B = \nabla \phi (\mathbf{x}_0)$, and assume $\emph{ker } \phi \neq \emptyset$. There exists $v \in \mathbb{R}^{D_x}$ such that $\Delta \mathbf{x}_0 = \mathbf{x}_0 - v$ and \cref{eq:kernelconv} holds.
\end{theorem}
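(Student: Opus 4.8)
The plan is to reduce the statement to producing a vector $v$ for which $\theta_0 = B(\mathbf{x}_0 - v)$; once this holds, the conclusion follows by repeating the argument of \cref{theo:convmode} essentially word for word. Indeed, setting $\Delta\mathbf{x}_0 := \mathbf{x}_0 - v$ makes $\theta_0 = B\,\Delta\mathbf{x}_0$ precisely \eqref{eq:thetadelta0}; substituting this into the unrolled recurrence \eqref{eq:unrolledrec} absorbs the $A^t\theta_0$ term into the summation exactly as in \eqref{eq:unrolledrec2}, and the same Kalman-controllability kernel $K$ of \eqref{eq:kernel} then yields $\theta_{0:T} = K \star \Delta\mathbf{x}_{0:T}$, which is \eqref{eq:kernelconv}.

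So the only new ingredient is the construction of $v$. I would pick any $v \in \ker\phi$, which exists by hypothesis, so that $\phi(v) = 0$. Local linearity of $\phi$ about $\mathbf{x}_0$ means $\phi(\mathbf{x}) = \phi(\mathbf{x}_0) + \nabla\phi(\mathbf{x}_0)(\mathbf{x} - \mathbf{x}_0) = \phi(\mathbf{x}_0) + B(\mathbf{x} - \mathbf{x}_0)$ on the affine region containing $\mathbf{x}_0$; evaluating at $\mathbf{x} = v$ gives $0 = \phi(\mathbf{x}_0) + B(v - \mathbf{x}_0)$, i.e. $\phi(\mathbf{x}_0) = B(\mathbf{x}_0 - v)$. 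Since the training algorithm sets $\theta_0 = \phi(\mathbf{x}_0)$, this is exactly the identity $\theta_0 = B(\mathbf{x}_0 - v)$ needed above, and the proof concludes as in the previous paragraph.

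The delicate point, and the one I would spend the most care on, is that the affine expansion of $\phi$ is only valid on a neighbourhood (or linear region) of $\mathbf{x}_0$, whereas the chosen $v \in \ker\phi$ need not lie there. I see two clean ways around it: either take $\phi$ to be globally affine --- the natural reading when it merely serves to initialize the root network, in which case the expansion is exact and any $v \in \ker\phi$ works --- or, for a genuinely piecewise-affine $\phi$, strengthen the hypothesis to ``$\ker\phi$ meets the affine region of $\mathbf{x}_0$'', or transport a far-away $v$ inward along the segment $v_s = \mathbf{x}_0 + s(v-\mathbf{x}_0)$, $s \in [0,1]$, using continuity of $\phi$ together with piecewise linearity. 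Any of these keeps the construction of $\Delta\mathbf{x}_0$ fully explicit and closes the argument.
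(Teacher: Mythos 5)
Your proof is correct and follows essentially the same route as the paper's: pick $v\in\ker\phi$, invoke the first-order (locally linear) expansion of $\phi$ at $\mathbf{x}_0$ with $B=\nabla\phi(\mathbf{x}_0)$ to get $\phi(\mathbf{x}_0)=\phi(v)+B(\mathbf{x}_0-v)$, and read off $\Delta\mathbf{x}_0=\mathbf{x}_0-v$ so that $\theta_0=B\,\Delta\mathbf{x}_0$ feeds into the kernel construction of \cref{theo:convmode}. Your version is in fact slightly tighter than the paper's --- you run the argument forward (construct $v$, then verify \eqref{eq:thetadelta0}) rather than assuming \eqref{eq:thetadelta0} and identifying $\Delta\mathbf{x}_0$ afterwards, and you explicitly flag the domain-of-validity issue (that $v$ need not lie in the affine region of $\mathbf{x}_0$), a gap the paper's proof silently passes over and which genuinely requires one of the fixes you list.
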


\begin{proof}
The proof is straightforward by remarking that $\theta_0 = \phi (\mathbf{x}_0)$. Using \cref{eq:thetadelta0}, we find that 
\begin{align*}
    \theta_0 &= B \Delta \mathbf{x}_{0} \\
    \Rightarrow  \phi (\mathbf{x}_0) &= 0 + B \Delta \mathbf{x}_{0}
\end{align*}
Since $\text{ker } \phi \neq \emptyset$, $\exists v$ such that $\phi(v) = 0 $, and since we've fixed $B = \nabla \phi (\mathbf{x}_0)$, this leads to
\begin{align*}
    \phi (\mathbf{x}_0) &= \phi(v) + \nabla\phi (\mathbf{x}_0) \Delta \mathbf{x}_{0}.
\end{align*}
Since $\phi$ is locally linear, this relation can be identified with its unique first-order Taylor expansion near $\mathbf{x}_0$, from which we identify $\mathbf{x}_0 =  v + \Delta\mathbf{x}_0$; or equivalently  $\Delta \mathbf{x}_0 = \mathbf{x}_0 - v$.

\end{proof}

\begin{algorithm}[t]
\centering
\begin{tcolorbox}[colback=purple!5!white,colframe=purple!5!white]
\textbf{Input:} Training sequences $\{\mathbf{x}^{i}_t\}_{t \in \iset{0}{T-1} }^{i \in \iset{0}{N-1}}$ \\
\textbf{Output:} Trained model parameters $(A, B, \phi)$ \\[5pt]
\textbf{Algorithm:} \\
1. Initialise $A = I$, $B = \mathbf{0}$, and $\phi$ \\
2. \textbf{for} each training epoch \textbf{do} \\
3. \quad \textbf{for} each batch of sequences \textbf{do} \\
4. \quad \quad \textbf{for} each sequence $\{\mathbf{x}_t^i\}_{t\in \iset{0}{T-1}}$ in batch \textbf{do} \\
5. \quad \quad \quad Initialise $\theta_0^i = \phi (\mathbf{x}_0^i)$ and $\Delta \mathbf{x}_0$ \hfill (\ding{43} \cref{eq:thetadelta0,theo:constrainedconvmode}) \\
6. \quad \quad \quad Compute $\theta_{0:T}^i = K \star \Delta \mathbf{x}_{0:T}^i$ \hfill  (\ding{43} \cref{eq:kernelconv}) \\
7. \quad \quad \quad \textbf{for} $t = 1$ to $T-1$ \textbf{do} \\
8. \quad \quad \quad \quad Compute output $\mathbf{y}_t^i = \text{MLP}_{\theta_t^i}(\tau)$ \\
9. \quad \quad \quad \textbf{end for} \\
10. \quad \quad \quad Compute sequence loss $\mathcal{L}^i$ using \cref{eq:lossfunctions}\\
11. \quad \quad \textbf{end for} \\
12. \quad \quad Update parameters $A,B,\phi$ using gradient descent \\
13. \quad \textbf{end for} \\
14. \textbf{end for} 
\end{tcolorbox}
\caption[Convolutional training of WARP.]{Convolutional training algorithm for WARP, where line 6 can be computed with (inverse) FFTs and the convolution theorem. All decoding sequence steps (lines 7-9), as well as the individual sequences (the batch from lines 4-11) are processed in parallel.}
\label{fig:convolutional_training}
\end{algorithm}


\subsection{Implementation Caveats}
\label{subsubsec:caveats}

The difference between a successful WARP training and a failure may lie in small implementation details. We recommend clipping several quantities to increase the chances of success.

\paragraph{Prediction Clipping} During our training, we found it important to constrain the outputs of the root network to avoid divergence and blow-up. This can be achieved through a final activation applied to the mean component of the output, with e.g. \emph{min-max} symmetric clipping: $$\mathbf{x}_t \mapsto \max (\min (\mathbf{x}_t, d_{\text{lim}}) , -d_{\text{lim}}),$$ with hyperparameter $d_{\text{lim}}>0$. Another powerful approach which has demonstrated great success in the realm of Transformers is the \emph{dynamic tanh} \cite{Zhu2025DyT} with learnable scalars $a, b, \alpha, \beta$: 
\begin{equation*}
\mathbf{x}_t \mapsto  \alpha \tanh\left(\frac{\mathbf{x}_t - b}{a}\right) + \beta,
\end{equation*}
with $(a, \alpha)$ initialised as the largest value encountered in the training datasets, and $(b, \beta)$ both as zero. This ensures output scaling that is consistent in shape with the classical $\tanh$ activation.





\paragraph{Weight Clipping}

In some problems like MNIST, we found it not enough to constrain the root's output within a certain bound, as the predictions kept diverging. In such cases, mechanisms like directly clipping the weights in between time steps provided an additional form of non-linearity helpful for the model. Our weight clipping strategy differs from traditional approaches discussed in continual learning contexts \cite{elsayed2024weight} as it does not consider initialisation:
\begin{equation*}
\theta_t = \texttt{clip}(\theta_t, -w_{\text{lim}}, w_{\text{lim}}),
\end{equation*}

\noindent where $w_{\text{lim}}$ is a hyperparameter, and \texttt{clip} is a shorthand for \emph{min-max} clipping as discussed above. This clipping operation serves as an implicit activation function in weight space, preventing unbounded growth in weight values and stabilizing training.

\paragraph{Gradient Clipping}

As customary with recurrent networks training with the backpropagation through time algorithm \cite{werbos1990backpropagation}, we observed the classical problem of exploding gradients \cite{zucchet2024recurrent}, which was mitigated by clipping the gradient norms within a specific bound captured by $g_{\text{lim}}=10^{-7}$.

\newpage
\section{Datasets, Baselines \& Metrics}
\label{warp_app:datasetsbaselines}

\subsection{Datasets}
\label{subsec:datasets}

\begin{table}[htbp]
\caption[Problems and their corresponding training datasets with specifications.]{Problems and their corresponding training datasets with specifications. Details about the UEA datasets are presented in \cref{tab:classification} and not repeated here. The term \textit{\textbf{(varies)}} indicates that further splits of the datasets were made.} 
    \scriptsize
    \centering
    \vspace*{0.2cm}
    \begin{tabular}{c|l|c|c|c|c}
        \toprule
        \textsc{\tabhead{Problem}} & \textsc{\tabhead{Dataset}} & \textsc{\tabhead{\# Samples $N$}} & \textsc{\textbf{Seq. Length $T$}} & \textsc{\textbf{Context Length $L$}} & \textsc{\textbf{\# Features $D_x$}} \\
        \midrule
        \multirow{3}{*}{\begin{tabular}[c]{@{}c@{}}2D Images\end{tabular}} & MNIST & 60,000 & 784 & \textit{\textbf{(varies)}} & 1 \\
        & Fashion MNIST & 60,000 & 784 & \textit{\textbf{(varies)}} & 1 \\
        & CelebA & 162,770 & 1,024 & \textit{\textbf{(varies)}} & 3 \\
        \midrule
        \multirow{4}{*}{ETT} & m1 & 34,369 & 192 & 96 & 7 \\
        & m2 & 34,369 & 192 & 96 & 7 \\
        & h1 & 8,449 & 192 & 96 & 7 \\
        & h2 & 8,449 & 192 & 96 & 7 \\
        \midrule
        \multirow{5}{*}{\begin{tabular}[c]{@{}c@{}}Dynamical \\Systems\end{tabular}} & MSD & 20,480 & 256 & 100 & 2 \\
        & MSD-Zero & 20,480 & 256 & 100 & 2 \\
        & LV & 15,000 & 256 & 100 & 2 \\
        & SINE &\textit{\textbf{(varies)}} & 16 & 1 & 1 \\
        & Spirals & 10,000 & 64 & 64 & 2 \\
        \bottomrule
    \end{tabular}
    \label{tab:problem_datasets}
\end{table}

We describe various datasets used in this paper. Our description delves into the details of pre-existing datasets and the data generation script of synthetic toy datasets. This section complements the summary we provided in \cref{tab:problem_datasets}.

\paragraph{Image Datasets} Both MNIST \cite{lecun1998gradient} and Fashion MNIST \cite{xiao2017online} datasets were loaded using the well-known PyTorch interface \cite{paszke2017automatic}. The values were then normalised so that pixel values ranged $[-1,1]$. The CelebA dataset \cite{liu2015faceattributes} was loaded using the API from \cite{nzoyem2024extending} itself inspired by \cite{zintgraf2019fast}. Training was performed on the train sets (see attributes in \cref{tab:problem_datasets}), with validation and testing on the predefined test sets.

\paragraph{Electricity Transformer Temperature (ETT)} For the electricity data \cite{zhou2021informer}, we further normalised the preprocessed data from TSLib to place all values in the range $[-1,1]$ in order to facilitate learning dynamics. We did not use the predefined ``test'' set because of its 144-step-long forecast window, which is much longer than the 96 steps all models saw during training. Consequently, we used the ``validation'' set to evaluate our models as well as all the baselines. 


\paragraph{University of East Anglia (UEA)} On the UEA dataset \cite{bagnall2018uea}, we follow the procedure from \cite{walker2024log} and reuse the same dataset. (We note that this exact experimental protocol was recently observed in \cite{rusch2024oscillatory}). We extracted the necessary scripts for reproducibility and provide them as part of our code under appropriate license.

\paragraph{Dynamical Systems}
Continuous autonomous dynamical systems can be conceptualised as multivariate time series governed by a deterministic vector field $(\mathbf{x}_\tau, p) \mapsto \dot{\mathbf{x}}_\tau$, with $p$ encompassing physical parameters affecting the dynamics. Given an initial condition $\mathbf{x}_0$, one can systematically simulate and subsample the trajectory $\mathbf{x}_{0:T}$. To complement our description in \cref{subsec:physicalsystems}, we provide the vector field used for each dataset in \cref{tab:dynamical_systems}. We summarise their physical parameter ranges in \cref{tab:parameter_ranges}. All trajectories are obtained with SciPy's `RK45' adaptive time-stepping numerical integrator \cite{2020SciPy-NMeth}. The five training data splits of the SINE are ``Tiny'', ``Small'', ``Medium'', ``Large'', ``Huge'', with respectively 1, 10, 100, 1k, and 10k samples. All datasets are normalised and placed within  $[-1,1]$, which is calculated using the train set statistics. Comprehensive data generation scripts with physical parameters for all four datasets are provided in our code.

\begin{table}[htbp]
\centering
\caption[Dynamical systems with their vector fields and/or flow maps.]{List of considered dynamical systems with their vector fields and/or flow maps.  }
\scriptsize
\vspace*{0.2cm}
\begin{tabular}{c|c|c|c}
\toprule
\textsc{\tabhead{Dataset}} & \textsc{\tabhead{Physical Parameters}} & \textsc{\tabhead{Vector Field}} & \textsc{\tabhead{Flow Map}} \\
\midrule
Mass-Spring-Damper (MSD) & $m, k, c$ & 
$\begin{dcases}
\dot{x}_1 = x_2 \\
\dot{x}_2 = -\frac{k}{m}x_1 - \frac{c}{m}x_2
\end{dcases}$ & Not used \\
\midrule
Lotka-Volterra (LV) & $\alpha, \beta, \gamma, \delta$ & 
$\begin{dcases}
\dot{x}_1 = \alpha x_1 - \beta x_1 x_2 \\
\dot{x}_2 = \delta x_1 x_2 - \gamma x_2
\end{dcases}$ &  Non-closed form solution \\
\midrule
Sine Curves (SINE) & $\varphi$ & Not used & $x_1(t) = \sin(2\pi t + \varphi)$ \\
\bottomrule
\end{tabular}
\label{tab:dynamical_systems}
\end{table}

\begin{table}[htbp]
\centering
\caption[Parameter ranges of several dynamical systems for Train and Test datasets.]{Parameter ranges of several dynamical systems for Train and Test datasets. Test set parameter ranges induce OoD trajectories, except for the SINE cases. The relative scale and broad range of parameters values for the MSD problem make this task extremely challenging.}
\vspace*{0.2cm}
\scriptsize
\begin{tabular}{c|c|c}
\toprule
\textsc{\tabhead{Dataset}} & \textsc{\tabhead{Train Parameter Ranges}} & \textsc{\tabhead{Test Parameter Ranges}} \\
\midrule
Mass-Spring-Damper (MSD) & 
$\begin{dcases}
m \in [0.02, 0.04] \\
k \in [4, 16] \\
c \in [0.01, 0.2]
\end{dcases}$ & 
$\begin{dcases}
m \in [0.01, 0.05] \\
k \in [2, 18] \\
c \in [0.01, 0.3]
\end{dcases}$ \\
\midrule
Lotka-Volterra (LV) & 
$\begin{dcases}
\alpha \in [20, 50] \\
\beta \in [80, 120] \\
\gamma \in [80, 120] \\
\delta \in [20, 50]
\end{dcases}$ & 
$\begin{dcases}
\alpha \in [10, 60] \\
\beta \in [70, 130] \\
\gamma \in [70, 130] \\
\delta \in [10, 60]
\end{dcases}$ \\
\midrule
Sine Curves (SINE) & $\varphi \in [-\pi/6, \pi/6]$ & $\varphi \in [-\pi/6, \pi/6]$ \\
\bottomrule
\end{tabular}
\label{tab:parameter_ranges}
\end{table}

\paragraph{Spirals} The Spirals dataset is an additional dynamical system dataset for binary classification tasks. The training data consists of 10,000 samples, where each sample is a spiral trajectory represented as a sequence of 64 2D points ($x,y$ coordinates). Half of the dataset contains clockwise spirals (labelled as 0), while the other half contains counterclockwise spirals (labelled as 1). The spirals are generated using sine and cosine functions with random phase offsets, and the amplitude decreases with time to create the spiral effect. This dataset serves as a powerful test case for dynamics classification; it was inspired by \cite{kidger2021equinox},\footnote{An example usage can be found at \url{https://docs.kidger.site/diffrax/examples/neural_cde/}} with a few samples visualised in \cref{fig:spirals_vis}.

\begin{figure}[h]
\begin{center}
\frame{\includegraphics[width=0.29\columnwidth]{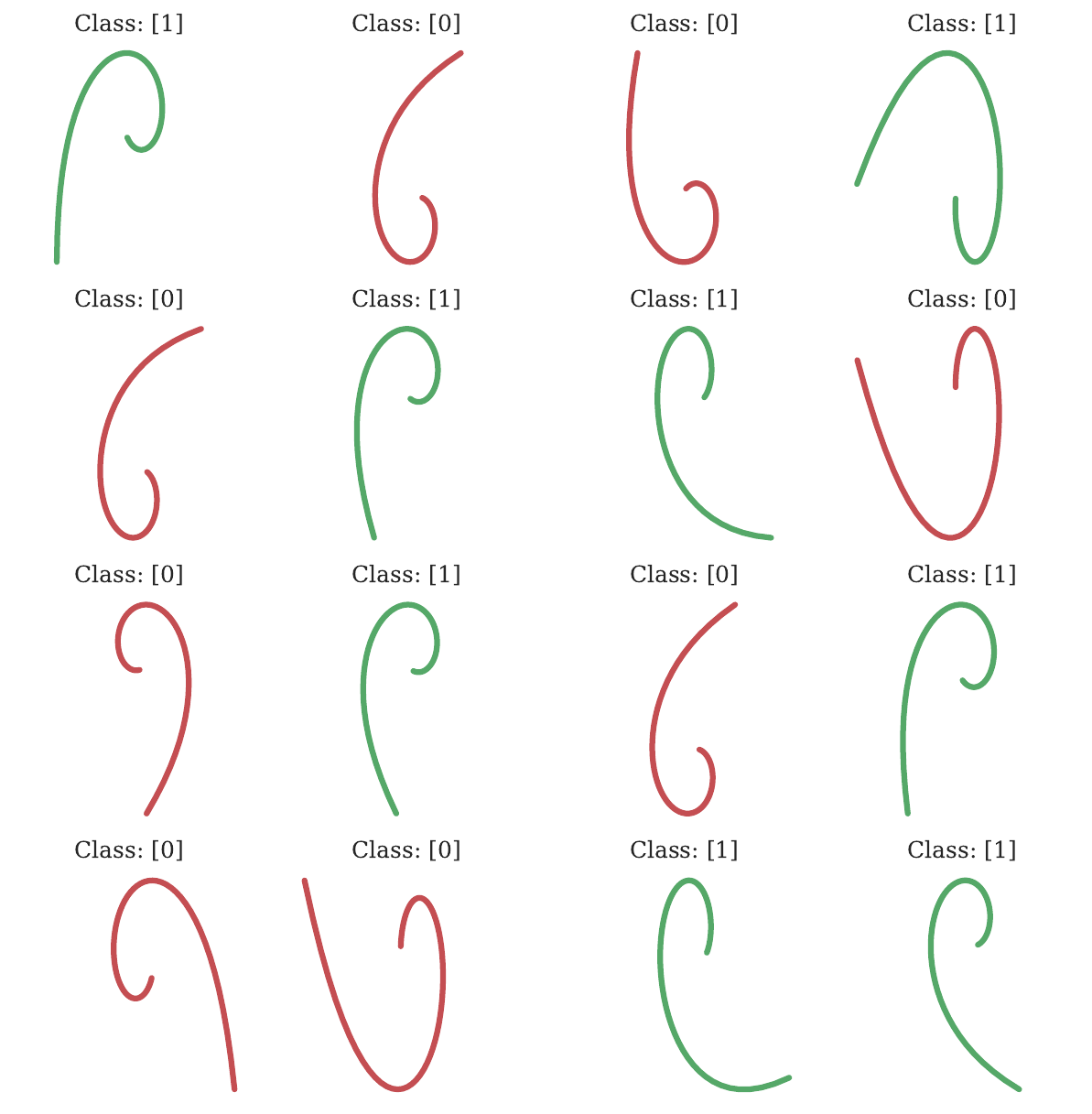}}
\captionof{figure}[Visualisation of a few samples of the Spirals datasets.]{Visualisation of a few samples of the Spirals datasets. For each sample, we plot the $y$ and the $x$ sequences of coordinates against each other to observe the (counter-)clockwise direction.}
\label{fig:spirals_vis}
\end{center}
\end{figure}




\subsection{Baselines}
\label{subsec:baselines}

All models are trained based on the same hyperparameter tuning protocol in order to ensure fair comparability.

\paragraph{Standard RNNs} We consider two powerful RNN baselines including the Gated Recurrent Unit (GRU) \cite{cho2014learning} and the Long Short-Term Memory (LSTM) \cite{hochreiter1997long}. Both are trained in recurrent AR mode for forecasting problems and recurrent non-AR mode for classification. Both are unidirectional and have a single layer to match our WARP model. Depending on the experiment, we vary their hidden size to match the total parameter count of WARP. The remainder of the experiment details such as training procedure are presented in \cref{warp_app:details}. We attach both implementations, using Equinox \cite{kidger2021equinox}, to our code.

\paragraph{Time Series Transformer (TST)} We consider the Time Series Transformer from HuggingFace \cite{jain2022hugging}. This baseline provides a SOTA baseline, leveraging one of the most transformative sequence mixing processes to date: Attention. The specific model used was the \emph{TimeSeriesTransformerForPrediction}, which adds a distribution head on top of the vanilla encoder-decoder Transformer \cite{vaswani2017attention}. This means that the model learns a distribution from which we take the mean to be used for time-series forecasting. The next token prediction is obtained by 
randomly sampling a window of context length $L$ plus prediction length $T-L$ from the target time series. This prediction window is subsequently masked for the next token prediction task. 

\paragraph{Convolution Conditional Neural Process (ConvCNP)} The ConvCNP \cite{Gordon2020Convolutional} is an encoder-based meta-learning approach that doesn't require gradients in order to adapt to novel scenarios. The ConvCNP is trained for on-the-grid image completion with 100 random shots (time steps. We adapt the data loading process to allow the ConvCNP to operate on raster-scan-ordered pixels at test time.

\paragraph{Structured SSM (S4)}

We use the powerful Structured State Space Model S4 with the implementation of \cite{alexander2022theannotateds4}. We particularly apply it to the MNIST experiment, where the goal is to forecast a contiguous range of future predictions given a range of past contexts. To that end, we simply concatenate the entire context with a sequence of masks set to the length of the forecast window. This input is a single sequence of length $T$ that is run through the deep S4 model, which maps to an output of length $T$. We then use the last $T-L$ tokens as the forecasted predictions. Unlike other models in this work, the MNIST image completion problem with S4 is trained with a 256-way cross-entropy loss, as pixel intensities take integer values in the range $[0,255]$. This limits the fair applicability of S4 on the CelebA dataset, since its images contain all three RGB channels. 

\paragraph{Neural Controlled Differential Equation (NCDE)} NCDEs or (Neural CDEs) \cite{kidger2020neural} provide a continuous-time framework for processing irregularly-sampled time series by interpreting the data as a continuous path. By using the path as a control for a neural differential equation, NCDEs can naturally handle missing data and irregular sampling. 
The continuous nature of NCDEs makes them a strong baseline exclusively for classification tasks.

\paragraph{Baselines for classification} Mamba, S6, Log-CDE, NRDE, NCDE, LRU were all reported from \cite{walker2024log}, where we direct the reader for further details. We reused the results and the conclusion from that work, as was done by \citet{rusch2024oscillatory}. The LinOSS baseline \cite{rusch2024oscillatory} reported in \cref{tab:classification} corresponspods to the more powerful LinOSS-IM variant. We used the official implementations of FACTS \cite{nanbo2025facts} and Griffin \cite{de2024griffin}. Griffin's model size was reduced to 5k to fit within our compute budget.

\subsection{Metrics}
\label{subsec:metrics}

\paragraph{Bits Per Dimension (BPD)} The (BPD) is used to evaluate the quality of generative models, particularly for images. It quantifies how many bits are needed on average to encode each dimension (e.g., pixel) of the data, with lower BPD values indicating a better model. The BPD is derived from the negative log-likelihood (NLL) of the data under the model's predicted distribution. For a given ground truth pixel value $\mathbf{y}_t$ and its corresponding predicted mean $\hat{\mathbf{y}}_t$ and standard deviation $\hat{\boldsymbol{\sigma}_t}$, the overall NLL over the image is calculated as:
$$
\text{NLL} \triangleq \frac{1}{T} \sum_{t=0}^{T-1} \frac{1}{2} \log(2\pi\hat{\boldsymbol{\sigma}_t}^2) + \frac{1}{2} \frac{(\mathbf{y}_t - \hat{\mathbf{y}}_t)^2}{\hat{\boldsymbol{\sigma}_t}^2}.
$$

The BPD is obtained by converting the NLL from natural units of information to bits:
$$
\text{BPD} = \text{NLL} \times \log_2(e).
$$

\paragraph{Mean Absolute Error (MAE)} The MAE measures the average magnitude of the errors in a set of predictions. For a sequence of true values $\mathbf{y}_t$ and predicted values $\hat{\mathbf{y}}_t$, the MAE is given by:
$$
\text{MAE} \triangleq \frac{1}{T} \sum_{t=0}^{T-1} |\mathbf{y}_t - \hat{\mathbf{y}}_t|.
$$

\paragraph{Mean Absolute Percentage Error (MAPE)} The Mean Absolute Percentage Error (MAPE) expresses the average absolute percent error. The MAPE is given in percentage points by:

$$
\text{MAPE} \triangleq \frac{100}{T} \sum_{t=0}^{T-1} \left| \frac{\mathbf{y}_t - \hat{\mathbf{y}}_t}{\mathbf{y}_t} \right|.
$$

\newpage
\section{Experimental Details}
\label{warp_app:details}
\label{warp_app:detailes}

We begin this section by sharing experimental details shared across all problems. Subsequent subsections will delve into the specifics of each completion, forecasting or classification problem.

\paragraph{WARP setup.} Although specific details may vary depending on the problem, the root network is consistently chosen as an MLP for all problem sets in this paper. Given the quadratic memory cost $O(D_{\theta}^2)$, we can vary its layers to balance batch size with capacity. Image completion and forecasting problems use the ReLU activation \cite{agarap2018deep}, while smooth dynamical system reconstruction uses the Swish \cite{ramachandran2017swish}. Complete details on the root network are given in \cref{tab:root_mlp_config}. The initial hypernetwork $\phi$ ---used for all problems except Image Completion, MSD, and LV--- is made up of two hidden layers of width $\sfrac{h_\text{in}}{3} + \sfrac{2 h_\text{out}}{3} $ and $\sfrac{2 h_\text{in}}{3} + \sfrac{h_\text{out}}{3} $ neurons, respectively. The positive integers $h_\text{in}$ and $h_\text{out}=D_{\theta}$ are the number of input and output neurons, respectively.

\begin{table}[htbp]
\caption{Root MLP configurations for the datasets in each problem.} 
    \scriptsize
    \centering
    \vspace*{0.2cm}
    \begin{tabular}{c|l|c|c|c}
        \toprule
        \textsc{\tabhead{Problem}} & \textsc{\tabhead{Dataset}} & \textsc{\tabhead{Width}} & \textsc{\textbf{Depth}} & \textsc{\textbf{Act. Function}}  \\
        \midrule
        \multirow{3}{*}{\begin{tabular}[c]{@{}c@{}}2D Images\end{tabular}} & MNIST & 24 & 3 & ReLU  \\
        & Fashion MNIST & 24 & 3 & ReLU  \\
        & CelebA & 24 & 3 & ReLU  \\
        \midrule
        \multirow{4}{*}{ETT} & m1 & 148 & 1 & ReLU  \\
        & m2 & 148 & 1 & ReLU  \\
        & h1 & 148 & 1 & ReLU  \\
        & h2 & 148 & 1 & ReLU  \\
        \midrule
        \multirow{5}{*}{\begin{tabular}[c]{@{}c@{}}Dynamical \\Systems\end{tabular}} & MSD & 48 & 3 & Swish  \\
        & MSD-Zero & 48 & 3 & Swish  \\
        & LV & 48 & 3 & Swish  \\
        & SINE & 48 & 3 & Swish  \\
        & Spirals & 24 & 1  & Swish  \\
        \midrule
        \multirow{6}{*}{UEA} & Worms & 128 & 1 & ReLU  \\
        & SCP1 & 48 & 2 & ReLU  \\
        & SCP2 & 48 & 2 & ReLU  \\
        & Ethanol & 32 & 2 & ReLU  \\
        & Heartbeat & 72 & 2  & ReLU  \\
        & Motor & 32 & 2  & ReLU \\
        \bottomrule
    \end{tabular}
    \label{tab:root_mlp_config}
\end{table}

\paragraph{WARP-Phys setup.} For the SINE experiment, the root network predicts the phase $\hat \varphi$ to feed into the sinusoid $\tau \mapsto \sin(2\pi\tau + \hat{\varphi})$. For the challenging MSD and MSD-Zero problems, we embed knowledge of the general analytical solution and the initial condition with $\tau \mapsto E(\tau) \mathbf{x}_0$, where $E (\cdot) \in \mathbb{R}^{2\times 2}$ with its four coefficients predicted by the root network, and $\mathbf{x}_0$ is known throughout. $E(\tau)$ is viewed as the exponential of $\tau A$, where $A$ is the constant matrix characterizing the mass-spring-damper dynamics: $A = \big(\begin{smallmatrix} 0 & 1 \\ -k/m & -c/m\end{smallmatrix}\big)$ \cite{nzoyem2021fracturation}; its poor conditioning --- a consequence of the large parameter scales and ranges detailed in \cref{tab:parameter_ranges} --- would destabilise the training if $A$ was directly learned. We note that stronger levels of physics may be embedded into the root network: predicting rescaled time-invariant constants $(m, k, c)$, parameterising the signal as damped sinusoids, eigen-decomposition, etc. The physics-informed strategy we present is the one that produced the biggest improvement over WARP in our experiments.

\paragraph{Optimisation \& Core baselines.} Our WARP framework (along with our custom GRU, LSTM, ConvCNP, and Neural CDE) is implemented with the JAX framework \cite{jax2018github} and its ecosystem: Equinox for neural network definitions \cite{kidger2021equinox}, and Optax for optimisation \cite{deepmind2020jax}. We use the AdaBelief optimiser \cite{zhuang2020adabelief}, and we clip the gradient norm with $g_{\text{lim}}=10^{-7}$. We apply the ``reduce on plateau'' rule where the learning rate is divided by 2 if the average loss\footnote{The average being calculated over 50 iterations.} doesn't evolve after 20 epochs. For most problems, we set the initial learning rate at $10^{-5}$. All GRU and LSTM models have a single layer to match WARP. We tweak their hidden size rather than the number of layers in order to increase or reduce parameter count, thus keeping in check the complexity of the models under consideration (see \cref{tab:rnn_hidden_units}).

\begin{table}[htbp]
\caption{Size of the hidden state in standard RNNs for each dataset.} 
    \scriptsize
    \centering
    \vspace*{0.2cm}
    \begin{tabular}{c|l|c|c}
        \toprule
        \textsc{\tabhead{Problem}} & \textsc{\tabhead{Dataset}} & \textsc{\tabhead{LSTM Hidden Units}} & \textsc{\textbf{GRU Hidden Units}}  \\
        \midrule
        \multirow{3}{*}{\begin{tabular}[c]{@{}c@{}}2D Images\end{tabular}} & MNIST & 750 & 750  \\
        & Fashion MNIST & 650 & 750  \\
        & CelebA & 700 & 825  \\
        \midrule
        \multirow{4}{*}{ETT} & m1 & 920 & 920  \\
        & m2 & 920 & 920  \\
        & h1 & 920 & 920  \\
        & h2 & 920 & 920  \\
        \midrule
        \multirow{4}{*}{\begin{tabular}[c]{@{}c@{}}Dynamical \\Systems\end{tabular}} & MSD & 2450 & 2850  \\
        & MSD-Zero & 2450 & 2850  \\
        & LV & 2450 & 2850  \\
        & SINE & 2280 & 2280  \\
        \bottomrule
    \end{tabular}
    \label{tab:rnn_hidden_units}
\end{table}

\paragraph{TST and S4 baselines setup.} As for the TST model for forecasting, it is implemented in PyTorch \cite{paszke2017automatic} and uses the AdamW optimiser with a constant learning rate of $6 \times 10^{-4}$, beta values of $0.9$ and $0.95$ and weight decay coefficient of $10^{-1}$. For the S4 model, we used 6 layers, each with a hidden state of size 512, a batch size of 50, a learning rate of $10^{-3}$, and the traditional weight decay with coefficient $0.05$. Like \cite{alexander2022theannotateds4}, we used prenormalisation, but we did not use dropout. These hyperparameters were selected based on the validation set when available, and the test set if not (e.g., all synthetic toy problems).

\paragraph{Hardware.} The WARP, GRU, LSTM, ConvCNP and S4 models are run on a workstation fitted with a RTX 4080 GPU with a memory capacity of 16 GB. The TST was trained on a RTX 3090 GPU with 24 GB memory.

\subsection{Image Completion} 
On these problems, since trained with the NLL loss from \cref{eq:lossfunctions}, we use a dynamic tanh activation function on the mean prediction, with $(a, b, \alpha, \beta)$ initialised as $(1, 0, 1, 0)$. Only experiments run on MNIST and Fashion MNIST use weight clipping, while CelebA does not.\footnote{Apart from (Fashion) MNIST, no other problem in this work used weight clipping.} For CelebA, we use $\sigma_{\text{lim}} = 10^{-4}$ (see \cref{eq:sigma_lim}) while we find the unusually large $\sigma_{\text{lim}} = 0.5$ suitable for MNIST and Fashion MNIST. We compare WARP to the MNIST baselines roughly at the same parameter counts: GRU (1.694 M), LSTM (1.696 M), S4 (1.717 M), and WARP (1.687 M). We train for 200 and 250 epochs in batches of 640 and 1256 for (Fashion) MNIST and CelebA, respectively. We apply the recurrent AR mode with $p_{\text{forcing}}=0.15$, while directly feeding the mean prediction back into the recurrence (i.e., the reparametrisation trick is disabled both during training and inference). 

As inputs to the root network, while the normalised pixel coordinates are better suited for this task, we report our state-of-the-art results using the normalised time $\tau = 1/(T-1)$. In fact, all results presented in the paper only use the normalised time coordinate system, except for the time series classification on the UEA dataset presented below.

\subsection{Image Classification} 
For this task, we use the same hyperparameters as the MNIST task described above, with the only difference that the training is now performed in recurrent non-AR mode.

\subsection{Energy Forecasting} 

For these experiments, all models share identical architectures across the four datasets (ETT-h1, ETT-h2, ETT-m1, ETT-m2). The learning rate differs between hourly and minute-level datasets: $10^{-5}$ for h1/h2 and $10^{-4}$ for m1/m2. For hourly datasets, we train for 500 epochs, while minute-level datasets require only 250 epochs (corresponding to roughly 1.5 hours of training). All models are trained with batch size 3600 in autoregressive mode with stochastic sampling (non-AR), and $p_{\text{forcing}}=0.25$. No final activation is applied to the root network's mean output, while the typical positivity-enforcing from \cref{eq:sigma_lim} is applied to the standard deviation with $\sigma_{\text{lim}}=10^{-4}$.

\subsection{Traffic Flow Forecasting} 
Our model architecture disregards the explicit spatial connectivity provided in the PEMS08 dataset \cite{song2020spatial}. Instead, we consider the features from all nodes independently, creating a flattened feature vector for each time step. This results in an input of shape (12, 510), where 12 is the number of historical time steps and 510 represents the 170 nodes, each with 3 features. Before the input sequence is used in the linear recurrence, its features are transformed by a 1D-convolution with 510 input channels, 4080 output channels, and a kernel length of 36. The model is trained to predict a single feature per node for the future 12 time steps.

\subsection{In-Context Learning}
The setting is the elegant in-context learning setting developed by \cite{zhang2025training}, where the goal is learn the linear mapping between several key-value pairs. The keys $\{\mathbf{x}_i\}_{i=1,\ldots,N}$ are vectors of dimension $D_x-1$, and the values $\{y_i\}_{i=1,\ldots,N}$ are scalar, both concatenated to form a state of dimension $D_x$. A final query key is given, and the model must predict its corresponding value. This missing value is substituted by 0 in the input sequence, as depicted in \cref{fig:icl_transform_sub}.

Importantly, to retain consistency across the literature, we preserve the notations from \cite{zhang2025training}, even though they conflict with those established in our problem setting in \cref{subsec:probset}. To revert back to our original setting, one can replace the existing inputs with $\mathbf{x}_{t} \triangleq \text{concat}( \mathbf{x}_{t+1}, y_{t+1} )$, for $t=0,\ldots,T-2$; and $\mathbf{x}_{T-1} \triangleq \text{concat}( \mathbf{x}_{q}, 0 )$. As for the outputs, $\mathbf{y}_{t} \triangleq y_{t+1}$, for $t=0,\ldots,T-2$; and $\mathbf{y}_{T-1} \triangleq y_{q}$.

\subsection{Dynamical System Reconstruction} 
For the dynamical system reconstruction tasks, since uncertainties are not required, models are trained without NLL loss (i.e., with the MSE loss defined in \cref{eq:lossfunctions}). Consistent across all dynamical system experiments, no weight clipping is employed, and the predictions are enforced in the range $[-1,1]$ with a unit-initialised dynamic tanh. All losses are computed on the normalised test set.

\paragraph{Mass-Spring-Damper (MSD and MSD-Zero)}
For both the MSD and its MSD-Zero variant, the experimental setup is largely identical. A learning rate of $10^{-5}$ is used. Training proceeds for 1000 epochs using a batch size of 1024. WARP, GRU, and LSTM models are trained in an auto-regressive mode with a teacher forcing probability $p_{\text{forcing}}=0.25$.

\paragraph{Lotka-Volterra (LV)}
The LV experiment is performed for 1500 epochs with a batch size of 1024. Training is conducted with a teacher forcing probability $p_{\text{forcing}}=1.0$, meaning the model is always fed the ground truth inputs during training. This is because this is a memorisation task, and the goal is for the model to predict the next token \emph{knowing} the previous one. LSTM and GRU use the same hyperparameters, except with hidden states of sizes 2450 and 2850 respectively (see \cref{tab:rnn_hidden_units}).

\paragraph{Sine Curves (SINE)}
Across the various SINE datasets (Tiny, Small, Medium, Large, Huge), a consistent configuration is maintained. The learning rate is set to $10^{-5}$. Models are trained for 1000 epochs in a single batch (as large as 10000 on Huge). Similar to MSD, training is autoregressive with $p_{\text{forcing}}=0.25$. No final activation is applied to the root network's mean output. The inference process for SINE datasets begins with a very short context, of just 1 time step.

\subsection{Time Series Classification}
For time series classification tasks, encompassing both the UEA datasets and the Spirals dataset, models are consistently trained in the non-AR mode, with the categorical cross-entropy loss. Across all these classification experiments, root weight are evolved without weight clipping, and no dynamic tanh activation is applied to their final outputs. Key training hyperparameters exhibit some variation across these diverse datasets: the learning rate is $10^{-5}$ for the Spirals dataset and most UEA datasets (e.g., Ethanol, Heartbeat, Motor, SCP1, SCP2), with the Worms dataset being an exception at $10^{-6}$. The number of training epochs varies widely, ranging from 800 for the Worms dataset, 4000 for Spirals, up to 6500 for the Ethanol UEA dataset, with other UEA datasets generally trained for several thousand epochs. Given our limitation of 16GB available VRAM memory, batch sizes also differ significantly; for instance, the Worms dataset uses a batch size of 40, other UEA datasets use batch sizes typically in the hundreds (from approximately 280 to 560), and the Spirals dataset employs a large batch size of 10000. Regarding data preprocessing, input data normalisation is applied for several UEA datasets (specifically Ethanol, Heartbeat, SCP1, and SCP2), but it is not used for others like EigenWorms and MotorImagery, nor is it required for the Spirals dataset.

This task uses positional encoding in addition to normalised time. The dimension $d$ and the denominator constant $C$ of the positional encoding defined in \cref{eq:peenc} \cite{vaswani2017attention} and used in concatenation with the normalised time on the UEA dataset, are presented in \cref{tab:pe_hyperparams}.

\begin{table}[h] 
\centering
\caption{Hyperparameters for positional encoding on the UEA datasets.}
\label{tab:pe_hyperparams}
\begin{tabular}{l|cccccc}
\toprule
& \textbf{Worms} & \textbf{SCP1} & \textbf{SCP2} & \textbf{Ethanol} & \textbf{Heartbeat} & \textbf{Motor} \\
\midrule
Dimension $d$   & 20 & 10 & 10 & 10 & 10 & 10 \\
Denominator constant $C$ & 20 & 10 & 10 & 10 & 5  & 10 \\
\bottomrule
\end{tabular}
\end{table}

\newpage
\section{Additional Results}
\label{warp_app:additionalresults}

\subsection{2D Image Experiments}

Similar to MNIST image completion, we train WARP, LSTM and GRU to generate items of clothing (Fashion MNIST). The results, presented in \cref{tab:fashionmnist} confirm the potency of our framework, as previously evoked in \cref{subsec:forecasting}. We perform an additional classification on the sequential MNIST dataset, where we observe a 99.93\% accuracy on the subsampled grayscale images in $\mathbb{R}^{14 \times 14 \times 1}$.

\begin{table}[htbp]
\caption[Best test-set MSEs and BPDs on Fashion MNIST.]{Best test-set MSEs and BPDs on Fashion MNIST across 3 runs with different seeds.} 
    \footnotesize
    \centering
    \begin{tabular}{l|c|c}
        \toprule
        \textsc{\tabhead{Method}} & \textsc{\tabhead{MSE}}  & \textsc{\tabhead{BPD}}  \\
        \midrule
        GRU & 0.078 & 0.66 \\
        LSTM & 0.082& 0.73 \\
        \textbf{WARP} & 0.064 & 0.59 \\
        \bottomrule
    \end{tabular}
    \label{tab:fashionmnist}
\end{table}

\begin{table}[htbp]
\caption[Best accuracies and walltime comparison for Spirals classification.]{Best accuracies and walltime comparison for Spirals classification across 3 runs.} 
    \footnotesize
    \centering
    \begin{tabular}{l|c|c}
        \toprule
        \textsc{\tabhead{Method}} & \textsc{\tabhead{Accuracy (\%)}} & \textsc{\tabhead{Wall time / Epoch (secs)}} \\
        \midrule
        Neural CDE  &  100.0 & 0.12\\
        \textbf{WARP} &  99.96 & 0.41 \\
        \bottomrule
    \end{tabular}
    \label{tab:spiralsclassification}
\end{table}

\subsection{Spirals \& Neural CDEs}
\cref{tab:spiralsclassification} reveals several limitations of WARP on the toy Spirals dataset originally introduced to test Neural CDEs \cite{kidger2022neural}. We find that at the same parameter count, WARP not only struggles to achieve 100\% accuracy, but is also roughly 4$\times$ slower, despite being implemented in the same conditions as the Neural CDE.

\subsection{Computational Efficiency Comparison}
\label{subsec:computationalefficiency}

To provide a comprehensive analysis of computational efficiency, we evaluate several key performance metrics for WARP and our baseline models. The experiments were conducted on an NVIDIA RTX 4080 GPU, ensuring a consistent hardware environment for all comparisons.

For the MNIST image completion task, we report the average wall-clock training time per epoch, peak GPU memory usage, and total parameter counts. To ensure a fair comparison, all models were trained with a fixed batch size of 128. The results, presented in \cref{tab:mnist_efficiency}, demonstrate WARP's notable efficiency. Despite having a comparable number of parameters to the Transformer model, WARP requires significantly less GPU memory---on par with the much simpler GRU and LSTM architectures---and achieves the fastest training time.

\begin{table}[h!]
\centering
\caption[Training efficiency comparison on the MNIST image completion task.]{Training efficiency comparison on the MNIST image completion task. We report the average wall-clock time per epoch, peak GPU usage, and the number of learnable parameters. WARP is the most efficient in terms of both time and memory.}
\label{tab:mnist_efficiency}
\begin{tabular}{lccc}
\toprule
\textbf{Model} & \makecell{Avg. training time \\ per epoch (seconds)} & \makecell{Peak GPU usage \\ (GB)} & \makecell{Parameters \\ (M)} \\
\midrule
GRU         & 57.04 & 4.49 & 1.69 \\
LSTM        & 59.22 & 4.95 & 1.70 \\
S4          & 61.53 & 12.60 & 1.71 \\
Transformer & 18.62 & 10.03 & 1.69 \\
\textbf{WARP} & \textbf{45.22} & \textbf{2.89} & \textbf{1.69} \\
\bottomrule
\end{tabular}
\end{table}

A similar analysis was conducted for the UEA benchmark datasets, with results detailed in \cref{tab:uea_efficiency}. For these experiments, the batch size was fixed to 32 across all models. The table provides a detailed breakdown of the training time, memory usage, and model complexity for WARP on each dataset.

\begin{table}[h!]
\centering
\caption[Training metrics for WARP on the UEA benchmark datasets.]{Detailed training metrics for WARP on the UEA benchmark datasets.}
\label{tab:uea_efficiency}
\begin{tabular}{lccccc}
\toprule
\textbf{Dataset} & \makecell{Training time \\ per epoch (s)} & \makecell{Peak GPU \\ usage (MiB)} & \makecell{Num. of \\ epochs} & \makecell{Training batches \\ per epoch} & \makecell{Parameters \\ (M)} \\
\midrule
Worms     & 10.29 & 14598 & 1000 & 6  & 5.697 \\
SCP1      & 0.92  & 654   & 5000 & 13 & 0.476 \\
SCP2      & 3.85  & 2866  & 5000 & 9  & 17.34 \\
Ethanol   & 2.22  & 1536  & 6500 & 12 & 4.681 \\
Heartbeat & 6.50  & 4354  & 1500 & 9  & 75.02 \\
Motor     & 2.46  & 2558  & 2000 & 9  & 4.503 \\
\bottomrule
\end{tabular}
\end{table}

It is important to note that our implementation of WARP, along with the GRU, LSTM, and S4 baselines, utilises JAX. In contrast, the Transformer and all other baselines is implemented in PyTorch. This difference in framework can influence performance measurements. Following standard practice, we exclude any one-time JIT-compilation costs from the reported wall-clock times.

\subsection{Normalised Time Correlation on Dynamics Reconstruction}

Let's analyse when the root network takes as input exclusively the normalised time. In that case, WARP uses a diagonal readout matrix $\theta_t(\tau)$ as seen in \cref{fig:readout_matrix} to self-decode the hidden states. This implies that post training, the weights $\theta_t$ and the time $\tau = t/(T-1)$ should be correlated. We confirm this hypothesis by plotting the correlation coefficient between the vector $\theta_{0:T}$ and all time points across all samples in the test set. We observe a strong either positive or negative correlation between the two quantities (see \cref{fig:correlation}).


\begin{figure}[h]
\centering
\subfigure[MSD readout matrix]{\includegraphics[width=0.225\textwidth]{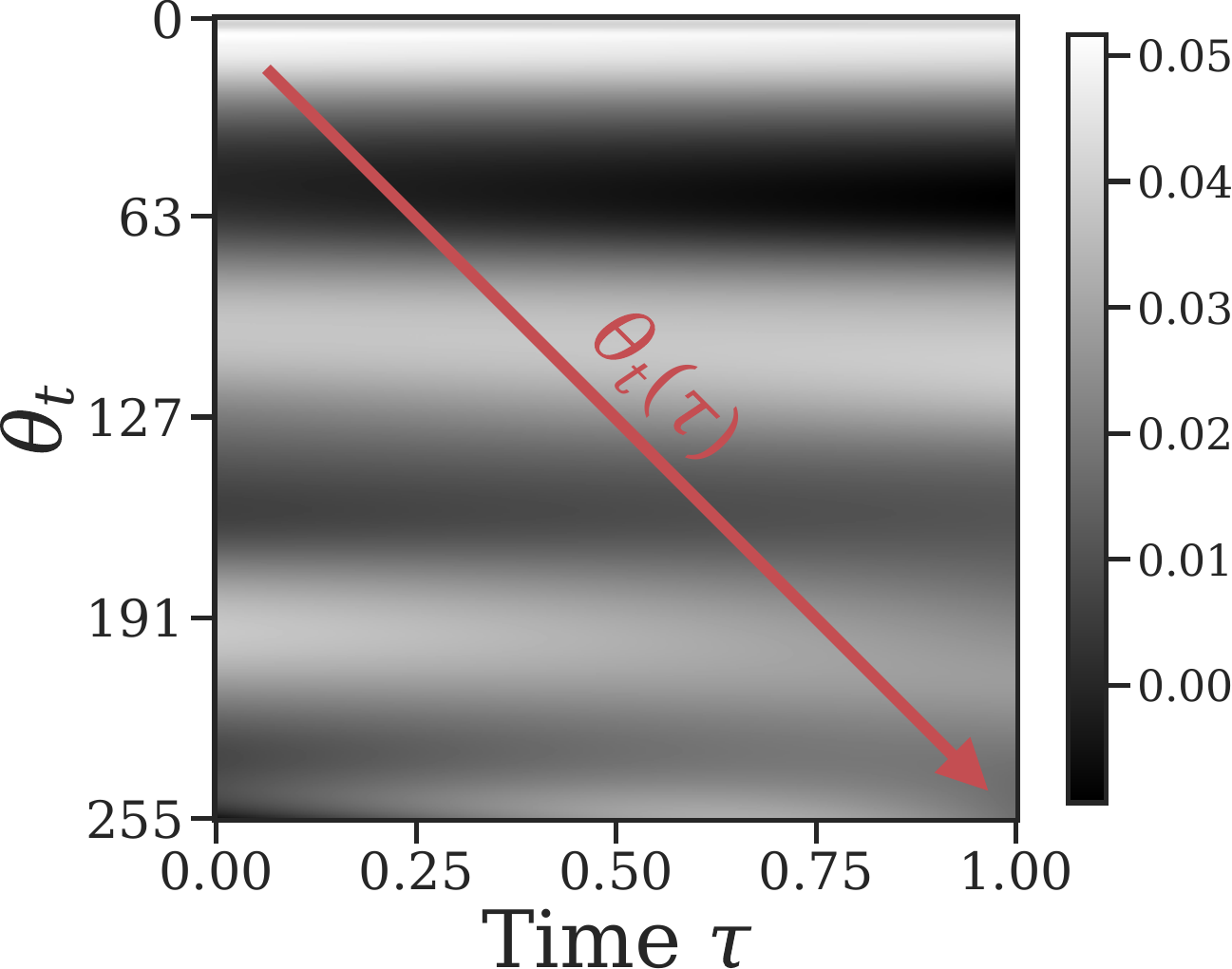}\label{fig:readout_matrix}}
\subfigure[Correlation]{\includegraphics[width=0.3\textwidth]{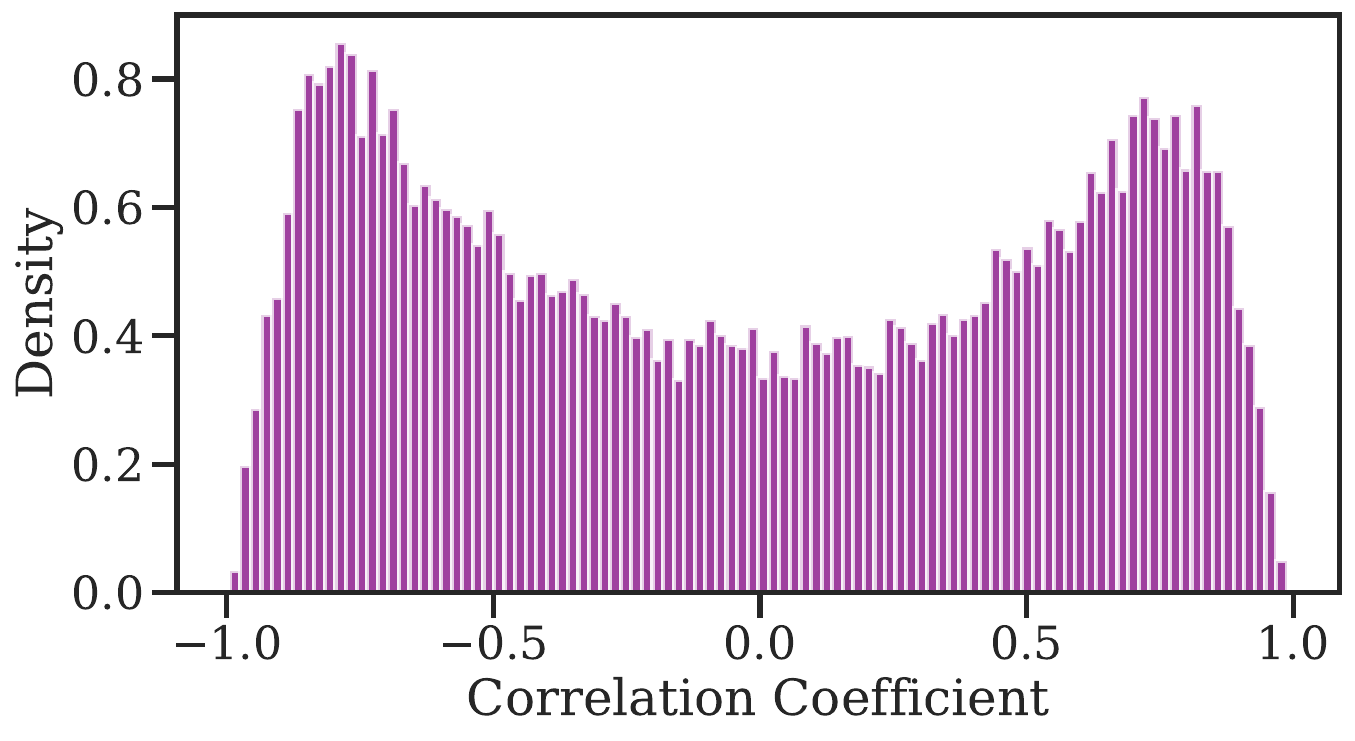}\label{fig:correlation}}
\caption[Readout matrix and weight correlation with $\tau$ on the MSD problem.]{(a) Example ``readout'' matrix on the MSD problem for all time steps $t$ at all times $\tau$, highlighting WARP's diagonal decoding direction $\theta_t(\tau)$; (b) Correlation between the root network's weights $\theta_t$ and the time $\tau$ on the MSD problem; indicating strong linear dependence between the two.}
\label{fig:normalised_time_correlation}
\vspace*{-0.2cm}
\end{figure}

\subsection{Spectral analysis}
To understand the dynamic properties and memory mechanisms learned by our model on the MSD task, we conduct a spectral analysis of its state transition matrix $A$. This analysis is crucial for visualizing how the network learns to retain information over long time horizons. We are looking for eigenvalues clustered near the unit circle ($|\lambda|=1$), as this indicates a capacity for long-term memory without vanishing or exploding gradients. The analysis in \cref{fig:spectral_analysis} reveals the model successfully learns to preserve long-term dependencies by maintaining the vast majority of its eigenvalues directly on the unit circle. The minor spill-over ($|\lambda| > 1$) is effectively managed by gradient clipping during training (see \cref{subsubsec:caveats}).

\begin{figure}[h!]
\centering
    \includegraphics[width=0.75\textwidth]{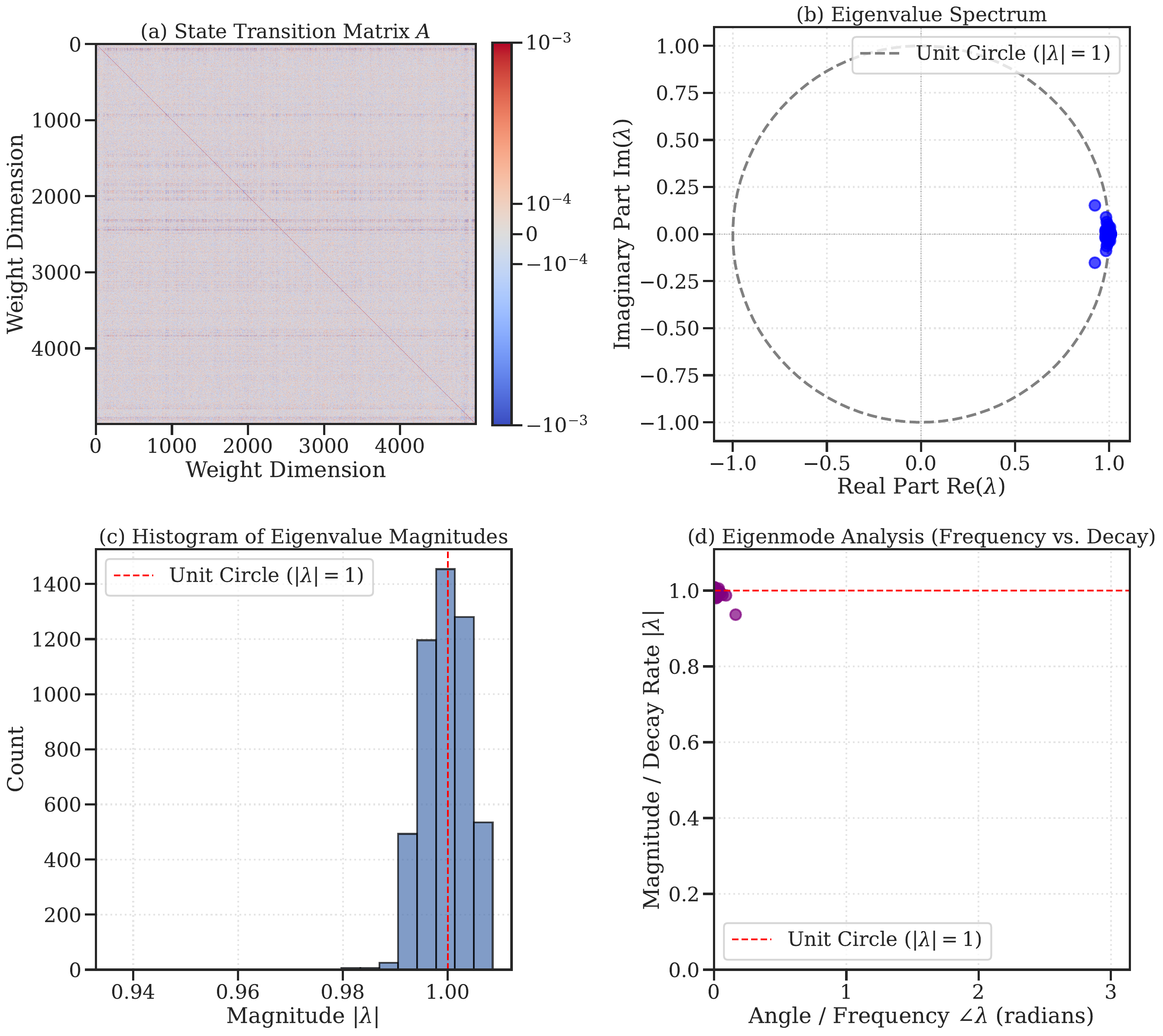}
\caption[Spectral analysis of the state transition matrix $A$.]{
Spectral analysis of the learned state transition matrix $A$ on the MSD task. In both cases, the model successfully learns to place eigenvalues on the unit circle to achieve long-term memory, as observed in subplots (b), (c) and (d). For the visualization in (a), the colormap is intentionally saturated at a low absolute value of $10^{-3}$. This is necessary because the diagonal elements ($\approx 1.0$) are several orders of magnitude larger than the learned off-diagonal couplings.}
\label{fig:spectral_analysis}
\end{figure}

\subsection{Robustness to noise}
We evaluate robustness on the MSD dataset by corrupting the input trajectories with increasing levels of Gaussian noise $\eta$, rescaled such that $\eta=1$ corresponds to a standard deviation of $39$ (the maximum absolute value encountered in the dataset). We employ four metrics to compare the generated trajectories to their noisy ground truths: MSE, RMSE, MAE, and MAPE, as defined in \cref{subsec:metrics}. 
The results in \cref{fig:warp_noise} reveal that WARP exhibits robust performance from minuscule ($\eta<10^{-1}$) up to moderate noise levels ($10^{-1}<\eta<10^{1}$), after which all error metrics increase sharply by several orders of magnitude, indicating a critical threshold beyond which the model fails catastrophically.

\begin{figure}
    \centering
    \includegraphics[width=.6\linewidth]{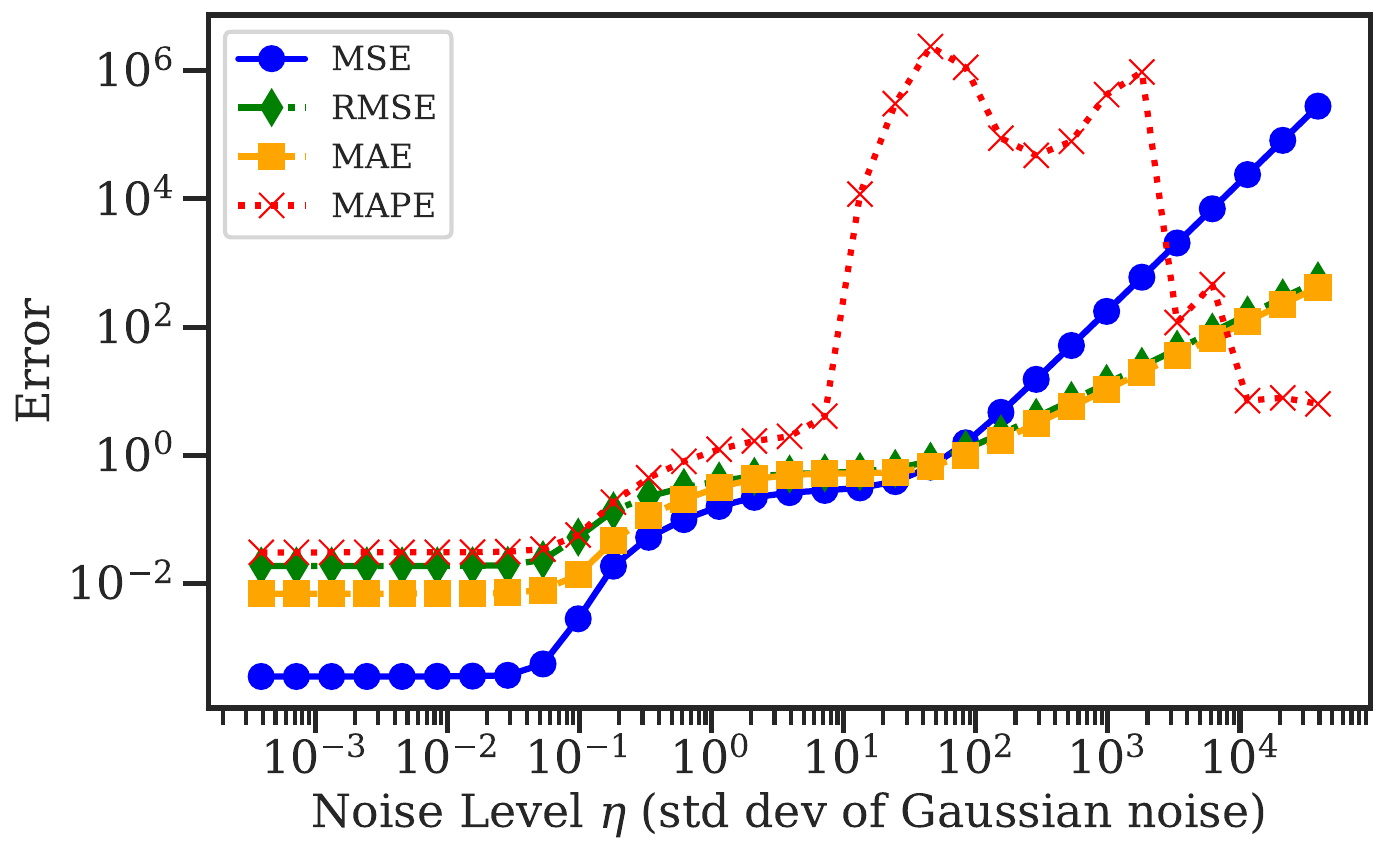}
    \caption{Evaluating performance in noisy scenarios. These results illustrate WARP's robustness to increasing levels of Gaussian noise on the MSD dataset, as measured by several metrics.}
    \label{fig:warp_noise}
\end{figure}

\subsection{Ablation Studies}
\label{subsec:ablation}
\label{warp_app:ablation}

We briefly discuss several experiments carried out to gain insights into our model. For all ablation studies, experimental protocols like training hyperparameters are presented in \textbf{\cref{warp_app:detailes}}. Figures and Tables in this section are captioned with the corresponding paragraph title.

\paragraph{Eliminating the root network.} The root network $\theta_t$ is integral to the efficacy of WARP. Although not an absolute prerequisite for the WARP-Phys variant, it nonetheless persists as a pivotal constituent of our framework. Illustratively, the omission of $\theta_t$ in favour of directly fitting $\varphi$ for the SINE modelling problem results in a catastrophic degradation in model expressivity.

\begin{figure}[H]
\begin{center}
\includegraphics[width=0.4\columnwidth]{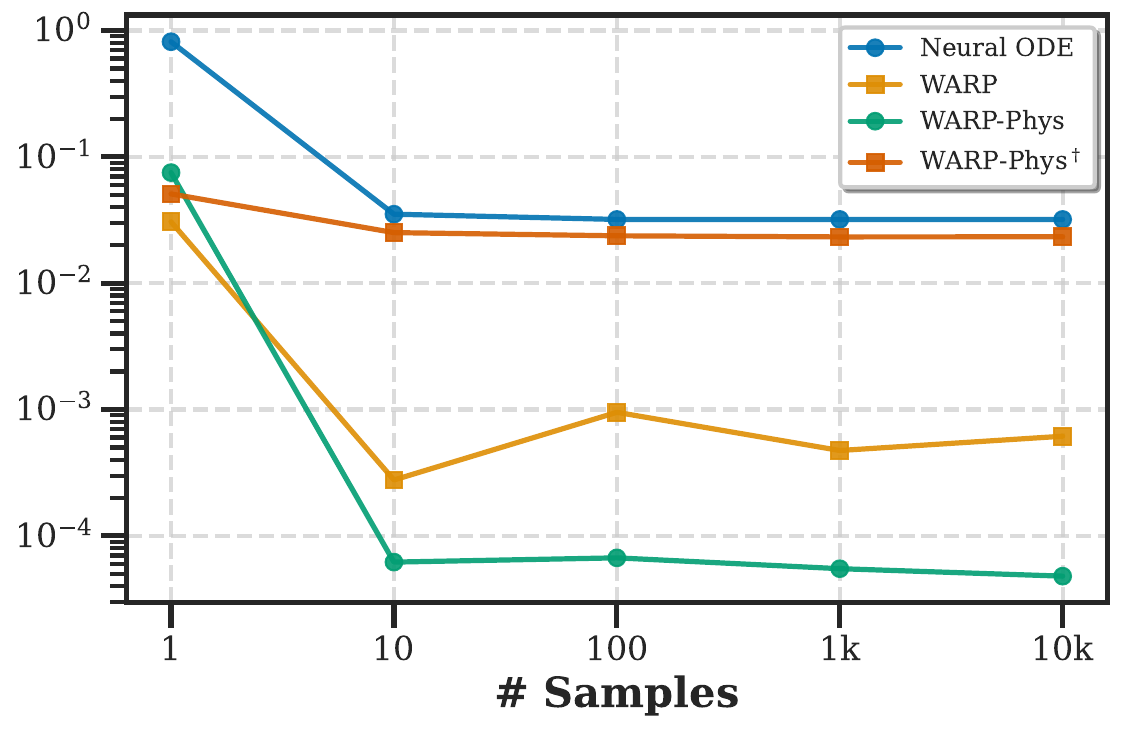}
\captionof{figure}[Eliminating the root network.]{Eliminating the root network --- Test-set MSEs on the SINE problem. The omission of $\theta_t$ in favor of directly fitting $\varphi$ (which we call WARP-Phys$^\dagger$) results in a catastrophic degradation in model expressivity. Performance is almost as bad as the Neural ODE analysed in \cref{fig:data_efficiency}.}
\label{fig:root_network_physics}
\end{center}
\end{figure}

\paragraph{Initial network configuration.} Since WARP's weight trajectory is driven by the changes in the signal and not the signal itself (see \cref{eq:wsm_recurrence}), it is important to have an expressive initial hypernetwork $\phi: \mathbf{x}_0 \mapsto \theta_0$, which embeds the initial tokens into suitable weight spaces \cite{kidger2022neural}. Our empirical investigations reveal that sidestepping this component substantially curtails the model performance on complex synthetic benchmarks, such as MSD-Zero, and on real-world datasets, including ETT.

\begin{table}[H]
\caption[Initial network configuration.]{Initial network configuration --- Empirical investigations reveal that sidestepping $\phi$ in favor of directly learning $\theta_0$ curtails the model performance on complex synthetic benchmarks, such as MSD-Zero, and on real-world datasets, including ETTm1.} 
    \centering
    \vspace*{0.2cm}
    \begin{tabular}{l|c|c}
        \toprule
        \textsc{\tabhead{Problem}} & \textsc{\tabhead{With $\phi$}} & \textsc{\tabhead{With $\theta_0$}} \\
        \midrule
        MSD-Zero  &  0.32 & 1.02\\
        ETTm1 &  0.02 & 1.25 \\
        \bottomrule
    \end{tabular}
    \label{tab:initial_network_config}
\end{table}


\paragraph{Data efficiency.} With $L=1$, the SINE benchmark is a challenging initial value problem. At equal (root) neural network parameter counts, we vary the number of training samples, and we plot MSE and MAPE test metrics for WARP and the Neural ODE \cite{chen2018neural}. The results, depicted in \cref{fig:data_efficiency}, not only show improved performance across data regimes, but they also indicate that more data is not necessarily better for WARP's performance, suggesting potential for monotone learning \cite{bousquet2022monotone}. 

\begin{figure}[t]
    \centering
    \subfigure[SINE]{\includegraphics[width=.45\linewidth]{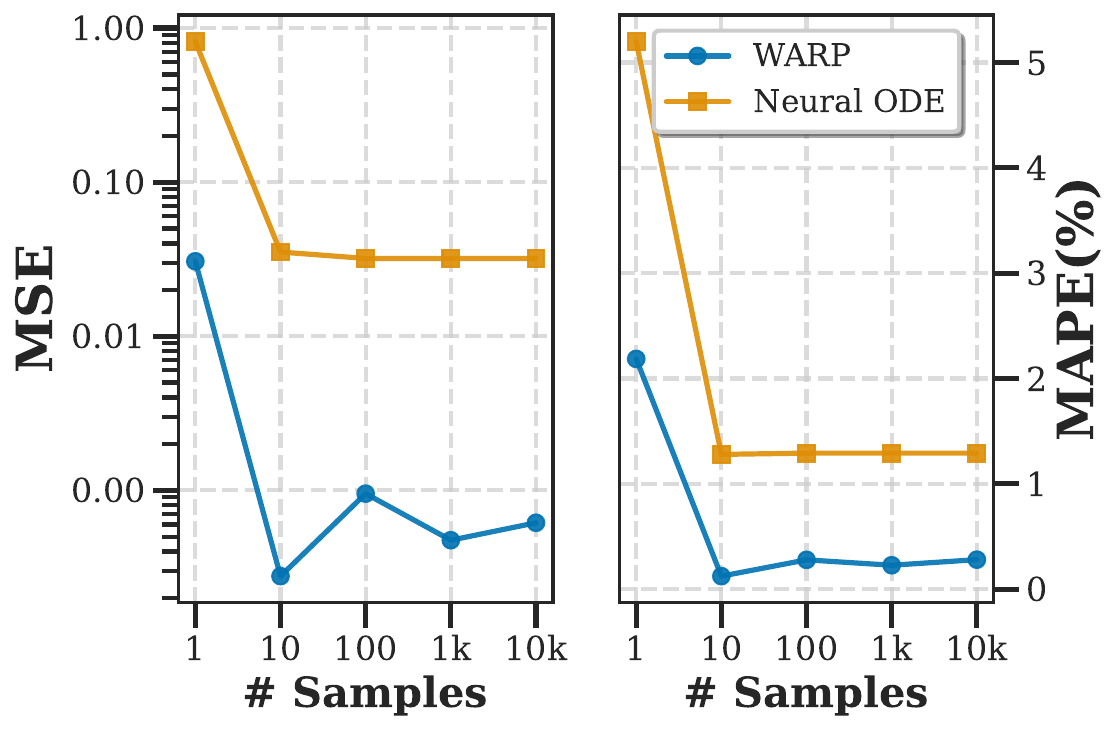} \label{fig:data_efficiency}}
    \subfigure[ETTm1]{\includegraphics[width=.32\linewidth]{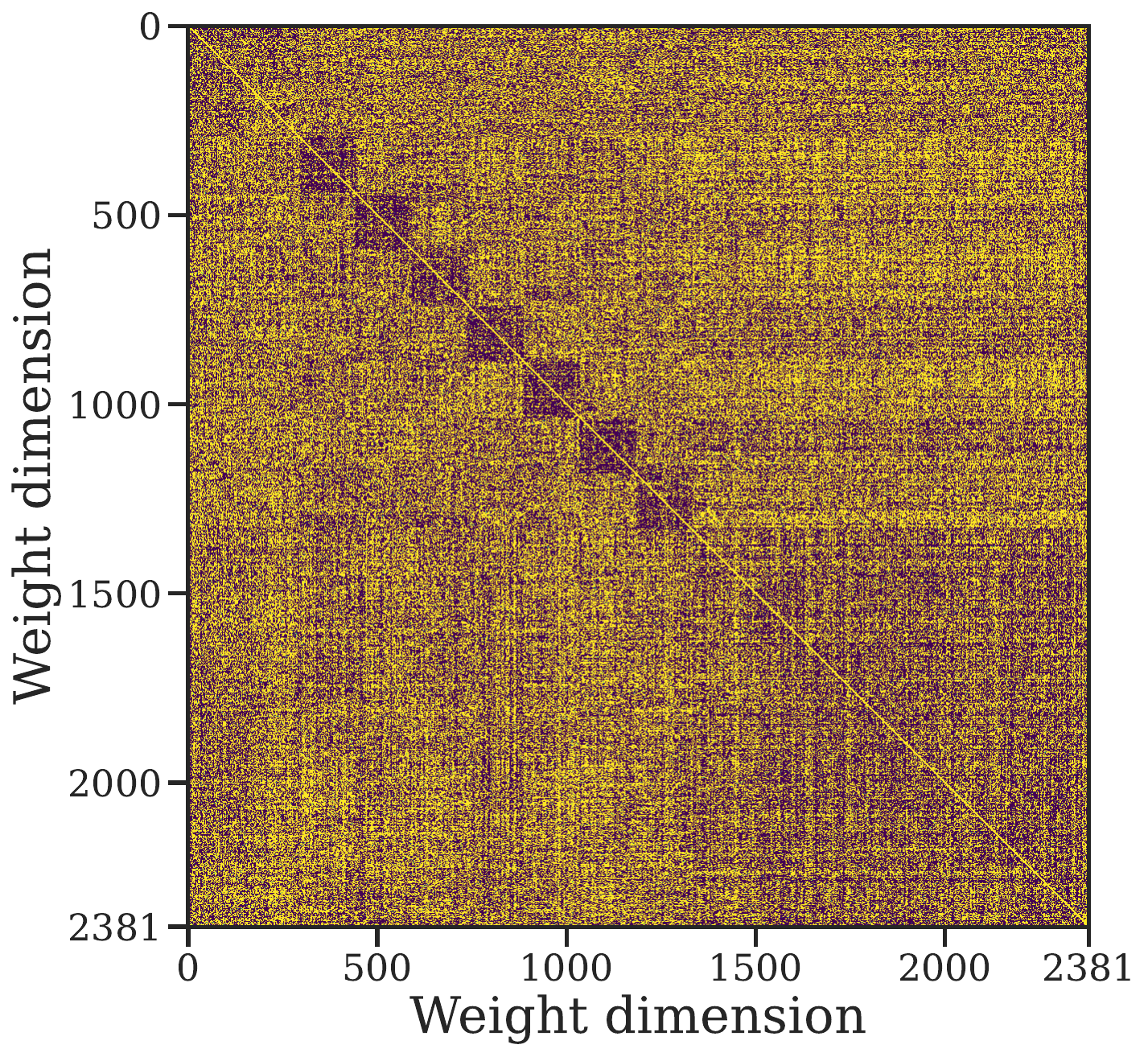} \label{fig:a_matrix}}
    \caption{\textbf{(a)} Sample efficiency on SINE. \textbf{(b)} A dense weights-to-weights $A$ matrix on ETTm1.}
    \label{fig:sineplusettm1}
\end{figure}


\paragraph{Dense state transitions \& Channel mixing.} The total parameter count of our model is quadratic in the root network's dimensionality $D_\theta$. Specifically, attempts to replace $A\in \mathbb{R}^{D_\theta \times D_\theta}$ with diagonal or low-rank approximations have resulted in remarkably less expressive models, thus solidifying its dense nature, as illustrated in \cref{fig:a_matrix}, as a key component of our framework.

\begin{figure}[H]
\centering
\label{tad:ettm1dense}
\includegraphics[width=0.65\linewidth]{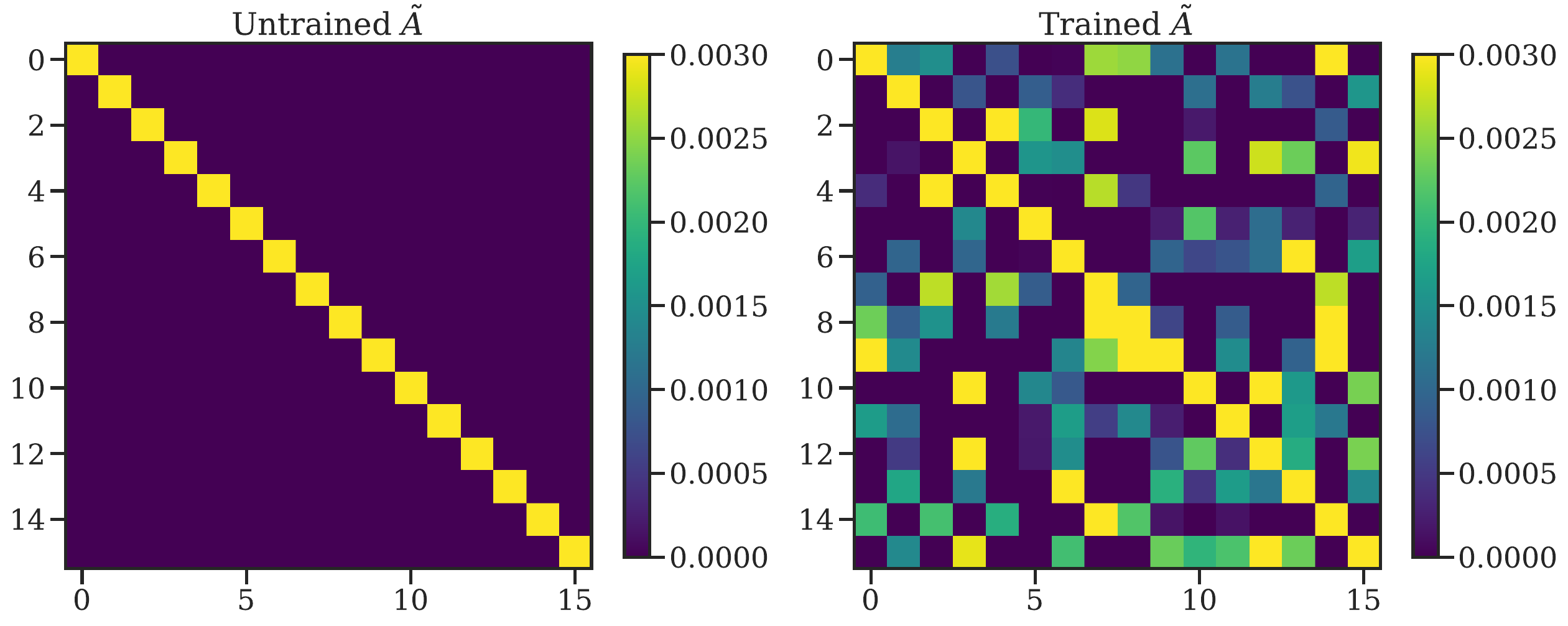}
\caption[Dense state transitions \& Channel mixing.]{Dense state transitions \& Channel mixing --- Attempts to replace $A\in \mathbb{R}^{D_\theta \times D_\theta}$ with either a diagonal or a low-rank approximation $\tilde{A}$ result in less expressive models. We observe here a low-rank $\tilde{A} \in \mathbb{R}^{16 \times 16}$ on the ETTm1 problem, such that $A = P \tilde{A} Q$, with all quantities in the right-hand side learnable.}
\end{figure}





\begin{figure}[h]
\centering
\subfigure[Fixed $\tau=0.5$]{\includegraphics[width=0.35\textwidth,trim=0cm 0cm 91cm 0cm, clip]{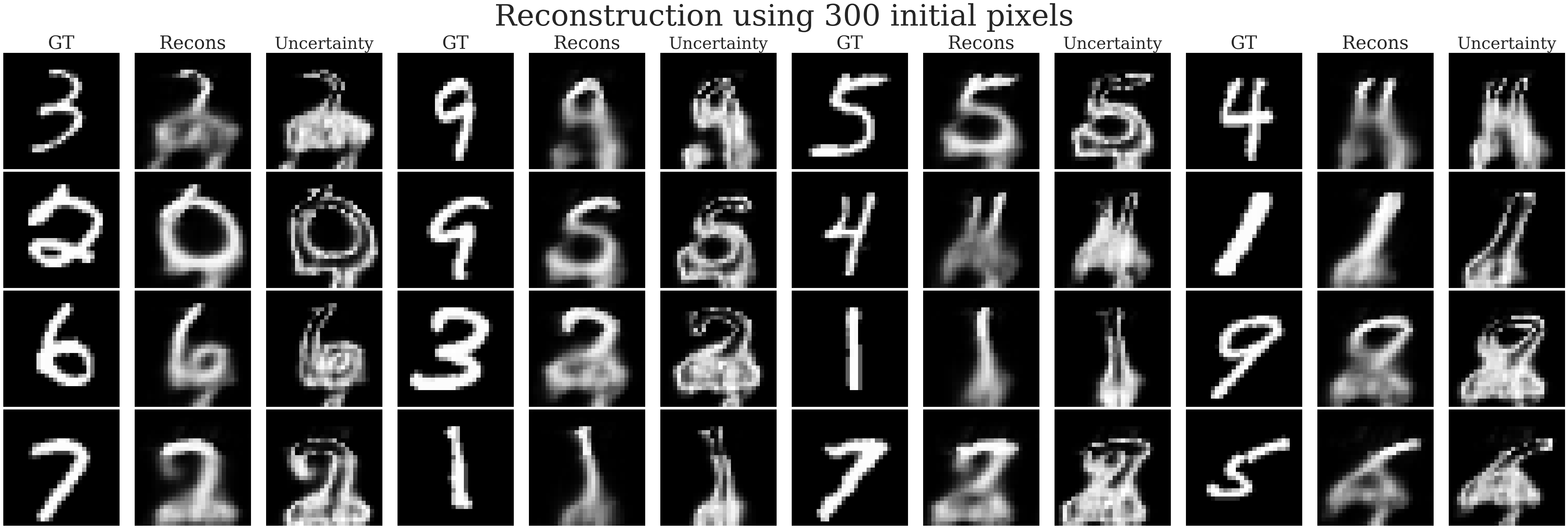}\label{fig:fix}}
\hspace*{0.6cm}
\subfigure[Variable $\tau = t/(T-1)$]{\includegraphics[width=0.35\textwidth,trim=0cm 0cm 91cm 0cm, clip]{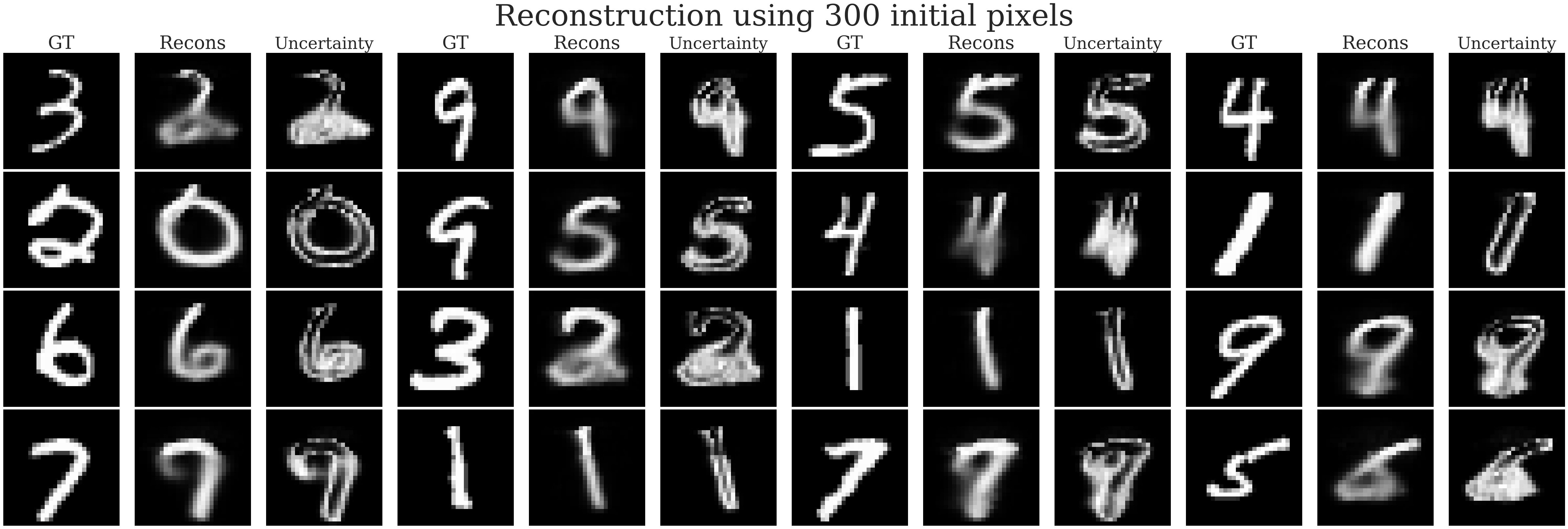}\label{fig:variable}}
\caption[Root network evaluation.]{Root network evaluation --- When using normalised time as the coordinate system, if we fix the evaluation point $\tau$, we observe mild degradation in the qualitative results. While these figures are shown for MNIST with $L=300$, the behaviour is observed across problems, including dynamical systems like MSD (see \cref{fig:readout_matrix}). \textbf{GT} stands for the Ground Truth, \textbf{Recons} is for the Reconstruction/Completion, and \textbf{Uncertainty} is the model-outputted standard deviation.}
\label{fig:fix_eval_point}
\vspace*{-0.2cm}
\end{figure}

\begin{table}[h]
\centering
\caption[Positional Encodings (PE) ablation.]{Positional Encodings (PE) ablation --- We report the classification accuracy (\%) of WARP with and without PE on the UEA datasets. The results show a consistent performance drop when PE is removed, underscoring its importance for long-range dependencies such as Worms and Motor.}
\label{tab:pe_ablation}
\begin{tabular}{l|cccccc}
\toprule
\footnotesize
 & \textbf{Worms} & \textbf{SCP1} & \textbf{SCP2} & \textbf{Ethanol} & \textbf{Heartbeat} & \textbf{Motor} \\
\midrule
with PE & 70.93 $\pm$ 2.7 & 83.53 $\pm$ 2.0 & 57.89 $\pm$ 1.4 & 32.91 $\pm$ 4.2 & 88.65 $\pm$ 1.9 & 56.14 $\pm$ 5.1 \\
w/o PE  & 60.98 $\pm$ 3.1 & 80.00 $\pm$ 2.0 & 57.89 $\pm$ 1.5 & 31.65 $\pm$ 0.8 & 77.42 $\pm$ 2.2 & 50.88 $\pm$ 2.3 \\
\bottomrule
\end{tabular}
\end{table}

\begin{figure}[h]
\begin{center}
\includegraphics[width=0.58\columnwidth]{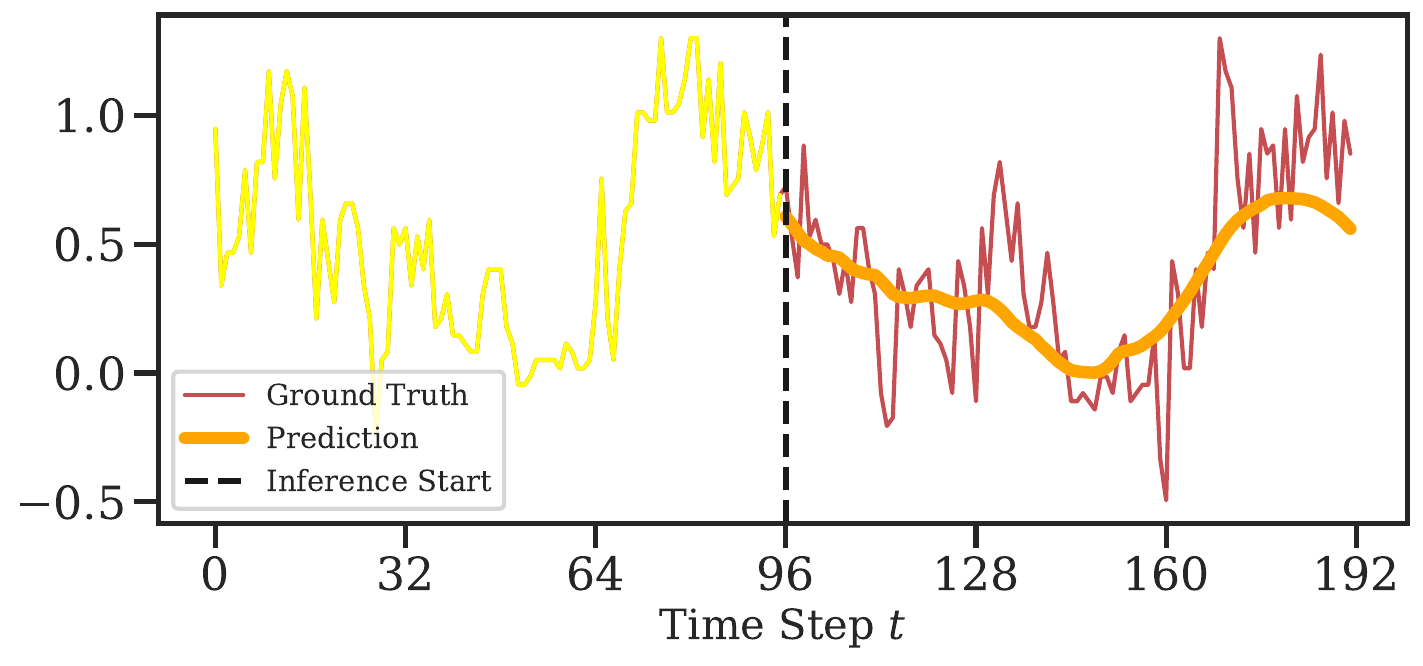}
\captionof{figure}[Ablation of the reparametrisation trick.]{Ablation of the reparametrisation trick --- On the electricity problems, if stochastic sampling during training is not used, the model only predicts the mean of the distribution, thereby ignoring high-frequency components or noise in the signal. Here, this is illustrated with a prediction on the ETTm1 test split.}
\label{fig:example_prediction}
\end{center}
\end{figure}


\clearpage
\section{Visualisations}
\label{warp_app:visualisations}

\begin{figure}[H]
\centering
\includegraphics[width=0.95\linewidth]{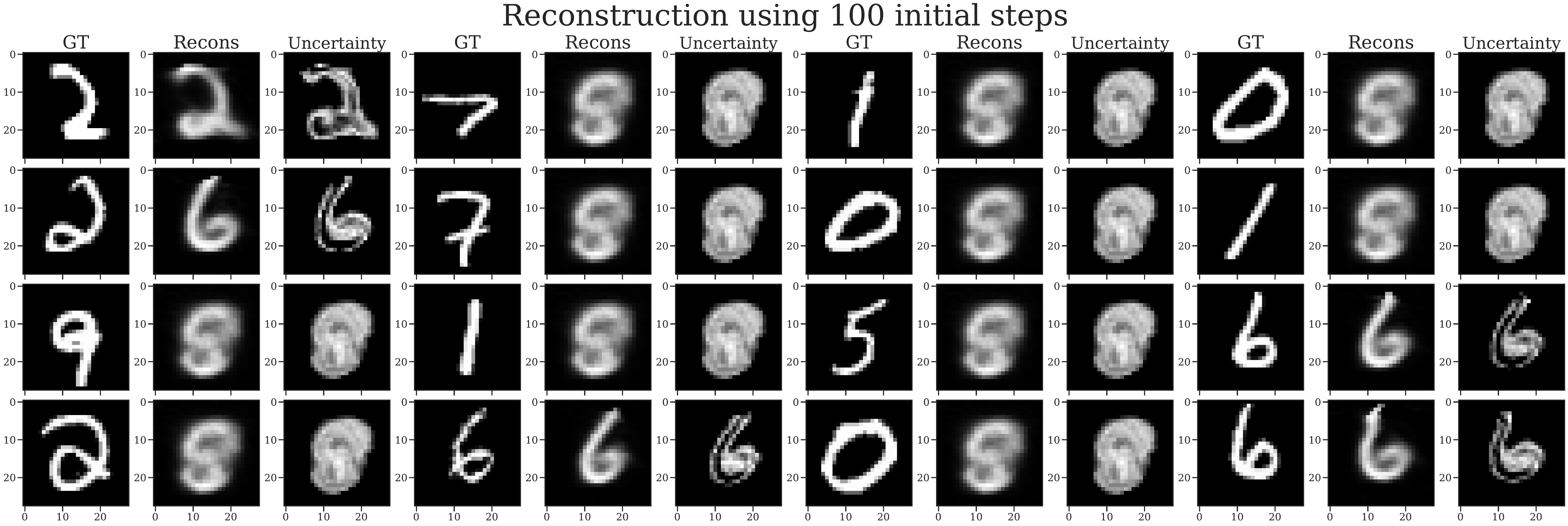}
\includegraphics[width=0.95\linewidth]{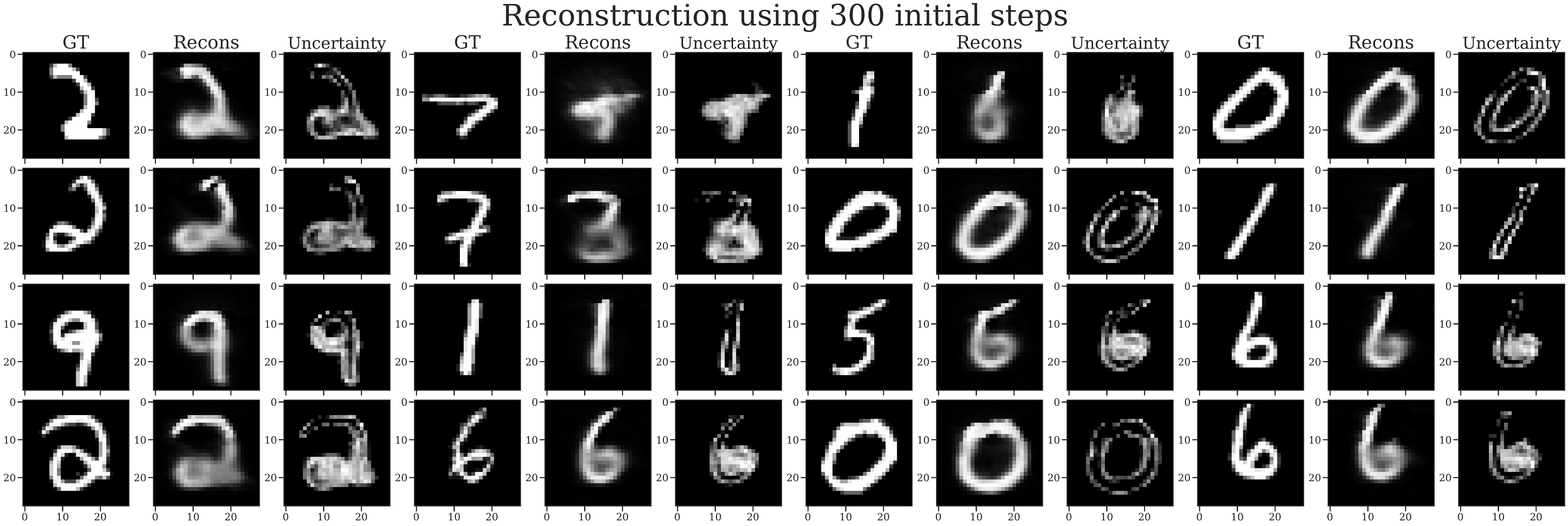}
\includegraphics[width=0.95\linewidth]{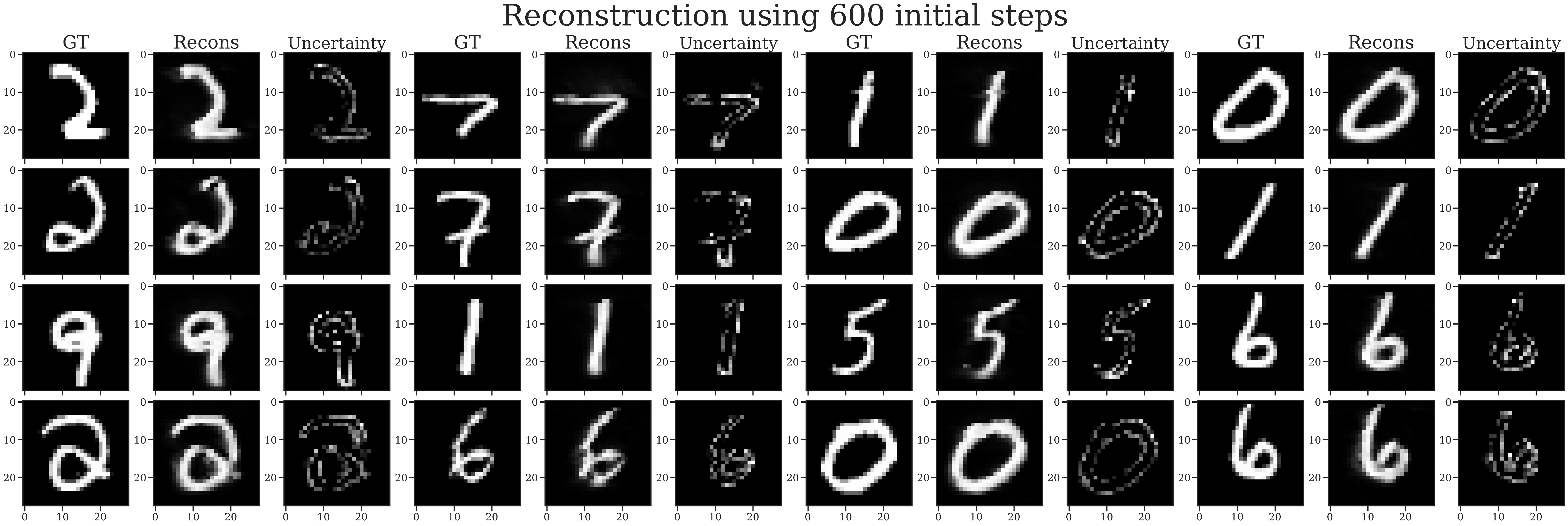}
\caption[Completed images from the MNIST test set using WARP.]{Completed images from the MNIST test set using WARP. The same set of images is shown across three settings: \textbf{(Top)} $L=100$, \textbf{(Middle)} $L=300$, \textbf{(Bottom)} $L=600$. Along the columns, we show 4 groups of results, each with Ground Truth (\textbf{GT}), Reconstruction (\textbf{Recons}), and \textbf{Uncertainty}, resulting in 12 total columns. As our model sees more steps, its forecasting improves and its uncertainty decreases.}
\end{figure}

\begin{figure}[H]
\centering
\includegraphics[width=0.95\linewidth]{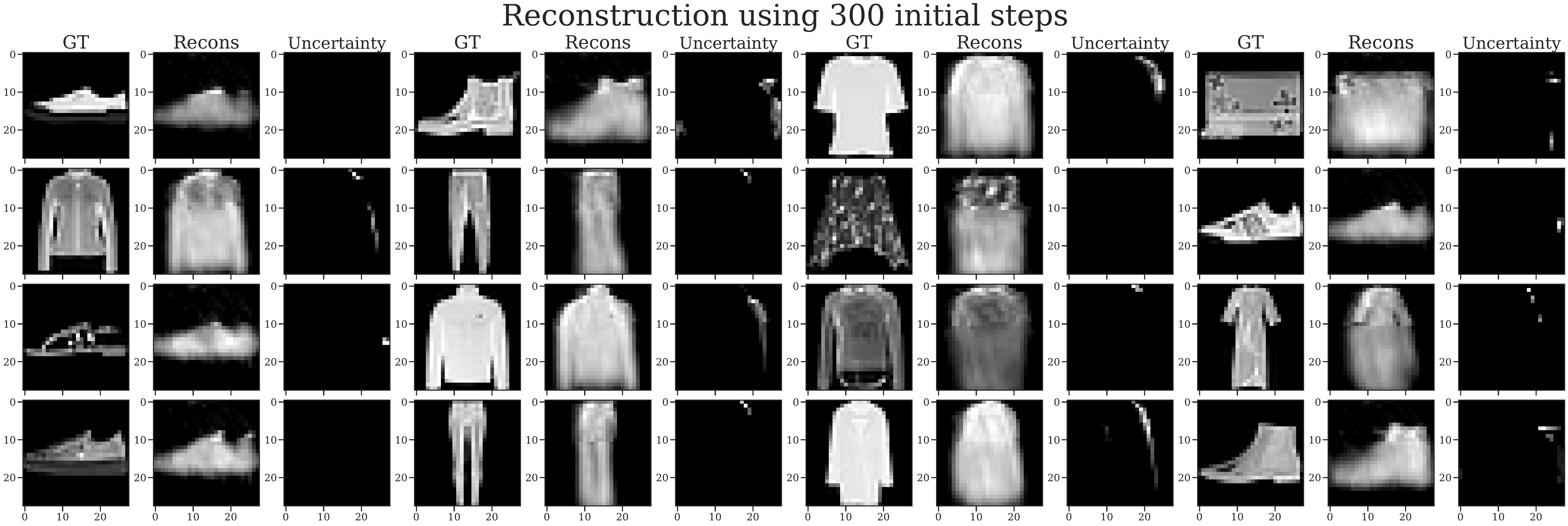}
\caption{Completed images from the Fashion MNIST test set using WARP.}
\end{figure}

\begin{figure}[H]
\centering
\includegraphics[width=0.95\linewidth]{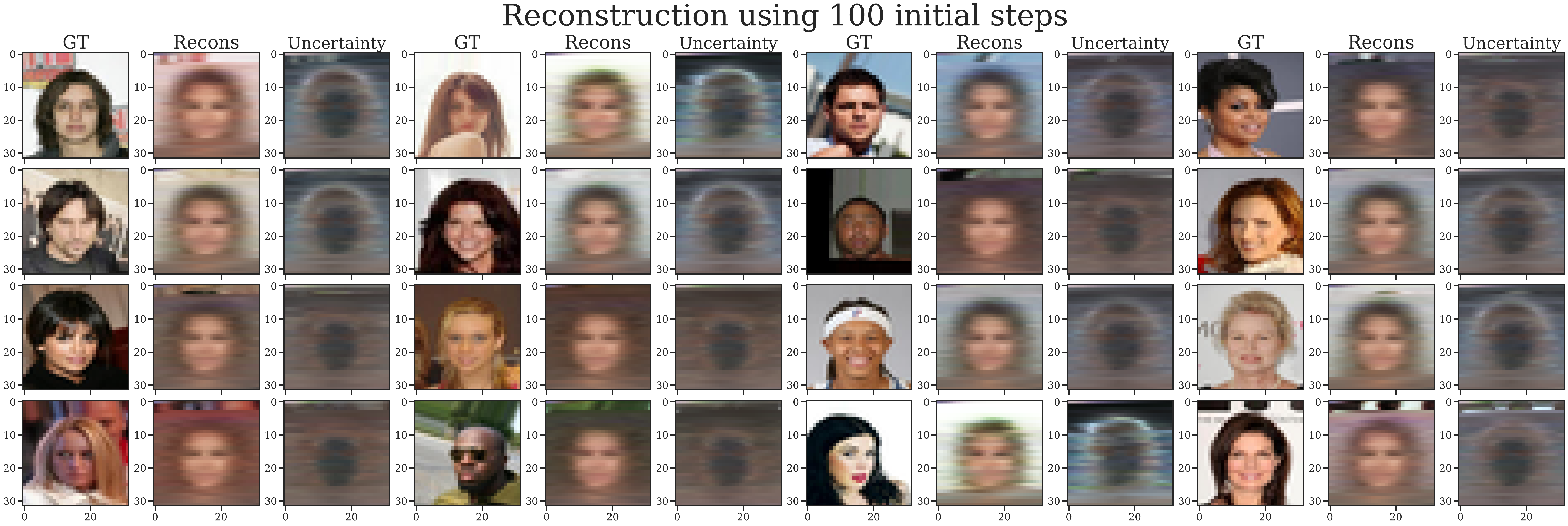}
\includegraphics[width=0.95\linewidth]{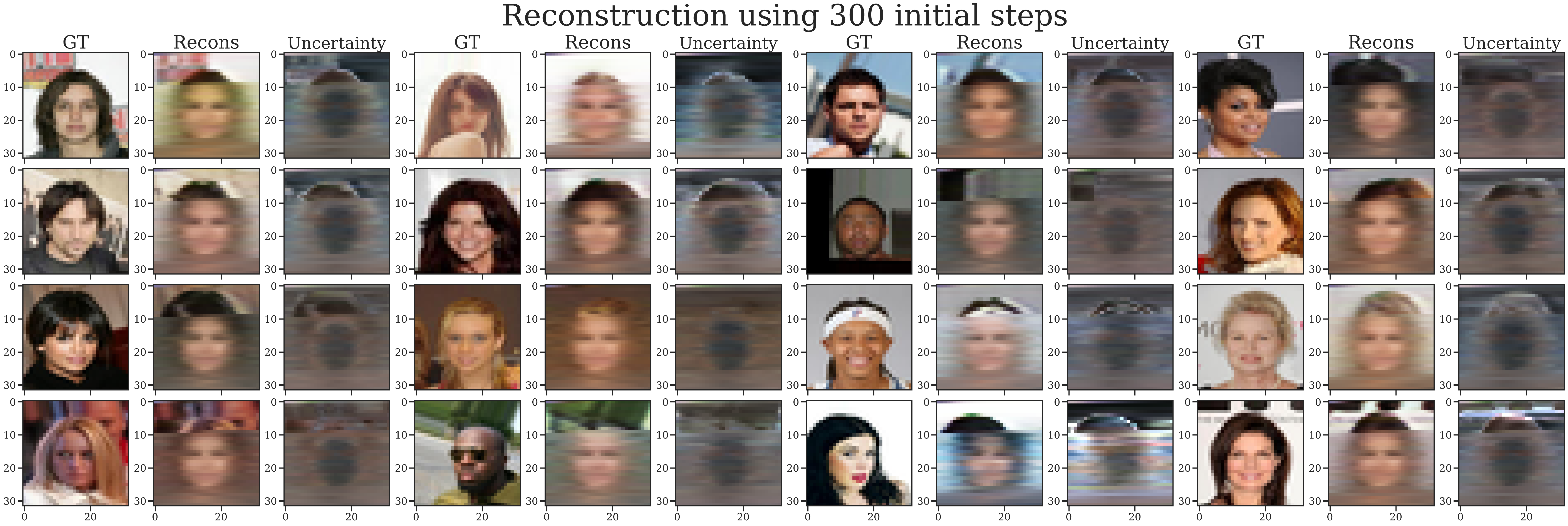}
\includegraphics[width=0.95\linewidth]{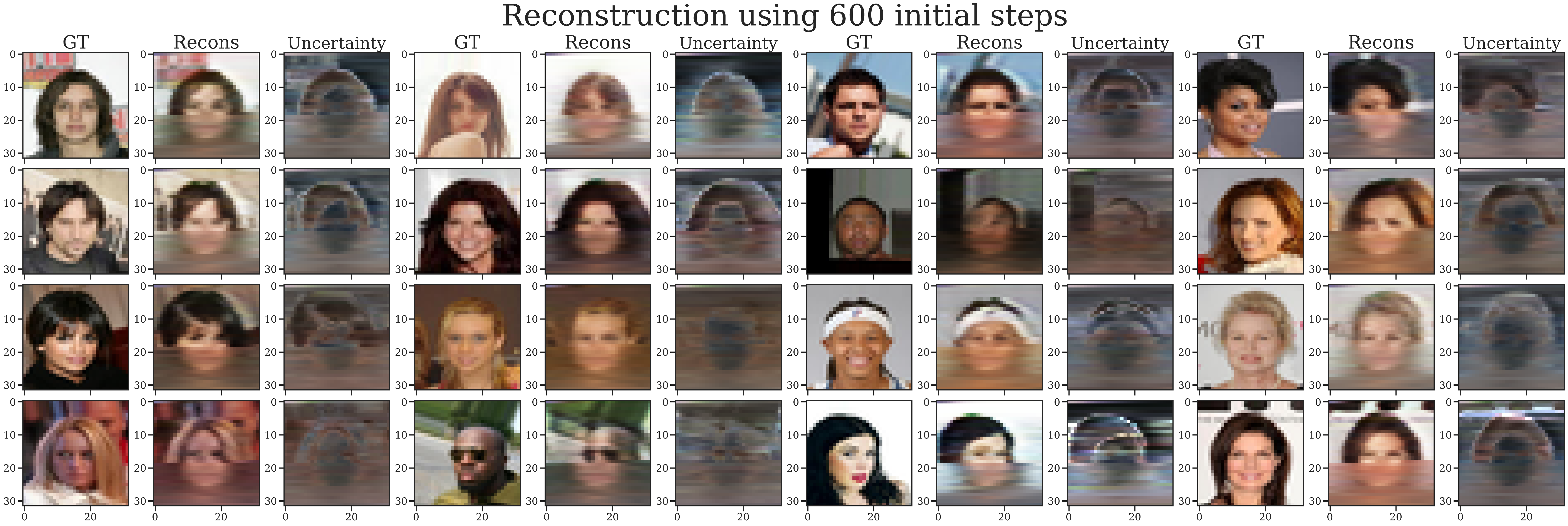}
\caption[Completed images from the CelebA test set using WARP.]{Completed images from the CelebA test set using WARP at various context lengths.}
\end{figure}

\begin{figure}[H]
\centering
\includegraphics[width=0.95\linewidth]{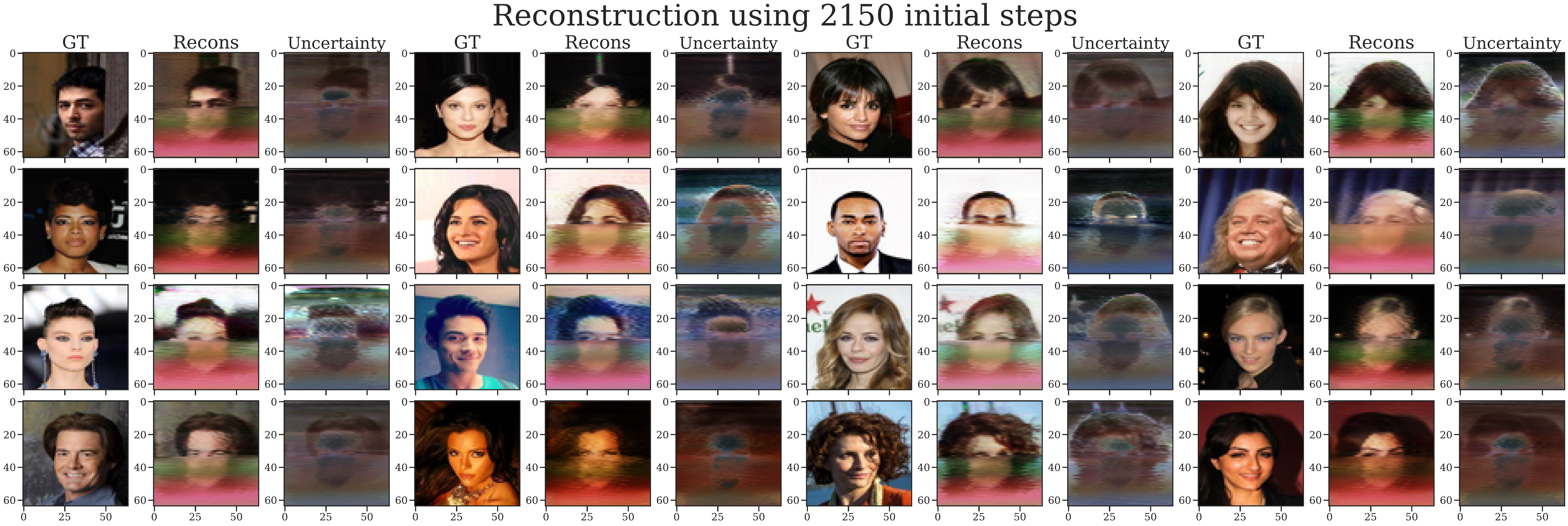}
\caption[Completed images on the CelebA test set at high-resolution.]{Completed images on the CelebA test set at high-resolution ($T=64\times64=4096$), using positional encoding \cite{vaswani2017attention}. This illustrates WARP's suitability for long-range dependencies.}
\end{figure}

\begin{figure}[H]
\centering
\includegraphics[width=0.95\linewidth]{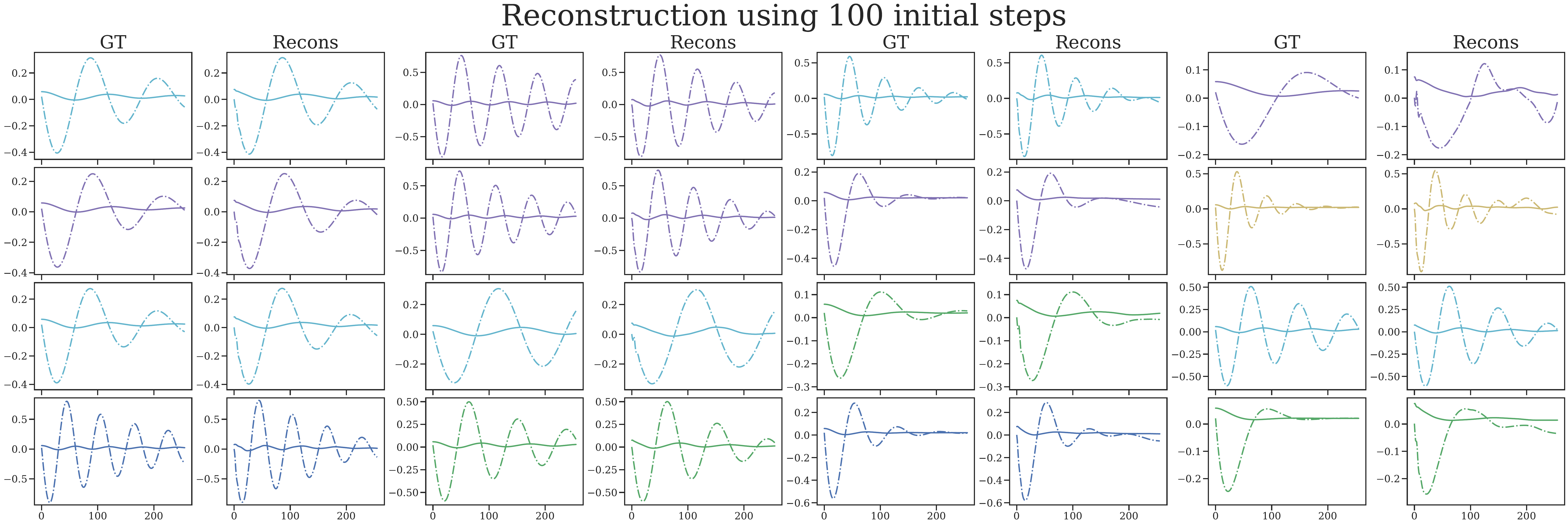}
\caption{Completed sequences from the MSD test set using WARP.}
\end{figure}

\begin{figure}[H]
\centering
\includegraphics[width=0.95\linewidth]{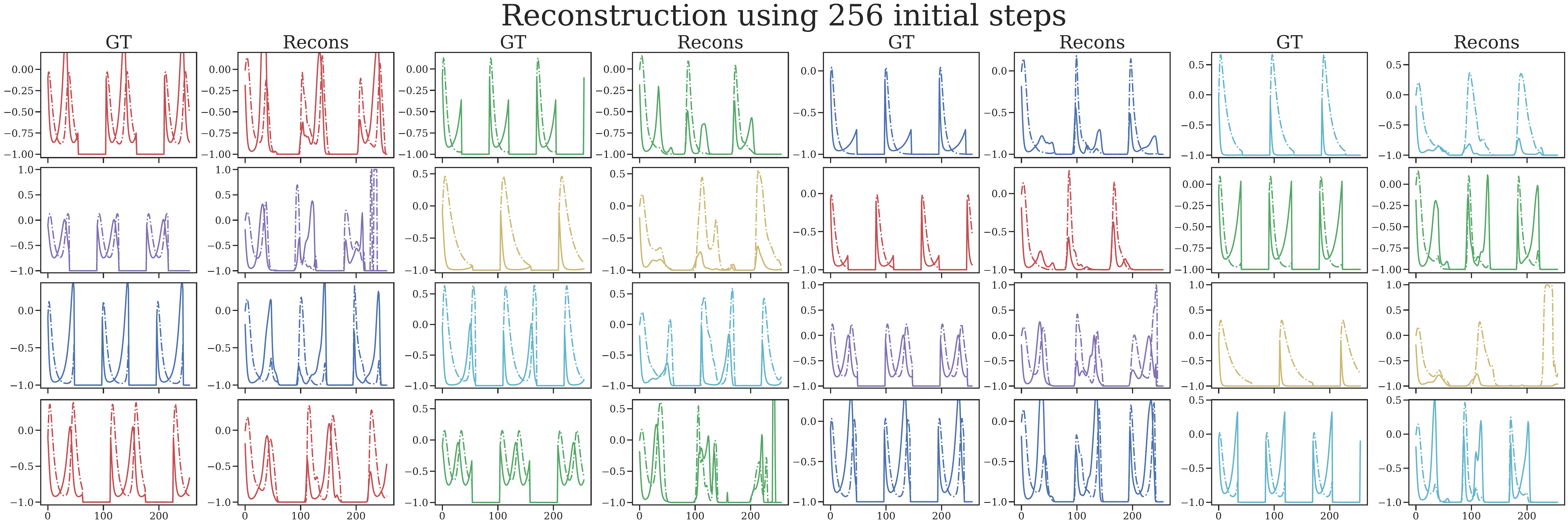}
\caption{Completed sequences from the LV test set using WARP.}
\end{figure}

\begin{figure}[H]
\centering
\includegraphics[width=0.95\linewidth]{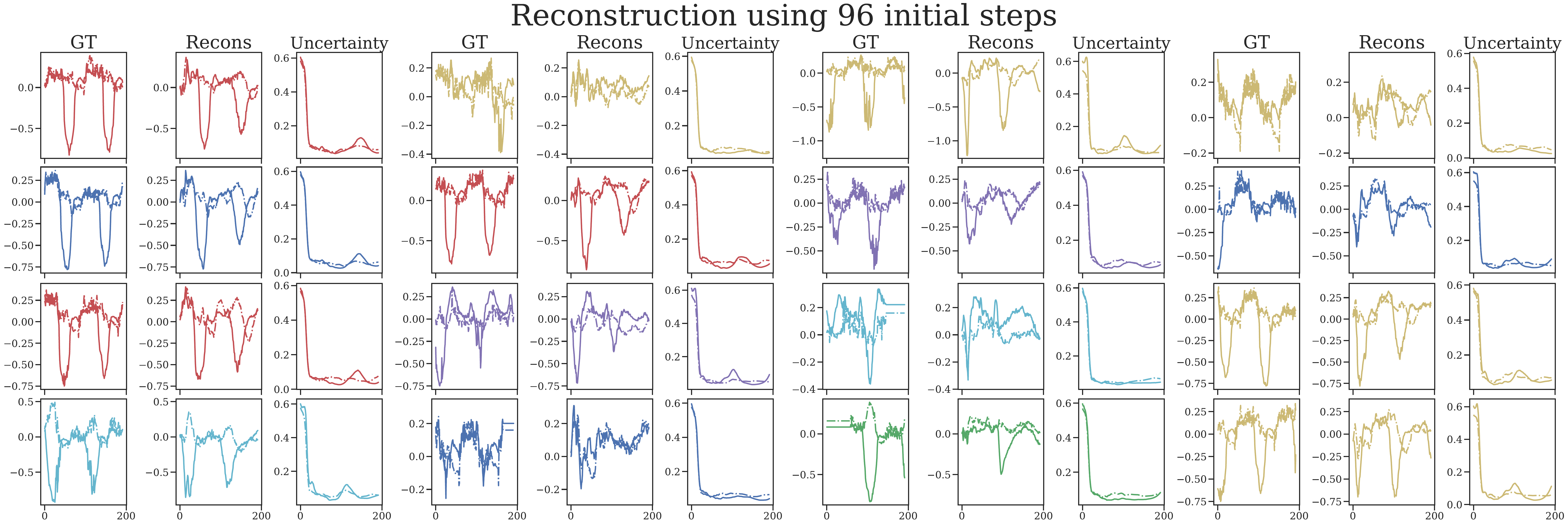}
\caption{Completed time series from the ETTm1 test set using WARP.}
\end{figure}

\end{document}